%% file: neurips_2026.tex
\documentclass{article}

\usepackage[utf8]{inputenc} 
\usepackage[T1]{fontenc} 
\usepackage{hyperref} 
\usepackage{url} 
\usepackage{booktabs} 
\usepackage{amsfonts} 
\usepackage{nicefrac} 
\usepackage{microtype} 
\usepackage{xcolor} 

\usepackage{graphicx}
\usepackage{subcaption}
\usepackage{booktabs}
\usepackage{hyperref}

\usepackage[main, final]{neurips_2026} 
\usepackage{amsmath}
\usepackage{bm}
\usepackage{amssymb}
\usepackage{amsfonts}
\usepackage{mathtools}
\usepackage{enumerate}
\usepackage{amsthm}
\usepackage{multirow}
\usepackage{makecell}
\usepackage{algorithm}

\usepackage{algpseudocode}
\usepackage[capitalize, noabbrev]{cleveref}
\usepackage[shortlabels]{enumitem}
\usepackage{wrapfig}

\usepackage{bm}
\newcommand{\best}[1]{\bm{#1}}

\usepackage[normalem]{ulem}
\newif\ifshowchanges \showchangestrue
\DeclareRobustCommand{\add}[1]{\ifshowchanges{\color{red}#1}\else#1\fi}

\newenvironment{addblock}{\ifshowchanges\color{red}\fi}{}
\showchangesfalse

\theoremstyle{plain}
\newtheorem{theorem}{Theorem}[section]

\theoremstyle{definition}

\theoremstyle{remark}
\newtheorem{remark}[theorem]{Remark}

\title{\texorpdfstring{S$^{2}$-PINN}{S2-PINN}: Stochastic Separable Physics-Informed Neural Networks}

\author{%
Zhendong Li \\
Industrial and Systems Engineering Dept.\\
Lehigh University\\
Bethlehem, PA 18015, USA \\
\texttt{zhl923@lehigh.edu} \\
\And
Akwum Onwunta\\
Industrial and Systems Engineering Dept.\\
Lehigh University\\
Bethlehem, PA 18015, USA \\
\texttt{ako221@lehigh.edu} \\
}

\begin{document}
	\maketitle

	\begin{abstract}
		Uncertainty quantification (UQ) for random partial differential equations (PDEs) is ubiquitous in computational science and engineering. However, classical spectral solvers for this class of problems face the curse of dimensionality, and existing neural solvers often ignore the stochastic structure that makes moments and calibration tractable. We introduce a stochastic separable physics-informed neural network, dubbed S$^{2}$-PINN, that represents the solution $u(t,\mathbf{x},\mathbf{Z})$ of a random PDE  with a learnable Gaussian spatial dictionary, Fourier temporal features, and a generalized polynomial chaos (gPC) stochastic basis, coupled by a low-rank Canonical Polyadic (CP) tensor decomposition core. The method is trained with a hybrid strong-form and gPC-projected residual loss. Our theoretical analysis establishes that the separable class is dense in $L^2$ under mild conditions, and the projected residual corresponds exactly to a stochastic Galerkin constraint. Furthermore, we show that mini-batch projection coefficients are logarithmically dependent on the number of gPC modes, and that the orthogonality penalty controls the conditioning of the learned spatial dictionary. Using four manufactured random PDE benchmarks, we show that S$^{2}$-PINN outperforms nine baselines in terms of mean and variance accuracy, as well as calibration, while using significantly fewer parameters. Further evaluations on non-manufactured Poisson and Darcy problems, a stochastic Navier--Stokes problem, a diffusion scaling study of higher random dimensions, and two stochastic inverse problems reveal the generalization capabilities of the proposed structure. Together, these results support stochastic separability as an effective design principle for physics-informed neural UQ. The code for the experiments can be found in \url{https://github.com/DMax1314/s2pinn}.
	\end{abstract}

	\section{Introduction}
	\label{sec:intro}
	Uncertainty quantification (UQ) for partial differential equations (PDEs) is essential when coefficients, boundary conditions, or forcing terms depend on random variables $\mathbf{Z}\in\mathbb{R}^{R}$. In this setting, the solution $u(t,\mathbf{x},\mathbf{Z})$ is a random field, and the primary targets are often its moments and calibration properties. Classical stochastic solvers such as Monte Carlo~\cite{ghanem1991stochastic}, stochastic Galerkin~\cite{xiu2002wiener}, and stochastic collocation~\cite{babuvska2007stochastic} are reliable but costly because they require many repeated solves or large coupled systems.
    

	Physics-informed neural networks (PINNs)~\cite{raissi2019physics,karniadakis2021physics} avoid labeled data by enforcing PDE residual constraints, but standard PINNs typically encode $\mathbf{Z}$ as generic inputs and rely on dense sampling to learn stochastic structure implicitly. 
    Operator-learning methods such as DeepONet~\cite{lu2021learning} and FNO~\cite{li2021fourier} learn parameter-to-solution maps from data, but require supervised datasets and degrade in stiff nonlinear regimes. 
    Recent stochastic PINN and physics-informed polynomial-chaos work~\cite{zhang2019quantifying,yang2021bpinns,psaros2023uncertainty,novak2024physics,shi2026expanding} has improved neural UQ, yet current architectures do not jointly exploit separability across space, time, and randomness.

	Separable PINNs (SPINN)~\cite{cho2024separable} showed that explicit
	separability can improve efficiency for deterministic PDEs. 
    In parallel, low-rank tensor methods (CP, TT, Tucker)~\cite{kolda2009tensor,oseledets2011tensor,bachmayr2023low} have long addressed high-dimensional structure in PDE and optimal control solvers~\cite{nouy2010proper,chinesta2010recent,bachmayr2016tensor, benner2015low, BENNER201626, benner2016block, Antil2023, Antil2025, Antil2025b}.
	The missing ingredient is an integrated stochastic PINN that combines learnable spatial representations, explicit stochastic basis structure, and physics constraints enforcing both pointwise residual consistency and stochastic projection consistency.

	S$^{2}$-PINN fills this gap. The model represents the solution $u(t,\mathbf{x},\mathbf{Z})$ by a three-way separable ansatz with a learnable Gaussian spatial dictionary, Fourier temporal features, and gPC stochastic modes, coupled through a low-rank CP core. 
    Training combines strong-form residual minimization with a gPC-projected residual term. 
    Our theory supports four concrete claims: finite-rank separable sums are expressive in $L^{2}$, the projected residual equals a stochastic Galerkin constraint, its mini-batch estimator has uniform concentration scaling as $O(\log G)$, where $G$ is the number of gPC modes, and the spatial orthogonality penalty controls feature conditioning. We evaluate both forward stochastic prediction and inverse stochastic parameter identification within the same physics-informed framework.
	\paragraph{Contributions.}
	{In this work, we make the following three main contributions.
	\begin{enumerate}[(a),nosep]
		\item 
    We introduce a stochastic separable PINN that couples a learnable spatial
dictionary with $M$ atoms, $M_t$ temporal Fourier features, and $G$ gPC
stochastic modes via a rank-$q$ CP factorization, reducing coupling complexity
from dense $O(MM_tG)$ to $O((M{+}M_t{+}G)q)$.  The hybrid strong-form and
gPC-projected residual loss enforces local PDE consistency and stochastic
Galerkin consistency.
		\item 
        We prove expressivity of the separable ansatz, projection--Galerkin equivalence, mini-batch concentration for stochastic projection coefficients, orthogonality-conditioning control, CP complexity, and gauge invariance. These results justify the representation and explain why projection is useful when the solution is well aligned with the chosen stochastic basis.
        \item 
        Among manufactured and non-manufactured benchmarks, S$^{2}$-PINN improves moment accuracy and calibration. Additional scaling and inverse-problem studies extend the evaluation beyond standard forward prediction, covering higher-dimensional random inputs and joint recovery of solution fields and random-field parameters.
	\end{enumerate}
	Our central claim is that stochastic separability provides a compact and
	well-aligned inductive bias for neural UQ of random PDEs, with its strongest
	benefits in regimes where the solution has low-rank space--time--random
	structure and a useful low-order stochastic expansion.

	\section{Related Work}
	\label{sec:related}

	\paragraph{Physics-Informed and Operator Learning for UQ.}
	Physics-informed neural networks typically enforce PDEs via soft residual constraints
	\cite{raissi2019physics}, with recent efforts targeting Uncertainty
	Quantification (UQ) via Bayesian frameworks \cite{yang2021bpinns}, auxiliary
	networks \cite{zhang2019quantifying}, and polynomial chaos expansions
	\cite{novak2024physics,shi2026expanding}. Parallel to PINNs, neural
	operators like DeepONet \cite{lu2021learning} and FNO \cite{li2021fourier}
	learn parameter-to-solution maps entirely from data. Very recently, Neural
	Chaos \cite{bahmani2025neural} proposed replacing fixed polynomial bases
	with sequentially learned networks to break the independence assumption. However,
	a common limitation spans these methods: they either inherently mandate extensive
	labeled training data (data-driven operators), fail to exploit
	multidimensional structural separation, or rely on dense representations that
	scale poorly with stochastic dimensions. S$^{2}$-PINN addresses this by formulating
	a strictly physics-informed, uncoupled architecture that mathematically
	isolates spatial derivative costs from the stochastic basis size.
{Physics-informed polynomial chaos expansions (PC$^{2}$)~\cite{novak2024physics}
are particularly relevant to our setting because they impose PDE constraints
directly on a PCE surrogate via KKT-style constrained regression. We therefore
include a UQpy-based PC$^{2}$ comparison as an external polynomial-chaos baseline:
it tests whether a fixed global polynomial surrogate with physics constraints
can match the accuracy and calibration gains obtained from our learnable
space--time representation and low-rank stochastic coupling.}

	\paragraph{Low-Rank Tensor Methods in Neural Solvers.}
	Classical low-rank tensor decompositions (CP, TT, Tucker) have long accelerated
	stochastic Galerkin solvers \cite{doostan2013nonintrusive,bigoni2016spectral}.
	In the deep learning regime, recent state-of-the-art neural-tensor hybrids—such
	as TT-PINN \cite{liu2022ttpinn}, functional tensor decompositions
	\cite{vemuri2024functional}, and INN-TD \cite{guo2025interpolating}—target
	purely \emph{deterministic} spatial/parametric coordinates. They primarily
	compress neural weight matrices for edge deployment or factorize spatial
	axes without incorporating a statistical structural prior. Because none embed
	a stochastic polynomial basis or a UQ-specific Galerkin projection, they
	cannot address robust uncertainty quantification. S$^{2}$-PINN fundamentally
	diverges by fusing a learnable spatial dictionary with explicitly separated
	gPC stochastic modes, governed entirely by a hybrid strong-form and projection
	constraint.

	\paragraph{Separable PINNs for High-Dimensional PDEs.}
	SPINN \cite{cho2024separable} first demonstrated that explicit dimensional
	separability, coupled with a coordinate warp, drastically accelerates
	deterministic PDE training. While it has been extended to quantum and elastic
	systems \cite{eskin2024separable,zanotta2025quantum}, seamlessly scaling
	SPINN to stochastic parameters is highly non-trivial. Naively introducing random
	variables into SPINN yields a dense, non-factorized coefficient tensor that
	\add{grows multiplicatively with the spatial, temporal, and stochastic resolutions}. S$^{2}$-PINN
	\add{mitigates} this specific bottleneck by eschewing the dense multi-way block
	for a low-rank CP coupling kernel, tightly integrating optimal stochastic bases
	(gPC) with spatio-temporal atoms, a structural synthesis currently absent in the
	SPINN-related literature. \add{To isolate the effect of this design, we additionally compare against structurally closer baselines: SPINN extended with stochastic coordinates (SPINN+Z)~\cite{cho2024separable}, functional CP-PINNs~\cite{vemuri2024functional}, and Neural Chaos~\cite{bahmani2025neural}.}

	\section{Method}
	\label{sec:method}
    S$^{2}$-PINN represents random PDE solutions with an explicitly separated space--time--random structure. The method assumes that the uncertainty is encoded by a finite random vector $\mathbf{Z}$ with known marginals, which allows us to choose an orthonormal gPC basis in the stochastic coordinates. The architecture then couples spatial, temporal, and stochastic factors through a low-rank CP kernel, while the loss combines pointwise physics constraints with a stochastic projection constraint. Appendix~\ref{sec:theory} provides PDE-agnostic support for the main design choices: expressivity of separable sums, projection--Galerkin equivalence, mini-batch concentration, orthogonality penalty conditioning, CP complexity, and gauge invariance.

    \subsection{Three-way Separable Representation} 
    We approximate the solution by a rank-$q$ CP expansion, 
    \begin{equation} \label{eq:cp-rep} u(t,\mathbf{x}, \mathbf{Z}) \approx \sum_{r=1}^{q} w_{r} \phi^{(r)}(\mathbf{x}) \tau^{(r)}(t) \psi^{(r)}(\mathbf{Z}), 
    \end{equation} 
    where each factor is a linear combination of basis functions (atoms) specific to its coordinate,
    \begin{equation} 
    \phi^{(r)}(\mathbf{x}) = \sum_{i=1}^{M} A_{ir}\Phi_i(\mathbf{x}), \quad \tau^{(r)}(t) = \sum_{k=1}^{M_t} B_{kr}T_k(t), \quad \psi^{(r)}(\mathbf{Z}) = \sum_{g=1}^{G} D_{gr}\Psi_g(\mathbf{Z}). 
    \end{equation} 
    Here $A\in\mathbb{R}^{M\times q}$, $B\in\mathbb{R}^{M_t\times q}$, $D\in\mathbb{R}^{G\times q}$, and $w\in\mathbb{R}^{q}$ are learnable CP parameters. Spatial atoms are Gaussian dictionary elements with an optional residual warp,

	The three feature families are as follows. Spatial atoms are Gaussian dictionary elements with an optional residual warp,
	\begin{equation}
		\label{eq:phi-dict}\Phi_{i}(\mathbf{x})=\exp\!\Big(-\tfrac{\|\mathbf{W}(\mathbf{x})-\mathbf{c}_i\|_2^2}
		{2\sigma_i^2}\Big),\qquad \mathbf{W}(\mathbf{x})=\mathbf{x}+\mathbf{w}_{\theta}
		(\mathbf{x}).
	\end{equation}
	Temporal atoms are DC and sine/cosine features with learnable frequencies
	$\omega_{k}=\exp(\theta_{k})$, and stochastic atoms are tensorized gPC bases
	(chosen by the base distribution) on $\mathcal{J}_{R,p}$.
	Given feature blocks $\mathbf{F}_{\text{space}}\in\mathbb{R}^{B\times M}$, $\mathbf{F}
	_{\text{time}}\in\mathbb{R}^{B\times M_t}$, and
	$\mathbf{F}_{\text{stoc}}\in\mathbb{R}^{B\times G}$, the batched forward map
	is
	\begin{equation}
		\label{eq:forward-cp}u = \big((\mathbf{F}_{\text{space}}A)\odot(\mathbf{F}
		_{\text{time}}B)\odot(\mathbf{F}_{\text{stoc}}D)\big)w,
	\end{equation}
	where $\odot$ denotes a row-wise Hadamard product.
	Arithmetic scales as $O(Bq(M+M_{t}+G))$ instead of $O(BMM_{t}G)$ {(\Cref{prop:cp_complexity})}. Relative to
	SPINN, this design adds explicit stochastic modes, learns a Gaussian spatial dictionary, and uses a three-way CP coupling that keeps the stochastic basis separated from spatial derivative costs. Figure~\ref{fig:architecture} illustrates the architecture.

	{\begin{theorem}
    Under the density assumptions on $\{\Phi_i\}$, $\{T_k\}$, and $\{\Psi_g\}$ stated in Appendix~\ref{sec:theory}, the union $\bigcup_{q\ge 1}\mathcal{F}_q$ of rank-$q$ separable functions is dense in $L^2(\mathcal{T}\times\Omega\times\mathbf{Z})$; see Theorem~\ref{prop:density_agnostic}.     
	\end{theorem}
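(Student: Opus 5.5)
The plan is the classical tensor-product density argument. We assume, as in Appendix~\ref{sec:theory}, that $\mathrm{span}\{\Phi_i\}$ is dense in $L^2(\Omega)$, $\mathrm{span}\{T_k\}$ is dense in $L^2(\mathcal{T})$, and $\mathrm{span}\{\Psi_g\}$ is dense in $L^2_\rho$ of the random variables. The last holds, for instance, for the orthonormal gPC family when the marginals are determined by their moments. We also assume $M$, $M_t$, $G$ may grow with the approximation accuracy.

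\textbf{Step 1 (Hilbert tensor identification).}
The product space is a Hilbert tensor product,
\[
L^2(\mathcal{T}\times\Omega\times\mathbf{Z}) \cong L^2(\mathcal{T})\otimes L^2(\Omega)\otimes L^2_\rho .
\]
This follows from Fubini and the fact that products of orthonormal bases give an orthonormal basis of the product space. Hence finite linear combinations of elementary products $a(t)b(\mathbf{x})c(\mathbf{Z})$ are dense. Concretely, take orthonormal bases $\{e_j\}$, $\{f_l\}$, $\{h_m\}$. Expanding $u$ in $\{e_j f_l h_m\}$ and truncating by Parseval gives a finite sum $u_N=\sum_{n=1}^{N} c_n\, e_{j_n}(t)f_{l_n}(\mathbf{x})h_{m_n}(\mathbf{Z})$ with $\|u-u_N\|<\varepsilon/2$.

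\textbf{Step 2 (dictionary replacement).}
Approximate each factor by an element of the corresponding span. Choose $\tilde e_{j_n}\in\mathrm{span}\{T_k\}$, $\tilde f_{l_n}\in\mathrm{span}\{\Phi_i\}$ and $\tilde h_{m_n}\in\mathrm{span}\{\Psi_g\}$ within $\delta$ of the originals. The key estimate is the cross-norm identity $\|a\otimes b\otimes c\|=\|a\|\,\|b\|\,\|c\|$ combined with a telescoping decomposition:
\[
a b c-\tilde a\tilde b\tilde c=(a-\tilde a)bc+\tilde a(b-\tilde b)c+\tilde a\tilde b(c-\tilde c).
\]
Each elementary term then incurs error at most $3\delta(1+\delta)^2$. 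Choosing $\delta$ small relative to $\varepsilon/(2\sum_n|c_n|)$ gives total error below $\varepsilon$.

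\textbf{Step 3 (CP form).}
Each replaced term $c_n\,\tilde e\,\tilde f\,\tilde h$ is exactly one CP summand $w_r\phi^{(r)}\tau^{(r)}\psi^{(r)}$ as in \eqref{eq:cp-rep}. Set $w_r=c_n$ and let the columns of $A$, $B$, $D$ hold the dictionary coefficients of $\tilde f$, $\tilde e$, $\tilde h$. The approximant therefore lies in $\mathcal{F}_N$, which proves density of $\bigcup_q\mathcal{F}_q$.

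\textbf{Main obstacle.}
The substantive difficulty is the density hypothesis for the spatial dictionary rather than the tensor argument. Gaussian atoms with learnable centers $\mathbf{c}_i$ and widths $\sigma_i$, composed with the warp $\mathbf{W}$, need not span a dense set unless the warp is controlled. I would handle this by taking $\mathbf{w}_\theta\equiv 0$, or more generally by requiring $\mathbf{W}$ to be a bi-Lipschitz diffeomorphism; composition with such a map is an isomorphism of $L^2(\Omega)$ and so preserves density. For the unwarped case I would cite the universal approximation theorem for Gaussian RBF networks (Park--Sandberg). Its route is density in $C(\bar\Omega)$ for bounded $\Omega$, which implies $L^2$ density. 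For the gPC factor, I would note that density in $L^2_\rho$ needs moment-determinacy of each marginal, for example Gaussian or uniform (Ernst et al.), and that independence makes the tensorized basis dense.
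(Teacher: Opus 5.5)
Your proof is correct, but it is more constructive than the paper's and proves a slightly stronger statement.

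\textbf{The paper's route.} The paper defines $\mathcal{F}_q$ with factors ranging over the \emph{closed} spans $\mathcal{H}_{\mathrm{time}},\mathcal{H}_{\mathrm{space}},\mathcal{H}_{\mathrm{stoc}}$. It observes that $\bigcup_q\mathcal{F}_q$ is then exactly the algebraic tensor product $\mathcal{V}$ of these spaces. Density follows because the Hilbert tensor product is, by definition, the completion of $\mathcal{V}$. The totality assumptions turn the closed spans into the full $L^2$ spaces. The standard identification $L^2(\mathcal{T})\otimes L^2(\Omega)\otimes L^2(\mathbf{Z})\cong L^2(\mathcal{T}\times\Omega\times\mathbf{Z})$ for $\sigma$-finite measures then finishes the proof.

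\textbf{Your route.} Your Step 1 uses the same identification, via product orthonormal bases. This tacitly needs separability and $\sigma$-finiteness, which both hold here. Your Step 2 then does something the paper never does. Using the cross-norm identity and the telescoping estimate, which amounts to continuity of the trilinear map $(a,b,c)\mapsto abc$, you replace each factor by a \emph{finite} linear combination of dictionary atoms.

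\textbf{What each buys.} Your approximant is an actual CP network of the form \eqref{eq:cp-rep}, with finite $M$, $M_t$, $G$ and explicit $A$, $B$, $D$, $w$. The paper's class $\mathcal{F}_q$, under totality, is just the set of rank-$q$ sums of arbitrary $L^2$ factors. Its density statement is therefore essentially definitional and says nothing about finite dictionaries. The paper's argument is shorter. Yours is more relevant to the implemented architecture and shows quantitatively how factor-level approximation errors propagate to the separable sum.

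\textbf{Your closing remarks.} Your discussion of the warped Gaussian dictionary (bi-Lipschitz warps preserve totality; RBF universality) and of moment-determinacy for the gPC factor concerns hypotheses the paper simply posits. It is not needed for the theorem as stated. It is, however, what one would have to supply to verify those assumptions for the actual model.
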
}
	\begin{figure}[htbp]
		\centering		\includegraphics[width=\linewidth]{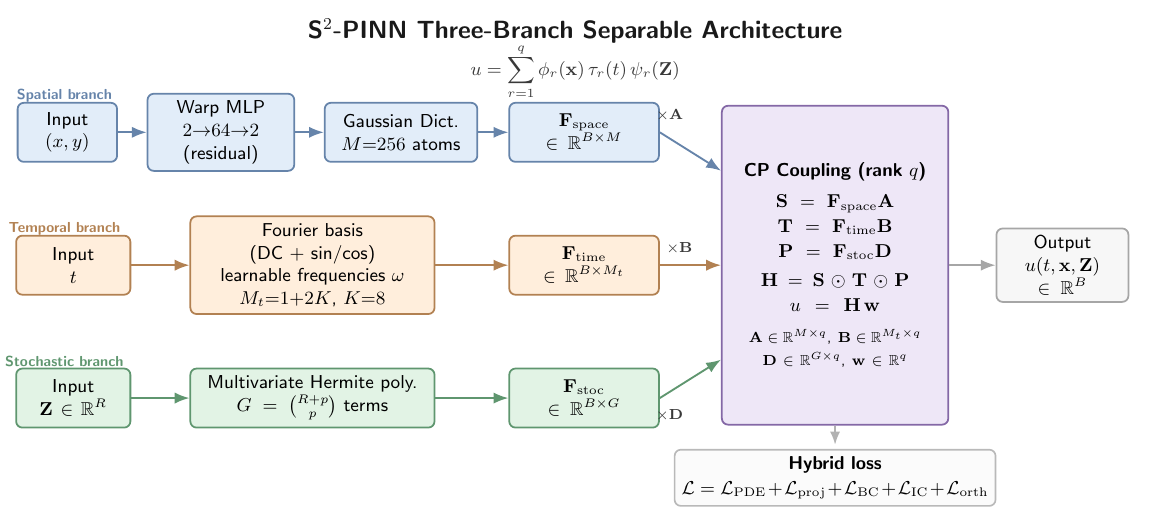}
		\caption{S$^{2}$-PINN architecture. Three specialized branches---spatial
		(Gaussian dictionary with warp), temporal (learnable Fourier),
		stochastic (Hermite gPC)---are coupled via a rank-$q$ CP core using
		Hadamard products. The hybrid loss combines PDE residual, gPC projection,
		boundary/initial conditions, and orthogonality regularization.}
		\label{fig:architecture}
	\end{figure}

	\subsection{Hybrid Physics Loss and Regularization}
	Let $r_\theta=\mathcal N[u_\theta;\vartheta](t,\mathbf x,\mathbf Z)$ be the strong-form residual, {where $\vartheta$ parameterizes the differential operator $\mathcal{N}$ and $u_\theta$ is the network output}. We minimize the objective
	\begin{equation}
		\label{eq:loss}
        \resizebox{\columnwidth}{!}{$
			\mathcal{L}  = w_{\mathrm{pde}}\mathbb{E}[r_\theta^{2}] + w_{\mathrm{proj}}\sum_{g=1}^{G}\Big(\mathbb{E}[r\Psi_{g}(\mathbf{Z})]\Big)^{2}+ w_{\mathrm{bc}}\mathbb{E}_{\partial\Omega}[(u-{u_{\partial}}
            )^{2}] + w_{\mathrm{ic}}\mathbb{E}[(u-{u_{0}}
            )^{2}|_{t=0}] + w_{\mathrm{orth}}\mathcal{R}_{\mathrm{orth}}.$}
	\end{equation}
    {Here $u_{\partial}$ and $u_{0}$ denote boundary and
	initial conditions. The orthogonal regularizer combines a spatial feature Gram penalty} 
    	\begin{equation}
		\label{eq:gram}\mathcal{R}_{\text{feat}}=\|\widehat{G}_{\Phi}-I\|_{F}^{2}
		,\qquad \widehat{G}_{\Phi}=\tfrac{1}{B}\mathbf{F}_{\text{space}}^{\top}\mathbf{F}
		_{\text{space}},
	\end{equation}
	and CP-factor orthogonality,
	\begin{equation}
		\label{eq:fac-orth}\mathcal{R}_{\text{fac}}=\sum_{M\in\{A,B,D\}}\Big\|\tfrac
		{1}{\|M\|_{\text{col}}^2}M^{\top}M-I\Big\|_{F}^{2},
	\end{equation}
	with $\mathcal{R}_{\mathrm{orth}}=\mathcal{R}_{\mathrm{feat}}+\lambda_{\mathrm{fac}}\mathcal{R}_{\mathrm{fac}}$.
	\add{Note that the CP format is defined by the forward map~\eqref{eq:forward-cp} itself, which is an explicit sum of rank-one products of spatial, temporal, and stochastic factors for any value of $\lambda_{\mathrm{fac}}$, including zero; the factor penalty (default $\lambda_{\mathrm{fac}}{=}0.1$) only encourages well-conditioned factors and neither defines the CP form nor enforces CP uniqueness.}
    \Cref{prop:galerkin_agnostic} shows that near-identity spatial Gram matrices control the condition number of the feature matrix.
    Regularization incorporates a spatial Gram penalty, which encourages feature near-orthogonality. Specifically, \Cref{prop:cond_agnostic} shows that $\|\widehat{G}_\Phi - I\|_2 \le \delta$ ensures $\kappa(F_{\text{space}}) \le \sqrt{(1+\delta)/(1-\delta)}$, with \Cref{prop:gram_conc} providing the corresponding sample complexity.

	The projected residual term is theoretically Galerkin-consistent and statistically stable, but it is not uniformly beneficial in every stochastic regime. \Cref{prop:galerkin_agnostic} identifies $\sum_g(\mathbb{E}[r_\theta\Psi_g])^2$ with $\|\mathsf{P}_{V_p}r_\theta\|_{L^2(\mathbf{Z})}^2$, the squared residual projection onto the gPC subspace. The mini-batch coefficient estimator is unbiased with $O(1/B)$ variance (\Cref{prop:mc_agnostic}), and \Cref{prop:conc_agnostic} gives uniform $\epsilon$-accuracy across all $G$ modes with batch size scaling as $O(\log G)$. Empirically, this projection acts as a useful spectral regularizer when the solution is well approximated by the chosen polynomial space, but it can introduce bias when important stochastic energy lies outside that space. 
	\add{In practice, the projection is computed pointwise in space and time: for each sampled $(t,\mathbf{x})$ we draw several stochastic samples $\{\mathbf{Z}_b\}$, estimate $\mathbb{E}_{\mathbf{Z}}[r_\theta(t,\mathbf{x},\mathbf{Z})\Psi_g(\mathbf{Z})]$ from them, and then average the squared coefficients over the space--time collocation points, rather than taking one joint average over random $(t,\mathbf{x},\mathbf{Z})$ tuples.}

    }

	\subsection{Training}
    At each step we sample PDE, boundary, initial, and Gram mini-batches and optimize Eq.~\eqref{eq:loss} with Adam. 
    Time derivatives are analytic from Fourier features, while spatial derivatives, including $\Delta u$, use autodiff through the warped Gaussian dictionary. 
    We periodically resample collocation points and apply column-wise normalization to CP factors $(A,B,D)$ after updates to fix the continuous gauge symmetry of the CP format (\Cref{prop:cp_gauge}), which also improves the Lipschitz conditioning of the forward map.
    The column normalization leaves the forward map unchanged (\Cref{prop:cp_gauge}) but rescales Adam's moment buffers; at the default learning rate $\eta=10^{-3}$, we observed no instability from this engineering choice. Algorithm~\ref{alg:training} and default hyperparameters are reported in Appendix~\ref{sec:model-details}.
%


	\section{Experiments}
	\label{sec:num}
	\subsection{Benchmark Problems}
	We evaluate S$^{2}$-PINN on four controlled random PDEs: diffusion, Allen--Cahn, Burgers, and Darcy. The first three are time-dependent problems that share the same manufactured target and differ in the physical operator, so their performance gaps isolate the effect of linear diffusion, cubic reaction, and nonlinear advection. Darcy is steady-state and uses exponential $\mathbf{Z}$-dependence in the coefficient, making it a deliberately harder case for a low-order polynomial stochastic basis. 

	Unless otherwise stated, all experiments use $R{=}5$ random variables, total gPC degree $p{=}3$ ($G{=}\binom{8}{3}{=}56$ modes), $M{=}256$ spatial atoms, $M_t{=}17$ temporal atoms, and CP rank $q{=}16$. We keep this single default across the four PDEs, the scaling sweep (\Cref{sec:scaling-sweep}), and the collocation-density sweep (\Cref{sec:colloc-sweep}) to avoid per-problem tuning. Manufactured solutions enable ground-truth evaluation of both mean and variance; full formulas are given in \Cref{sec:manufactured-solutions}.

	\paragraph{Baselines and fairness.}
	We compare against \add{nine} baselines: a standard MLP PINN, the same PINN with the gPC projection loss, PI-DeepONet~\cite{goswami2023physics}, MC-Dropout PINN, a $K{=}5$ deep ensemble~\cite{DBLP:conf/nips/Lakshminarayanan17}, stochastic collocation (SC) with $n_q{=}2$ Gauss--Hermite nodes per dimension ($2^5{=}32$ node-PINNs)\add{, and three structurally closer methods: SPINN with the random variables appended as additional separable coordinates (SPINN+Z)~\cite{cho2024separable}, functional CP-PINN~\cite{vemuri2024functional}, and Neural Chaos~\cite{bahmani2025neural}. Neural Chaos is supervised by reference solutions and therefore serves as a data-driven oracle rather than a purely physics-informed competitor}. All neural methods use the same optimizer family, collocation strategy, and training budget unless stated otherwise. The vanilla PINN has about $28\times$ more parameters than S$^{2}$-PINN (roughly 200K versus 7K), so parameter count does not favor the proposed architecture. 
    Additional architecture and optimization details are in Appendix~\ref{sec:baseline-details}.

	\subsection{Metrics}
	We report two moment-accuracy metrics, evaluated on a held-out stochastic
	test set:
	\begin{itemize}[nosep]
		\item \textbf{Mean relative $L^{2}$} :
			$\|\mathbb{E}_{\mathbf{Z}}[u] - \mathbb{E}_{\mathbf{Z}}[u^{*}]\|_{L^2(\Omega)}
			/ \|\mathbb{E}_{\mathbf{Z}}[u^{*}]\|_{L^2(\Omega)}$
		\item \textbf{Variance relative $L^{2}$} :
			$\|\mathrm{Var}_{\mathbf{Z}}[u] - \mathrm{Var}_{\mathbf{Z}}[u^{*}
			]\|_{L^2(\Omega)}/ \|\mathrm{Var}_{\mathbf{Z}}[u^{*}]\|_{L^2(\Omega)}$
	\end{itemize}
	Moments are estimated by Monte Carlo over 1024 i.i.d.\ samples
	$\mathbf{Z}\sim \mathcal{N}(0,I_{R})$ at $B{=}2048$ uniformly sampled probe
	points $(t,\mathbf{x})\in\mathcal{T}\times\Omega$. All results are averaged over
	5 random seeds (42, 123, 456, 789, 1024), and all results reported as mean $\pm$ std.
	{
	The lowest mean per column is bolded and when the runner-up lies within
	$\pm 1\sigma$ of the best, both entries are bolded to indicate a
	statistical tie.}

\paragraph{Calibration protocol.}
To assess whether the predicted uncertainty faithfully reflects the true variability of the solution under random inputs, we construct predictive intervals and evaluate their empirical coverage. At each probe $(t_{j},\mathbf{x}_{j})$, we draw $N_{Z}{=}1{,}024$ i.i.d.\ samples $\{\mathbf{Z}^{(i)}\}_{i=1}^{N_Z}$ from $\mathcal{N}(0,I_R)$ and form the nominal $(1{-}\alpha)$ predictive interval $[\hat{Q}_{\alpha/2},\,\hat{Q}_{1-\alpha/2}]$, where $\hat{Q}_{\tau}$ denotes the $\tau$-th empirical quantile of $\{u_{\theta}(t_{j},\mathbf{x}_{j},\mathbf{Z}^{(i)})\}$. Coverage is then evaluated on a {separate} hold-out set of $N_{Z}'{=}1{,}024$ samples $\{\mathbf{Z}'^{(k)}\}_{k=1}^{N_Z'}$: empirical coverage at nominal level $(1{-}\alpha)$ is the fraction of $(j,k)$ pairs for which $u^{*}(t_{j},\mathbf{x}_{j},\mathbf{Z}'^{(k)}) \in [\hat{Q}_{\alpha/2}^{(j)},\,\hat{Q}_{1-\alpha/2}^{(j)}]$. We report empirical coverage at four nominal levels ($50\%$, $80\%$, $90\%$, $95\%$), sharpness (mean interval width at $90\%$), and mean calibration error $\frac{1}{4}\sum_{\ell}|(1{-}\alpha_{\ell}) - \widehat{\mathrm{Cov}}_{\ell}|$. Results are in \Cref{tab:calibration}.

\subsection{Main Results}
Tables~\ref{tab:mean-accuracy}--\ref{tab:var-accuracy} summarize the comparison. These estimates provide tighter uncertainty quantification of performance trends on these four controlled benchmarks. 
	\begin{table}[htbp]
		\centering
		\caption{Mean solution accuracy.}
		\label{tab:mean-accuracy} \small \resizebox{\columnwidth}{!}{%
		\begin{tabular}{lcccc}
			\hline
			Method                       & Diffusion                                & Allen--Cahn                              & Burgers                                  & Darcy                                    \\
			\hline
			\textbf{S$^{2}$-PINN (proj)} & $\mathbf{(3.71 \pm 1.01)\times 10^{-3}}$ & \add{$\mathbf{(4.40 \pm 1.32)\times 10^{-3}}$} & $\mathbf{(3.44 \pm 1.47)\times 10^{-3}}$ & {$\mathbf{(1.05 \pm 0.30)\times 10^{-3}}$}          \\
			S$^{2}$-PINN (no-proj)       & $(4.85 \pm 2.80)\times 10^{-3}$          & {$(4.51 \pm 4.15)\times 10^{-3}$}& {$\mathbf{(3.86 \pm 1.84)\times 10^{-3}}$}& $\mathbf{(9.51 \pm 2.13)\times 10^{-4}}$ \\
			Vanilla PINN                 & $(1.41 \pm 0.42)\times 10^{-2}$          & $(1.81 \pm 0.68)\times 10^{-2}$          & $(1.78 \pm 0.72)\times 10^{-2}$          & $(1.09 \pm 0.28)\times 10^{-2}$          \\
			Vanilla PINN + proj          & $(1.82 \pm 0.69)\times 10^{-2}$          & $(1.49 \pm 0.40)\times 10^{-2}$          & $(2.13 \pm 0.75)\times 10^{-2}$          & $(1.57 \pm 0.60)\times 10^{-2}$          \\
			PI-DeepONet                  & \add{$(5.07 \pm 2.17)\times 10^{-2}$} & \add{$(5.74 \pm 2.28)\times 10^{-2}$} & \add{$(4.32 \pm 1.96)\times 10^{-2}$} & \add{$(2.56 \pm 0.82)\times 10^{-2}$} \\
			MC-Dropout PINN              & $(1.08 \pm 0.11)\times 10^{-1}$          & $(9.99 \pm 0.77)\times 10^{-2}$          & $(1.19 \pm 0.10)\times 10^{-1}$          & $(7.45 \pm 0.47)\times 10^{-2}$          \\
			Deep Ensemble ($K{=}5$)      & $(1.31\pm0.13)\times10^{-2}$& $(1.84\pm0.44)\times10^{-2}$ &$(1.01\pm0.33)\times10^{-2}$ &$(2.05\pm0.45)\times10^{-2}$ \\
			SC ($n_{q}{=}2$, 32 nodes)   &$(3.52\pm0.11)\times10^{-2}$ & $(3.45\pm0.33)\times10^{-2}$& $(3.57\pm0.10)\times10^{-2}$& $(8.74\pm0.27)\times10^{-2}$ \\
			\add{SPINN+Z} & \add{$(1.57\pm5.10)\times10^{-2}$} & \add{$(6.70\pm3.42)\times10^{-2}$} & \add{$(3.59\pm3.43)\times10^{-2}$} & \add{$(3.48\pm2.18)\times10^{-2}$} \\
			\add{Functional CP-PINN} & \add{$(9.37\pm4.22)\times10^{-2}$} & \add{$(10.87\pm4.86)\times10^{-2}$} & \add{$(10.87\pm4.86)\times10^{-2}$} & \add{$(4.53\pm1.74)\times10^{-2}$} \\
			\add{Neural Chaos} & \add{$(2.31\pm1.35)\times10^{-2}$} & \add{--} & \add{--} & \add{$(2.68\pm1.53)\times10^{-2}$} \\
			\hline
		\end{tabular}%
		}
	\end{table}

	\begin{table}[htbp]
		\centering
		\caption{Variance solution accuracy.}
		\label{tab:var-accuracy} \small \resizebox{\columnwidth}{!}{%
		\begin{tabular}{lcccc}
			\hline
			Method& Diffusion& Allen--Cahn& Burgers& Darcy\\
			\hline
			\textbf{S$^{2}$-PINN (proj)} & $\mathbf{(4.40 \pm 1.52)\times 10^{-3}}$ & $\mathbf{(4.07 \pm 1.98)\times 10^{-3}}$ & $\mathbf{(4.65 \pm 2.00)\times 10^{-3}}$ & $(5.41 \pm 0.86)\times 10^{-3}$          \\
			S$^{2}$-PINN (no-proj)       & $(7.03 \pm 2.41)\times 10^{-3}$          & $(7.32 \pm 7.19)\times 10^{-3}$          & {$\mathbf{(5.39 \pm 3.03)\times 10^{-3}}$}& $\mathbf{(4.57 \pm 0.51)\times 10^{-3}}$ \\
			Vanilla PINN                 & $(3.75 \pm 0.80)\times 10^{-2}$          & $(3.79 \pm 1.08)\times 10^{-2}$          & $(5.12 \pm 1.81)\times 10^{-2}$          & $(1.92 \pm 0.18)\times 10^{-2}$          \\
			Vanilla PINN + proj          & $(4.75 \pm 1.30)\times 10^{-2}$          & $(4.24 \pm 0.67)\times 10^{-2}$          & $(4.86 \pm 0.71)\times 10^{-2}$          & $(2.06 \pm 0.23)\times 10^{-2}$          \\
			PI-DeepONet& \add{$(6.05 \pm 1.76)\times 10^{-2}$} & \add{$(6.58 \pm 2.25)\times 10^{-2}$} & \add{$(5.31 \pm 3.75)\times 10^{-2}$} & \add{$(7.87 \pm 1.98)\times 10^{-2}$} \\
			MC-Dropout PINN& $(1.54 \pm 0.25)\times 10^{-1}$          & $(1.16 \pm 0.15)\times 10^{-1}$          & $(1.63 \pm 0.22)\times 10^{-1}$          & $(2.10 \pm 0.07)\times 10^{-1}$          \\
			Deep Ensemble ($K{=}5$)&$(4.08 \pm 2.13)\times 10^{-2}$ & $(3.64 \pm 3.21)\times 10^{-2}$& $(4.22 \pm 1.89)\times 10^{-2}$& $(5.96 \pm 2.47)\times 10^{-2}$\\
			SC ($n_{q}{=}2$, 32 nodes)& $(7.80 \pm 3.41)\times 10^{-2}$& $(3.45\pm 2.27)\times 10^{-2}$& $(8.61 \pm 3.54)\times 10^{-2}$& $(7.13 \pm 3.97)\times 10^{-2}$\\
			\add{SPINN+Z} & \add{$(1.28\pm8.65)\times10^{-2}$} & \add{$(1.40\pm4.56)\times10^{-2}$} & \add{$(7.20\pm6.59)\times10^{-3}$} & \add{$(2.95\pm1.05)\times10^{-2}$} \\
			\add{Functional CP-PINN} & \add{$(9.88\pm3.55)\times10^{-2}$} & \add{$(9.62\pm5.83)\times10^{-2}$} & \add{$(1.01\pm2.39)\times10^{-1}$} & \add{$(7.21\pm5.18)\times10^{-1}$} \\
			\add{Neural Chaos} & \add{$(2.72\pm1.88)\times10^{-1}$} & \add{--} & \add{--} & \add{$(6.56\pm8.21)\times10^{-2}$} \\
			\hline
		\end{tabular}%
		}
	\end{table}
	\begin{table}[htbp]
		\centering
		\caption{UQ calibration: empirical coverage at nominal levels and calibration
		error. \add{Calibration results for PI-DeepONet, SPINN+Z, functional CP-PINN, and Neural Chaos are in \Cref{tab:calibration-extra}.}}
		\label{tab:calibration} \scriptsize \resizebox{\columnwidth}{!}{%
		\begin{tabular}{lcccccc}
			\hline
			Method / PDE                                    & Cov@50\%        & Cov@80\%        & Cov@90\%        & Cov@95\%        & Sharpness$_{90}$ & Cal.~Err.       \\
			\hline
\multicolumn{7}{l}{\textbf{S$^{2}$-PINN (proj)}} \\
\quad Diffusion & $\best{0.491\pm0.012}$ & $\best{0.797\pm0.007}$ & $\best{0.897\pm0.004}$ & $\best{0.950\pm0.003}$ & $\best{0.194\pm0.006}$ & $\best{0.006\pm0.003}$ \\
\quad Allen--Cahn & $\best{0.493\pm0.013}$ & $\best{0.800\pm0.008}$ & $\best{0.900\pm0.005}$ & $\best{0.952\pm0.004}$ & $\best{0.194\pm0.006}$ & $\best{0.006\pm0.001}$ \\
\quad Burgers & $\best{0.493\pm0.012}$ & $\best{0.799\pm0.007}$ & $\best{0.899\pm0.004}$ & $\best{0.951\pm0.002}$ & $\best{0.194\pm0.006}$ & $\best{0.006\pm0.002}$ \\
\quad Darcy & $\best{0.497\pm0.007}$ & $\best{0.800\pm0.006}$ & $\best{0.901\pm0.007}$ & $\best{0.952\pm0.005}$ & $\best{0.094\pm0.002}$ & $\best{0.005\pm0.002}$ \\
\multicolumn{7}{l}{\textbf{S$^{2}$-PINN (no proj)}} \\
\quad Diffusion & $\best{0.494\pm0.013}$ & $\best{0.796\pm0.015}$ & $\best{0.888\pm0.015}$ & $0.937\pm0.012$ & $\best{0.189\pm0.000}$ & $\best{0.014\pm0.008}$ \\
\quad Allen--Cahn & $\best{0.496\pm0.013}$ & $\best{0.798\pm0.015}$ & $\best{0.890\pm0.015}$ & $\best{0.939\pm0.012}$ & $\best{0.189\pm0.000}$ & $\best{0.013\pm0.008}$ \\
\quad Burgers & $\best{0.496\pm0.012}$ & $\best{0.798\pm0.014}$ & $\best{0.890\pm0.015}$ & $\best{0.938\pm0.011}$ & $\best{0.189\pm0.000}$ & $\best{0.013\pm0.008}$ \\
\quad Darcy & $\best{0.501\pm0.007}$ & $\best{0.796\pm0.008}$ & $\best{0.894\pm0.007}$ & $\best{0.945\pm0.005}$ & $0.116\pm0.000$ & $\best{0.007\pm0.003}$ \\
\multicolumn{7}{l}{Vanilla PINN} \\
\quad Diffusion & $\best{0.494\pm0.027}$ & $\best{0.799\pm0.027}$ & $\best{0.900\pm0.019}$ & $\best{0.949\pm0.010}$ & $0.200\pm0.007$ & $0.017\pm0.005$ \\
\quad Allen--Cahn & $\best{0.481\pm0.017}$ & $\best{0.783\pm0.029}$ & $\best{0.888\pm0.025}$ & $\best{0.940\pm0.023}$ & $0.199\pm0.005$ & $\best{0.018\pm0.020}$ \\
\quad Burgers & $\best{0.499\pm0.014}$ & $\best{0.806\pm0.012}$ & $\best{0.903\pm0.005}$ & $\best{0.951\pm0.003}$ & $0.200\pm0.005$ & $\best{0.006\pm0.005}$ \\
\quad Darcy & $0.450\pm0.004$ & $0.737\pm0.013$ & $0.848\pm0.011$ & $0.911\pm0.007$ & $0.093\pm0.002$ & $0.051\pm0.008$ \\
\multicolumn{7}{l}{MC-Dropout PINN} \\
\quad Diffusion & $0.420\pm0.031$ & $0.664\pm0.024$ & $0.750\pm0.022$ & $0.801\pm0.017$ & $0.211\pm0.006$ & $0.129\pm0.023$ \\
\quad Allen--Cahn & $0.402\pm0.017$ & $0.671\pm0.017$ & $0.767\pm0.017$ & $0.822\pm0.015$ & $0.201\pm0.006$ & $0.122\pm0.016$ \\
\quad Burgers & $0.390\pm0.014$ & $0.642\pm0.027$ & $0.736\pm0.028$ & $0.786\pm0.026$ & $0.212\pm0.006$ & $0.149\pm0.024$ \\
\quad Darcy & $0.310\pm0.020$ & $0.569\pm0.028$ & $0.708\pm0.036$ & $0.811\pm0.028$ & $0.098\pm0.003$ & $0.188\pm0.027$ \\
\multicolumn{7}{l}{Deep Ensemble ($K{=}5$)} \\
\quad Diffusion & $\best{0.507\pm0.012}$ & $\best{0.818\pm0.022}$ & $\best{0.910\pm0.016}$ & $\best{0.954\pm0.010}$ & $\best{0.195\pm0.009}$ & $0.015\pm0.008$ \\
\quad Allen--Cahn & $\best{0.495\pm0.011}$ & $\best{0.806\pm0.021}$ & $\best{0.904\pm0.017}$ & $\best{0.950\pm0.011}$ & $\best{0.195\pm0.009}$ & $0.013\pm0.004$ \\
\quad Burgers & $0.530\pm0.012$ & $0.835\pm0.021$ & $0.920\pm0.014$ & $\best{0.959\pm0.009}$ & $0.196\pm0.009$ & $0.024\pm0.013$ \\
\quad Darcy & $\best{0.498\pm0.008}$ & $\best{0.795\pm0.010}$ & $\best{0.894\pm0.009}$ & $\best{0.945\pm0.006}$ & $0.097\pm0.001$ & $\best{0.008\pm0.003}$ \\
			\hline
		\end{tabular}%
		}
	\end{table}
	\textbf{S$^{2}$-PINN achieves the lowest observed errors across benchmarks.}\quad
	Across diffusion, Allen--Cahn, and Burgers (polynomial stochastic dependence), we observe  lower mean errors and lower variance errors for S$^{2}$-PINN with projection relative to the best included baseline.  The no-projection variant remains
	competitive, which separates the architectural gain of the CP/gPC coupling
	from the additional regularization effect of the projected residual. Darcy is more nuanced: S$^{2}$-PINN attains the best mean accuracy, but the no-projection variant gives the best variance, already indicating that projection is not uniformly beneficial outside the polynomial regime.
    
	\textbf{Baselines.}\quad
	Stochastic collocation trains 32 independent node-PINNs and reaches only
	vanilla-PINN-level accuracy under our matched setting; it cannot share
	representations across random modes. 
	\add{The moment errors of PI-DeepONet remain about one order of magnitude above S$^{2}$-PINN.}
	MC-dropout has substantially larger moment errors and under-covers in the
	calibration table. Deep ensembles improve stability over a single PINN, but
	use five full networks and remain less accurate on the controlled benchmarks.
	\add{Among the structurally closer baselines, SPINN+Z is well calibrated but its moment errors are $1.5$--$33\times$ larger than those of S$^{2}$-PINN, functional CP-PINN is less accurate and less calibrated, and even the supervised Neural Chaos oracle does not reach S$^{2}$-PINN accuracy, indicating that the gains stem from the gPC basis combined with CP coupling rather than from separability alone.}

	\textbf{Computational cost.}\quad
	On stochastic diffusion, S$^{2}$-PINN trains in ${\approx} 90$\, mins with $1$\,GB peak memory, versus ${\approx} 80$\,mins /$4$\,GB for the vanilla PINN; SPINN+Z and functional CP-PINN use less memory but are remarkably less accurate (\Cref{tab:runtime}; cost analysis in Appendix~\ref{sec:cost-analysis}).

	\textbf{Scaling stress test.}\quad
	On the diffusion scaling benchmark, S$^{2}$-PINN remains stable up to
	$R{=}15$ random dimensions, corresponding to $G{=}816$ gPC modes
	(Appendix~\ref{sec:scaling-sweep}). Mean errors stay in the
	$(4$--$7)\times10^{-3}$ and Cov@90\% remains near $0.88$. \add{We further push the stochastic dimension to $R{=}50$ ($G{=}\binom{53}{3}{=}23{,}426$ modes), where S$^{2}$-PINN still attains $(1.09\pm0.14)\times10^{-2}$ mean and $(1.24\pm0.30)\times10^{-2}$ variance relative $L^2$ error (\Cref{tab:scaling-sweep}).} This \add{provides} evidence for scalability on this diffusion stress test, not as a
	universal high-dimensional guarantee. It is consistent with
	\Cref{prop:conc_agnostic}, where the projection-estimation batch size depends
	logarithmically on $G$ under boundedness assumptions.

    	\begin{figure}[htbp]
		\centering
		\includegraphics[width=0.8\linewidth]{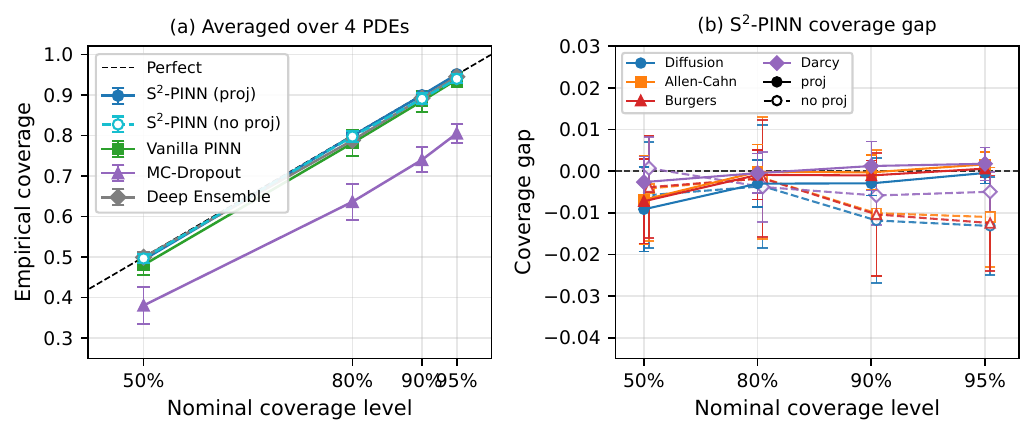}
		\caption{Reliability diagrams. (a) Empirical coverage vs.\ nominal level,
		averaged over 4 PDEs. S$^{2}$-PINN with projection and its no-projection ablation, Vanilla PINN; Deep Ensemble. The dashed diagonal indicates perfect calibration. (b) Per-PDE coverage gap, empirical minus nominal, for S$^{2}$-PINN with and without projection.}
		\label{fig:reliability}
	\end{figure}
\textbf{Calibration.}\quad
Figure~\ref{fig:reliability} evaluates predictive calibration through empirical
coverage at nominal levels $50\%$, $80\%$, $90\%$, and $95\%$. Averaged across
the four manufactured PDEs, S$^{2}$-PINN closely follows the ideal calibration
diagonal, while MC-dropout consistently under-covers. The no-projection
ablation remains reasonably calibrated at lower nominal levels but exhibits
mild under-coverage at $90\%$ and $95\%$, as made explicit by the coverage-gap
plot in Fig.~\ref{fig:reliability}(b). This suggests that the projected
residual is not merely improving mean and variance accuracy, but also helps
stabilize the pushforward uncertainty intervals. Vanilla PINN and the deep
ensemble can achieve near-nominal coverage on some controlled tasks, but their
weaker moment accuracy indicates that calibration alone is insufficient: a
useful stochastic surrogate must match both the predictive intervals and the
underlying solution statistics.

	\paragraph{External validity: non-manufactured benchmarks.}
	Manufactured solutions can provide artificially favorable optimization landscapes.
	To stress-test S$^{2}$-PINN beyond controlled settings, we evaluate on {two}
	non-manufactured forward problems with no closed-form solutions, using the
	{same} default hyperparameters as all manufactured benchmarks---no per-problem tuning.

	\textbf{(i) Stochastic Poisson source:}
    $-\Delta u = f(x,y,\mathbf{Z})$
	where the source $f$ is a KL-expanded random field ($\sigma{=}0.5$), with $u{=}
	0$ on $\partial\Omega$. \textbf{(ii) Stochastic Darcy flow:} $-\nabla\!\cdot\!
	(k(x,y,\mathbf{Z})\nabla u) = 1$ where the permeability $k$ is a log-normal field
	($\sigma{=}1.0$), with $u{=}0$ on $\partial\Omega$. Both are solved on $\Omega
	{=}[-1,1]^{2}$ and evaluated against finite-difference Monte Carlo
	references ($10{,}000$ samples, $64{\times}64$ grid). 

	\begin{table}[htbp]
		\centering
		\caption{Non-manufactured benchmark results.
		S$^{2}$-PINN uses the same default hyperparameters as all manufactured
		experiments; no per-problem tuning. Reference: FDM Monte Carlo ($N{=}10{,}
		000$, $64{\times}64$).}
		\label{tab:external-validity} \small
		\begin{tabular}{@{}llcccc@{}}
			\toprule                                                                                                                                      &                                   & \multicolumn{2}{c}{Poisson MC (source)} & \multicolumn{2}{c}{Darcy MC (coefficient)} \\
			\cmidrule(lr){3-4} \cmidrule(lr){5-6} Method                                                                                                  & \#Params                          & $\mathcal{E}_{\bar{u}}$                 & $\mathcal{E}_{\sigma^2}$                  & $\mathcal{E}_{\bar{u}}$  & $\mathcal{E}_{\sigma^2}$ \\
			\midrule S$^{2}$-PINN& 6\,114& \add{$\mathbf{0.335{\pm}0.122}$}& $\mathbf{1.39{\pm}1.10}$                  & $\mathbf{0.74{\pm}0.23}$ & $15.4{\pm}1.88$           \\
			Vanilla PINN                                                                                                                                  & 199\,937                          & $19.7{\pm}23.4$                         & $6.79{\pm}6.82$                           & $1.40{\pm}0.49$          & $5.40{\pm}1.92$          \\
			Deep Ens.\,($K{=}5$)                                                                                                                          & $5{\times}$199\,937               & $7.07\pm 1.11$& $77.8\pm0.89$& $1.04\pm0.19$& $\mathbf{4.92\pm1.46}$          \\
			\bottomrule \multicolumn{6}{@{}l@{}}{\scriptsize All errors $\times 10^{-2}$ (relative $L^{2}$).}                      \\
		\end{tabular}
	\end{table}
On the Poisson problem, S$^{2}$-PINN achieves $59\times$ lower mean error and $5\times$ lower variance error than the vanilla PINN while using $33\times$ fewer parameters. On the log-normal Darcy problem, S$^{2}$-PINN gives the best mean prediction ($0.74\%$), but the deep ensemble gives the best variance ($4.92\%$). This variance gap is a useful failure signal: low-rank gPC-aligned representations can under-resolve the nonlinear variance induced by exponential random coefficients.

\textbf{Diagnosing the failure regime before training.}\quad
The log-normal Darcy field violates the uniform boundedness (sub-Gaussian) assumption underlying \Cref{prop:conc_agnostic}: the exponential map pushes substantial stochastic energy into high-degree modes, which leaks out of a low-degree subspace $V_p$. This can be diagnosed offline, before training, with the \emph{subspace energy ratio}
\begin{equation}
\label{eq:ser}
\mathrm{SER}(p)=\frac{\sum_{|\boldsymbol{\alpha}|\le p}\|\hat a_{\boldsymbol{\alpha}}\|_{L^2}^2}{\mathbb{E}_{\mathbf{Z}}\big[\|a(\cdot,\mathbf{Z})\|_{L^2}^2\big]},
\end{equation}
i.e., the fraction of the energy of the input coefficient field $a(\mathbf{x},\mathbf{Z})$ captured by its gPC coefficients $\hat a_{\boldsymbol\alpha}$ of total degree at most $p$. A value $\mathrm{SER}(p)<95\%$ indicates that significant energy lies outside $V_p$, in which case the polynomial degree should be increased, a log-transformation applied, or the projection term disabled. 
On the log-normal Darcy problem, increasing the degree from $p{=}2$ to $p{=}4$ raises $\mathrm{SER}(p)$ above $95\%$ and reduces the S$^{2}$-PINN variance error from $15.4\%$ to $4.15\%$, below that of the deep ensemble ($4.92\%$).

\add{\textbf{Stochastic Navier--Stokes.}\quad To test a harder, nonlinear vector-valued problem, we further consider a 2D stochastic incompressible Navier--Stokes problem in vorticity form with five uniform random variables controlling phase shifts, viscosity, and two vortex amplitudes (Appendix~\ref{sec:navier-stokes}). S$^{2}$-PINN attains velocity mean errors of about $2\times10^{-3}$, $5$--$17\times$ lower than the baselines, with the lowest variance errors and the best calibration (\Cref{tab:ns-moments,tab:ns-calibration}).}
    	\begin{figure}[htbp]
		\centering
		\includegraphics[width=.9\linewidth]{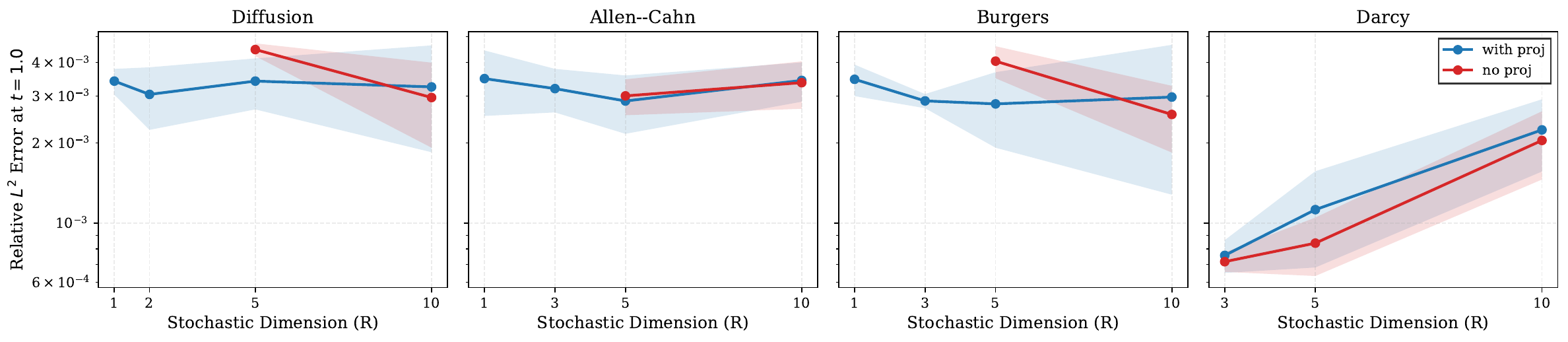}
		\caption{Regime map: relative $L^{2}$ at $t{=}1.0$ with/without gPC projection,
		across $R$ values and PDEs (diffusion, Allen--Cahn, Burgers, Darcy). Shaded
		bands show mean $\pm$ std over 5 seeds. Projection consistently helps
		for polynomial stochastic dependence (diffusion/Allen--Cahn/Burgers, $R{\le}
		5$) and becomes neutral or detrimental at higher $R$ or for non-polynomial/exponential
		$\mathbf{Z}$ (Darcy).}
		\label{fig:regime-map}
	\end{figure}

	\textbf{gPC projection is a regime-dependent regularizer.}\quad
	On diffusion, Allen--Cahn, and Burgers---all with polynomial dependence on $\mathbf{Z}$---the
	projection loss improves S$^{2}$-PINN's mean accuracy by \add{2--24\%}. On Darcy (exponential
	$\mathbf{Z}$-dependence), no-projection slightly outperforms projection on
	mean accuracy ($9.51 \times 10^{-4}$ vs.\ $1.05 \times 10^{-3}$). This is
	expected: the gPC projection penalizes residual components outside the polynomial
	subspace $V_{p}$, which is beneficial when the true solution is well-approximated
	by $V_{p}$ (polynomial $\mathbf{Z}$) but can introduce bias when it is not (exponential
	$\mathbf{Z}$). Figure~\ref{fig:regime-map} further visualizes this regime-dependent
	behavior across different $R$ values.
    
	\textbf{Projection in the vanilla PINN gives mixed results.}\quad
	Adding the gPC projection to a standard MLP-based PINN yields inconsistent behavior:
	it helps on Allen--Cahn (about 18\% mean-error reduction) but degrades
	diffusion, Burgers, and Darcy. Critically, the improvement is smaller than
	in S$^{2}$-PINN, suggesting that the \emph{separable tensor structure
	amplifies the benefit of Galerkin projection} by providing a representation that
	is structurally aligned with the gPC basis.

\textbf{PC$^{2}$ comparison.}\quad
We evaluate PC$^{2}$ on the
manufactured diffusion benchmark and the two non-manufactured forward
benchmarks using UQpy~\cite{olivier2020uqpy}, fix $p_Z{=}3$, and choose the
deterministic polynomial degree by held-out PDE/boundary residuals rather than
test moment errors.
\begin{table}[htbp]
	\centering
	\caption{{Physics-informed PCE (PC$^{2}$) comparison against S$^{2}$-PINN. 
	Errors are raw relative $L^2$ values.}}
	\label{tab:pc2-comparison}
	\scriptsize
	\resizebox{\columnwidth}{!}{%
	\begin{tabular}{lccccc}
		\toprule
		Problem & Solver & Selected $p_{\mathrm{det}}$ & \# Basis
		& $\mathcal{E}_{\bar{u}}$ & $\mathcal{E}_{\sigma^2}$ \\
		\midrule
        Poisson MC & LAR & $6,4,4$ & $15.67\pm8.02$
		& $0.068\pm0.078$
		& $0.957\pm0.068$ \\
		Poisson MC & OLS & $8,8,8$ & $2520\pm0$
		& $\best{(1.59\pm0.05)\times10^{-3}}$
		& $(1.44\pm0.39)\times10^{-2}$ \\
		Poisson MC & S$^{2}$-PINN (proj) & -- & --
		& $(3.35\pm1.22)\times$\add{$10^{-3}$}
		& $\best{(1.39\pm1.10)\times10^{-2}}$ \\
		\midrule
		Diffusion & LAR & $4,4,4$ & $3.67\pm1.15$
		& $1.103\pm0.181$
		& $9.452\pm14.640$ \\
		Diffusion & OLS & $3,3,3$ & $1120\pm0$
		& $25.798\pm0.569$
		& $262520.941\pm59229.735$ \\
		Diffusion & S$^{2}$-PINN (proj) & -- & --
		& $\best{(3.71\pm1.01)\times10^{-3}}$
		& $\best{(4.40\pm1.52)\times10^{-3}}$ \\
		\midrule
		Darcy MC & LAR & $4,4,6$ & $4.33\pm2.08$
		& $0.844\pm0.270$
		& $0.996\pm0.006$ \\
		Darcy MC & OLS & $8,8,6$ & $2202.67\pm549.64$
		& $1.483\pm0.349$
		& $1064.196\pm81.247$ \\
		Darcy MC & S$^{2}$-PINN (proj) & -- & --
		& $\best{(0.74\pm0.23)\times10^{-2}}$
		& $\best{(15.4\pm1.88)\times10^{-2}}$ \\
		\bottomrule
	\end{tabular}}
\end{table}
Table~\ref{tab:pc2-comparison} shows that physics-informed PCE is a strong but
regime-sensitive baseline. On the smooth stochastic Poisson source, anisotropic
PC$^{2}$ with OLS attains very low mean error and competitive variance error,
confirming that global polynomial surrogates are effective when the solution is
well captured by a low-order polynomial space. This strength does not transfer
to the harder regimes: PC$^{2}$ becomes unstable on the transient diffusion
benchmark and fails to recover log-normal Darcy variance, with sparse LAR
selecting a nearly constant basis and dense OLS producing large variance
errors. In contrast, S$^{2}$-PINN keeps the stochastic polynomial basis but
replaces the global deterministic polynomial surrogate with learnable
space--time features and low-rank coupling, yielding more stable
physics-informed UQ across these regimes.

\subsection{Stochastic inverse problems}
	Beyond forward UQ, the differentiable gPC structure of S$^{2}$-PINN naturally
	extends to \emph{stochastic inverse problems}, where unknown PDE parameters must
	be inferred jointly with the solution field. We demonstrate this on two
	distinct benchmarks: the stochastic diffusion equation (linear) and the stochastic
	Burgers equation (nonlinear). For both PDEs, we consider a log-normal random
	field $k(\mathbf{x},\mathbf{Z}) = \exp\bigl(\mu_{k}+ \sigma\,\sum_{r=1}^{R}\sqrt{\lambda_{r}}
	\,\varphi_{r}(\mathbf{x})\,Z_{r}\bigr)$ (representing the diffusion
	coefficient and the kinematic viscosity, respectively), treating the log-mean
	$\mu_{k}$ and field standard deviation $\sigma$ as unknown learnable parameters
	(true values: $\mu_{k}^{*}{=}0$, $\sigma^{*}{=}1$). Starting from
	deliberately poor initial guesses ($\hat{\mu}_{k}{=}0.5$,
	$\hat{\sigma}{=}0.5$), we augment the physics loss with a data-fidelity term
	$\mathcal{L}_{\mathrm{data}}= \frac{1}{N_{\mathrm{obs}}}\sum_{i=1}^{N_{\mathrm{obs}}}
	\|\hat{u}(t_{i},\mathbf{x}_{i},\mathbf{Z}_{i}) - u^{\mathrm{obs}}_{i}\|^{2}$
	computed over $N_{\mathrm{obs}}{=}1{,}000$ scattered observations, weighted by
	$w_{\mathrm{data}}{=}100$. The PDE parameters receive a $10\times$ higher
	learning rate than the model weights via separate Adam parameter groups, enabling
	the two scalars to converge at a rate commensurate with the $\sim$7\,k
	network weights. Table~\ref{tab:inverse-results} summaries the results. For diffusion, both parameters are recovered to within $5\times10^{-5}$
	of their true values, and the solution field achieves mean relative $L_{2}$
	error of $(1.29 \pm 0.36)\times10^{-4}$ and variance relative $L_{2}$ error
	of $(1.06 \pm 0.48)\times10^{-4}$. 
    For Burgers, the nonlinear advection term creates a more
	challenging optimization landscape: the log-mean is recovered to within $10^{-3}$
	and $\sigma$ to within $5\times10^{-4}$, with mean solution error
	$(2.03 \pm 0.62 )\times10^{-3}$ and variance error
	$(3.71 \pm 0.87)\times10^{-3}$---roughly one order of magnitude above
	diffusion, consistent with the harder forward-problem baseline. These
	results confirm that the gPC basis and CP factorization provide sufficient
	inductive bias for simultaneous parameter estimation and stochastic solution
	reconstruction across both linear and nonlinear regimes.
    \begin{table}[htbp]
		\centering
		\caption{Stochastic inverse problem results on the 2D diffusion and
		Burgers benchmarks. True parameters: $\mu_{k}^{*}{=}0.0$, $\sigma
		^{*}{=}1.0$. Initial guesses: $\hat{\mu}_{k}{=}0.5$,
		$\hat{\sigma}{=}0.5$.}
		\label{tab:inverse-results}
		\begin{tabular}{lcccc}
			\toprule PDE       & $|\hat{\mu}_{k}- \mu_{k}^{*}|$ & $|\hat{\sigma}- \sigma^{*}|$ & $\mathcal{E}_{\bar{u}}$ (Mean Rel.\ $L_{2}$) & $\mathcal{E}_{\sigma^2}$ (Var Rel.\ $L_{2}$) \\
			\midrule Diffusion & ${< 5\times10^{-5}}$        & ${< 5\times10^{-5}}$      & $(1.29 \pm 0.36)\times10^{-4}$& $(1.06 \pm 0.48)\times10^{-4}$               \\
			Burgers& $< 1\times10^{-3}$& $< 5\times10^{-4}$& $(2.03 \pm 0.62)\times10^{-3}$& $(3.71 \pm 0.87)\times10^{-3}$\\
			\bottomrule
		\end{tabular}
\end{table}

\section{Conclusion}
\label{sec:conclusion}
{In this work, we have presented \textbf{S$^2$-PINN} (Stochastic Separable Physics-Informed Neural Network), an
  efficient framework that extends separable coordinate architectures to forward and inverse uncertainty
  quantification for random PDEs.  By coupling learnable Gaussian spatial atoms, Fourier temporal features, and an
  orthonormal gPC stochastic basis via a low-rank CP tensor
  decomposition, S$^{2}$-PINN enables closed-form statistical moment evaluation and circumvents the curse of
  dimensionality inherent in dense space--time--stochastic grids.}
Our analysis also shows that S$^{2}$-PINN's 
gains come from these coupling three aligned
ingredients. For
polynomial-in-$\mathbf{Z}$ problems, the projection encourages residual
consistency in the stochastic modes used by the model, and
\Cref{prop:galerkin_agnostic} identifies the corresponding population objective
with a stochastic Galerkin constraint. The theory further supports the main
mechanism by \add{specializing standard tools (tensor-product density, orthogonal projection, and Monte Carlo concentration) to the proposed architecture and} establishing $L^{2}$ expressivity of finite-rank separable sums,
mini-batch concentration for projected coefficients, and conditioning control
for the learned spatial dictionary. 
Together, the theory and ablations support low-rank stochastic separability as
an effective modeling principle for physics-informed neural UQ.

\textbf{Limitations and future work.}
Although S$^{2}$-PINN shows great promise and outperforms across forward prediction,
calibration,  scaling, and inverse problems, there remain some
frontiers for making S$^{2}$-PINN a more general random PDE solver. First, S$^{2}$-PINN currently inherits the regular-domain
bias of separable coordinate architectures, and extending the method to complex
geometries will require geometry-aware dictionaries, coordinate maps, or mesh-based
variants. \add{Second, our theoretical results establish asymptotic expressivity and characterize the projection estimator under stated assumptions, but not provide a finite-$q$, finite-$G$ end-to-end solution-error guarantee: a finite rank $q$ introduces low-rank approximation error and a finite $G$ introduces stochastic truncation error.} 
{A promising way forward to overcome these limitations is to combine low-rank tensor representations with adaptive
  rank selection, learnable gPC truncation strategies to capture the distribution of random variables, and geometry-adaptive coordinate
  mappings.}

\textbf{Broader impact.}
Fast physics-informed UQ can support safer engineering design, climate and energy
modeling, and medical-device simulation by making uncertainty estimates more
accessible when repeated high-fidelity solves are expensive. The main risk is
misuse through overconfident deployment: inaccurate calibration or unresolved
variance in difficult stochastic regimes could lead to misleading confidence
estimates. For this reason, real-world use should pair S$^{2}$-PINN with
problem-specific validation, calibration checks, and conservative uncertainty
reporting.

    \section*{Acknowledgement}
The authors gratefully acknowledge support from the U.S. Department of Energy (DOE grant DE-SC0025801).

    \newpage
    \bibliographystyle{abbrvnat}

	\bibliography{ref,pinn, origin}
    \clearpage
	\appendix
	\section{Mathematical Formulation of Stochastic Elements and Tensor
	Factorization}
	\label{sec:math-formulation}

	This section details the mathematical construction of the stochastic fields,
	the chosen orthogonal polynomial basis, and the algebraic structure of the
	tensor decomposition that underpins S$^{2}$-PINN's efficiency.

	\subsection{Karhunen--Loève Expansion (KLE)}
	\label{sec:kle} 
    The stochastic coefficients (e.g.,
	the diffusion field $k(x,y,\mathbf{Z})$) are parameterized using the
	Karhunen--Loève Expansion. The KLE provides an optimal mean-square
	representation of a stochastic process by decomposing it into an infinite
	linear combination of orthogonal spatial eigenfunctions multiplied by
	uncorrelated random variables.

	For a random field defined by a specified covariance function
	$C(\mathbf{x}, \mathbf{x}')$, the KLE is given by
	\[
		\log k(\mathbf{x}, \mathbf{Z}) = \mu_{k}+ \sigma \sum_{i=1}^{\infty}\sqrt{\lambda_{i}}
		\phi_{i}(\mathbf{x}) Z_{i},
	\]
	where $\mu_{k}$ is the mean of the log-field, $\sigma$ controls the standard
	deviation (contrast), $\lambda_{i}$ and $\phi_{i}(\mathbf{x})$ are the
	eigenvalues and eigenfunctions of the covariance operator
	$\int C(\mathbf{x}, \mathbf{x}' )\phi_{i}(\mathbf{x}') d\mathbf{x}' = \lambda
	_{i}\phi_{i}(\mathbf{x})$, and $Z_{i}\sim \mathcal{N}(0,1)$ are independent standard
	normal random variables.

	In practical numerical implementations, this infinite series is truncated to
	the leading $R$ terms that capture the vast majority of the field's variance.
	By treating $\mathbf{Z}= (Z_{1}, \dots, Z_{R})^{\top}\in \mathbb{R}^{R}$ as explicit
	inputs, the stochastic PDE is transformed into a deterministic, high-dimensional
	parametric PDE over the domain $\Omega \times \mathcal{T}\times \mathbb{R}^{R}$.

	\subsection{Generalized Polynomial Chaos (gPC)}
	{The generalized Polynomial Chaos (gPC) framework provides a systematic
	approach to representing random processes through orthogonal polynomial
	expansions. For each probability distribution of the random input, there
	exists an optimal family of orthogonal polynomials that achieves
	\emph{spectral convergence} in the $L^2$ sense. This correspondence,
	known as the \emph{Wiener--Askey scheme}, is summarized in
	\Cref{tab:wiener-askey}.

	\begin{table}[htbp]
		\centering
		\caption{Wiener--Askey scheme: optimal polynomial basis for common
		distributions.}
		\label{tab:wiener-askey} \small
		\begin{tabular}{llll}
			\toprule
			Distribution & Support & Polynomial Family & Weight $w(z)$ \\
			\midrule
			Gaussian $\mathcal{N}(0,1)$ & $(-\infty, \infty)$ & Hermite & $e^{-z^2/2}/\sqrt{2\pi}$ \\
			Uniform $\mathcal{U}(-1,1)$ & $[-1, 1]$ & Legendre & $1/2$ \\
			Exponential $\mathrm{Exp}(1)$ & $[0, \infty)$ & Laguerre & $e^{-z}$ \\
			Beta $\mathrm{Beta}(\alpha,\beta)$ & $[0, 1]$ & Jacobi $P_n^{(\alpha,\beta)}$ & $z^{\beta-1}(1{-}z)^{\alpha-1}$ \\
			Gamma $\Gamma(\alpha,1)$ & $[0, \infty)$ & Gen.\ Laguerre $L_n^{(\alpha)}$ & $z^{\alpha-1}e^{-z}$ \\
			\bottomrule
		\end{tabular}
	\end{table}

	In our implementation, S$^{2}$-PINN automatically selects the polynomial
	family from this table based on the marginal distribution of each random
	input $Z_i$. For problems with mixed distributions (e.g., some $Z_i$
	Gaussian and others uniform), the multivariate basis is formed as a tensor
	product of the corresponding 1D families.}



	\paragraph{Multivariate basis and total-degree truncation.}
	For the $R$-dimensional vector $\mathbf{Z}$, the multidimensional gPC basis is
	constructed via the tensor product of 1D polynomials, indexed by a
	multi-index
	$\bm{\alpha}= (\alpha_{1}, \dots, \alpha_{R}) \in \mathbb{N}_{0}^{R}$
	\[
		\Psi_{\bm{\alpha}}(\mathbf{Z}) = \prod_{i=1}^{R}\psi_{\alpha_i}(Z_{i}),
	\]
	{where $\psi_{\alpha_i}$ denotes the degree-$\alpha_i$ polynomial from
	the family corresponding to the marginal distribution of $Z_i$
	(e.g., $P_{\alpha_i}$ for uniform).}
	To avoid an uncontrolled combinatorial explosion, we employ a \emph{total
	degree truncation} scheme, limiting the basis to all polynomials where the
	sum of degrees $|\bm{\alpha}| = \sum_{i=1}^{R}\alpha_{i}\le p$. This truncation
	drastically sparsifies the tensor product space. By a standard combinatorial
	``stars and bars'' argument, finding the number of non-negative integer tuples
	$(\alpha_{1}, \dots, \alpha_{R})$ satisfying $\sum_{i=1}^{R}\alpha_{i}\le p$
	is equivalent to distributing up to $p$ indistinguishable degrees among $R$ distinct
	variables. By introducing a ``slack'' variable to absorb the remaining
	$p - \sum_{i=1}^{R}\alpha_{i}$ degrees, this equals the number of ways to distribute
	exactly $p$ items among $R+1$ bins, yielding exactly $G = \binom{R+p}{p}$
	active gPC modes. {To illustrate how this basis is explicitly constructed, consider a 2D random space ($R{=}2$) with maximum degree $p{=}3$ and uniform inputs. The total number of valid modes is $G = \binom{2+3}{3} = 10$. The valid multi-indices $(\alpha_1, \alpha_2)$ and their corresponding 2D Legendre polynomials $\Psi_{(\alpha_1, \alpha_2)}(Z_1, Z_2) = H_{\alpha_1}(Z_1)H_{\alpha_2}(Z_2)$ are systematically enumerated by their total degree $|\bm{\alpha}|$:
	\begin{itemize}[itemsep=2pt, leftmargin=*]
	    \item \textbf{Degree 0:} $(0,0) \Rightarrow 1$
	    \item \textbf{Degree 1:} $(1,0) \Rightarrow Z_1$,\ \ $(0,1) \Rightarrow Z_2$
	    \item \textbf{Degree 2:} $(2,0) \Rightarrow \tfrac{1}{2}(3Z_1^{2}-1)$,\ \ $(1,1) \Rightarrow Z_1Z_2$,\ \ $(0,2) \Rightarrow \tfrac{1}{2}(3Z_2^{2}-1)$
	    \item \textbf{Degree 3:} $(3,0) \Rightarrow \tfrac{1}{2}(5Z_1^{3}-3Z_1)$,\ \ $(2,1) \Rightarrow \tfrac{1}{2}(3Z_1^{2}-1)Z_2$,\ \ $(1,2) \Rightarrow Z_1\tfrac{1}{2}(3Z_2^{2}-1)$,\ \ $(0,3) \Rightarrow \tfrac{1}{2}(5Z_2^{3}-3Z_2)$
	\end{itemize}
	This targeted truncation avoids the exponential blow-up of a full tensor grid. For higher dimensions such as $R{=}10$ and $p{=}3$, we only need $G{=}286$ modes instead of the naïve $4^{10}\approx 10^{6}$ grid points required by full tensor products.}

	\subsection{Canonical Polyadic  (CP) Tensor Decomposition vs.\ Tensor Train (TT)}
	The core architecture of S$^{2}$-PINN decomposes the multidimensional
	solution $u(\mathbf{x}, t, \mathbf{Z})$ into a rank-$q$ Canonical Polyadic  (CP) tensor product.

	If we represent the network's internal features as an abstract 3-way tensor
	$\mathcal{W}\in \mathbb{R}^{M \times M_t \times G}$, computing the full dense
	tensor would require $\mathcal{O}(M M_{t}G)$ parameters. The CP
	decomposition factors this tensor into a sum of $q$ rank-1 components
	\[
		u(\mathbf{x}, t, \mathbf{Z}) = \sum_{r=1}^{q}w_{r}\left( \mathbf{A}_{r}^{\top}
		\varphi(\mathbf{x}) \right) \left( \mathbf{B}_{r}^{\top}\psi(t) \right) \left
		( \mathbf{D}_{r}^{\top}\Phi(\mathbf{Z}) \right),
	\]
	where $\mathbf{A}\in \mathbb{R}^{M \times q}$,
	$\mathbf{B}\in \mathbb{R}^{M_t \times q}$, and $\mathbf{D}\in \mathbb{R}^{G
	\times q}$ are factor matrices, and $\mathbf{w}\in \mathbb{R}^{q}$ is a
	learnable scaling vector. This reduces the parameter complexity from multiplicative
	to additive: $\mathcal{O}(q(M + M_{t}+ G))$.

	\textbf{Why CP instead of Tensor Train (TT)?} While Tensor Train is a powerful
	decomposition for ultra-high dimensional spaces ($D \gg 10$), our functional
	representation operates fundamentally on a 3-way tensor (Space $\times$ Time
	$\times$ Stochastic), because the spatial coordinates $(x,y)$ are encoded together
	by the spatial dictionary and the $R$-dimensional $\mathbf{Z}$ is encoded together
	by the gPC basis. For a 3-way tensor, CP provides several decisive advantages
	over TT for PINN optimization:
	\begin{enumerate}
		\item \textbf{Decoupled Autodiff:} In CP, the spatial derivative operators
			(e.g., the Laplacian $\Delta$) distribute purely across the sum and act
			\emph{only} on the spatial terms
			$\mathbf{A}_{r}^{\top}\varphi(\mathbf{x})$. The time and stochastic components
			act as constants. This means the computational cost of automatic
			differentiation does not scale with $G$. In a TT format, computing
			spatial derivatives requires backpropagating through a chain of
			matrix multiplications, which introduces significant memory overhead
			and stiffness.

		\item \textbf{Explicit Galerkin Coefficients:} Our hybrid loss requires
			computing the exact Galerkin projection of the PDE residual onto the
			gPC modes $\Phi_{g}(\mathbf{Z})$. In the CP formulation, the
			macroscopic scalar coefficient attached to any gPC mode $\Phi_{g}$
			is explicitly given by the scalar sum
			$\sum_{r=1}^{q}w_{r}(\mathbf{A}_{r}^{\top}\varphi(\mathbf{x})) (\mathbf{B}
			_{r}^{\top}\psi(t)) D_{g,r}$. In a TT format, the stochastic
			subspace is folded inside a core matrix, making it algebraically opaque
			and computationally expensive to extract exact projection coefficients.
	\end{enumerate}

	\section{Theoretical Analysis}
	\label{sec:theory}

	{In this section, we collect PDE-agnostic results that support the main
	design choices in S$^{2}$-PINN: finite-rank separable expressivity,
	projection--Galerkin consistency, mini-batch stability of projected residual
	coefficients, orthogonality conditioning regularization, and the algebraic
	efficiency of the CP coupling. 
Instead of proving optimization convergence or finite-rank rates for a specific PDE, these results formalize the mechanism used by the architecture and loss.}
	\add{We emphasize that this section does not introduce new abstract functional-analytic tools: the arguments rely on standard results (density of algebraic tensor products, orthogonal projection and Parseval identities, and Monte Carlo concentration). Their method-specific content is twofold. (i) \emph{Expressive completeness}: \Cref{prop:density_agnostic} shows that a low-rank CP coupling of spatio-temporal features with a gPC basis preserves universal approximation in $L^2$, so the factorization introduces no representation bottleneck in the limit. (ii) \emph{Error decomposition}: Theorems~\ref{prop:density_agnostic}--\ref{prop:mc_agnostic} separate the total error into CP-rank approximation error (controlled by $q$), gPC truncation error (controlled by $p$), and Monte Carlo sampling error (controlled by the batch size $B$), which guides the balance between network capacity, spectral degree, and sampling budget.}
	\begin{addblock}
	\begin{remark}[Scope of the guarantees]
	The results below are asymptotic or estimator-level statements. They do not provide a finite-$q$, finite-$G$ end-to-end bound on the solution error of a trained model: a finite rank $q$ incurs low-rank approximation error, a finite $G$ incurs stochastic truncation error, and non-convex optimization under a finite budget is not covered.
	\end{remark}
	\end{addblock}

	\subsection{Setup and notation}
	Let $(\mathcal{T},\mu_{t})$, $(\Omega,\mu_{x})$, and $(\mathbf{Z},\mathbb{P}
	_{\mathbf{Z}})$ be measure spaces for time, spatial, and stochastic,
	respectively, and $L^{2}(\mathcal{T})$, $L^{2}(\Omega)$, $L^{2}(\mathbf{Z}
	)$ are corresponding $L^{2}$ spaces. Define the product space
	\[
    \mathcal{H}:=L^{2}(\mathcal{T})\otimes L^{2}(\Omega)\otimes L^{2}(\mathbf{Z}
	)\]
	with norm $\|\cdot\|_{L^2(\mathcal{T}\times\Omega\times\mathbf{Z})}$. Let $\{
	T_{k}\}_{k=1}^{M_t}\subset L^{2}(\mathcal{T})$,
	$\{\Phi_{i}\}_{i=1}^{M}\subset L^{2}(\Omega)$, $\{\Psi_{g}\}_{g=1}^{G}\subset
	L^{2}(\mathbf{Z})$ be (not necessarily orthonormal) feature families. {Throughout, we make explicit the \emph{mild density conditions} invoked in the main body: the feature families are total in their respective $L^{2}$ spaces, i.e.\ $\overline{\mathrm{span}}\{T_{k}\}_{k\ge 1}=L^{2}(\mathcal{T})$, $\overline{\mathrm{span}}\{\Phi_{i}\}_{i\ge 1}=L^{2}(\Omega)$, and $\overline{\mathrm{span}}\{\Psi_{g}\}_{g\ge 1}=L^{2}(\mathbf{Z})$, so that the closures $\mathcal{H}_{\mathrm{time}}$, $\mathcal{H}_{\mathrm{space}}$, and $\mathcal{H}_{\mathrm{stoc}}$ introduced in \Cref{prop:density_agnostic} below coincide with the full $L^{2}$ spaces. This assumption is harmless for the bases used in our experiments (Hermite/Legendre polynomials in $\mathbf{Z}$ are total in $L^{2}(\mathbf{Z})$, and our spatial Gaussian dictionary becomes total in $L^{2}(\Omega)$ in the limit $M\to\infty$).} For a
	generic signal $f:\mathcal{T}\times\Omega\times\mathbf{Z}\to\mathbb{R}$, define
	its (strong-form) "residual" symbolically as $r:=\mathcal{R}[f]$ where $\mathcal{R}$
	is any measurable operator. None of the results below require a PDE
	structure of $\mathcal{R}$.
    	\subsection{Expressivity of separable sums}
	Let
	$\mathcal{H}_{\mathrm{space}}=\overline{\mathrm{span}}\{\Phi_{i}\}_{i\ge 1}\subset
	L^{2}(\Omega)$, $\mathcal{H}_{\mathrm{time}}=\overline{\mathrm{span}}\{T_{k}\}
	_{k\ge 1}\subset L^{2}(\mathcal{T})$, and $\mathcal{H}_{\mathrm{stoc}}=\overline{\mathrm{span}}\{\Psi_{g}\}_{g\ge 1}\subset
	L^{2}(\mathbf{Z})$
	be dense subspaces (closures in $L^{2}$). Define the rank-$q$ separable class
	\[
        \mathcal{F}_{q}:=\Big\{\sum_{r=1}^{q}
        {\tau^{(r)}(t)\,\phi^{(r)}(\mathbf{x})\,\psi^{(r)}(\mathbf{z})}:\
 {\tau^{(r)}\in\mathcal{H}_{\mathrm{time}},\ }\phi^{(r)}\in\mathcal{H}
		_{\mathrm{space}},\ 
        \psi^{(r)}
		\in\mathcal{H}_{\mathrm{stoc}}\Big\}.
	\]

	\begin{theorem}[Density of finite-rank separable sums]
		\label{prop:density_agnostic} The union $\bigcup_{q\ge 1}\mathcal{F}_{q}$
		is dense in the Hilbert tensor product $\mathcal{H}_{\mathrm{time}}\otimes
		\mathcal{H}_{\mathrm{space}}\otimes\mathcal{H}_{\mathrm{stoc}}\subset L^{2}
		(\mathcal{T}\times\Omega\times\mathbf{Z})$.
	\end{theorem}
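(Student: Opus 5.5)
The approach is to use the standard construction of the Hilbert tensor product: it is the completion of the algebraic tensor product, whose elements are exactly finite sums of elementary tensors. First I would fix the identification. For $\tau\in\mathcal{H}_{\mathrm{time}}$, $\phi\in\mathcal{H}_{\mathrm{space}}$, $\psi\in\mathcal{H}_{\mathrm{stoc}}$, the elementary tensor $\tau\otimes\phi\otimes\psi$ is realized as the function $(t,\mathbf{x},\mathbf{z})\mapsto\tau(t)\phi(\mathbf{x})\psi(\mathbf{z})$. By Fubini--Tonelli on the product measure $\mu_t\otimes\mu_x\otimes\mathbb{P}_{\mathbf{Z}}$, this function lies in $L^2(\mathcal{T}\times\Omega\times\mathbf{Z})$ with norm $\|\tau\|\,\|\phi\|\,\|\psi\|$. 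Inner products of elementary tensors factor the same way. Hence the map from the algebraic tensor product into $L^2$ is isometric, and its closure is precisely what the statement calls $\mathcal{H}_{\mathrm{time}}\otimes\mathcal{H}_{\mathrm{space}}\otimes\mathcal{H}_{\mathrm{stoc}}\subset L^2$. Under this identification, $\bigcup_{q}\mathcal{F}_q$ is exactly the image of the algebraic tensor product $\mathcal{H}_{\mathrm{time}}\odot\mathcal{H}_{\mathrm{space}}\odot\mathcal{H}_{\mathrm{stoc}}$. Every finite sum of elementary tensors with $q$ terms lies in $\mathcal{F}_q$, and conversely every element of $\mathcal{F}_q$ is such a sum.

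The key step is then a direct approximation argument. I would make it concrete so it does not just cite the definition. Choose orthonormal bases $\{e_k\}$, $\{f_i\}$, $\{g_j\}$ of the three closed subspaces; these are separable when the underlying $L^2$ spaces are. The products $e_k\otimes f_i\otimes g_j$ form an orthonormal system, by the factorization of inner products. This system is complete in the closed span, because the closed span contains all elementary tensors: each factor can be expanded in its basis, and the trilinear map is continuous with
\[
\|\tau\otimes\phi\otimes\psi-\tau'\otimes\phi'\otimes\psi'\|\le \|\tau-\tau'\|\,\|\phi\|\,\|\psi\|+\|\tau'\|\,\|\phi-\phi'\|\,\|\psi\|+\|\tau'\|\,\|\phi'\|\,\|\psi-\psi'\|.
\]
So for any $u$ in the tensor product space, Parseval's identity gives
\[
u=\sum_{k,i,j}c_{kij}\,e_k\otimes f_i\otimes g_j
\]
with $\sum|c_{kij}|^2<\infty$. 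Truncating to $k,i,j\le N$ gives an element of $\mathcal{F}_{N^3}$ whose error is the tail sum, which tends to zero. In fact one can use rank at most $N^2$, by grouping $\sum_j c_{kij}g_j$ into a single stochastic factor for each pair $(k,i)$. This proves density.

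Finally I would remark that under the totality assumptions of the setup, the three closures equal the full $L^2$ spaces. The tensor product $L^2(\mathcal{T})\otimes L^2(\Omega)\otimes L^2(\mathbf{Z})$ equals $L^2$ of the product space for $\sigma$-finite measures. Products of orthonormal bases are complete there: if $h\perp$ all $e_k f_i g_j$, then iterated Fubini applied to the partial integrals shows $h=0$ a.e. This recovers the main-text corollary. The main obstacle is this last completeness step, identifying the Hilbert tensor product with the full product $L^2$ space. It requires $\sigma$-finiteness and a careful Fubini argument on the partial integrals $\int h(t,\mathbf{x},\mathbf{z})e_k(t)\,d\mu_t$. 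For the theorem as literally stated, however, only the isometric embedding and the continuity estimate above are needed.
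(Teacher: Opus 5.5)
Your proposal is correct and matches the paper's proof. Both identify $\bigcup_{q}\mathcal{F}_q$ with the algebraic tensor product, realized isometrically inside $L^2$ via Fubini--Tonelli; both get density because the Hilbert tensor product is by definition its $L^2$-closure; and both then use totality plus the $\sigma$-finite Fubini identification to reach all of $L^2(\mathcal{T}\times\Omega\times\mathbf{Z})$. Your orthonormal-basis truncation is a harmless extra that gives an explicit rank-$N^2$ approximant, but its completeness step rests on that same closure definition, so neither it nor the continuity estimate is needed for the literal statement (and the three subspaces are automatically separable, being closed spans of countable families).
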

\begin{proof}
Define the algebraic tensor product of these three spaces, denoted as 
\[
\mathcal{V} = \mathcal{H}_{\mathrm{time}} \otimes_{\text{alg}} \mathcal{H}_{\mathrm{space}} \otimes_{\text{alg}} \mathcal{H}_{\mathrm{stoc}}.
\]
The elements of the algebraic tensor product $\mathcal{V}$ are strictly defined as formal finite linear combinations. That is, any element $f \in \mathcal{V}$ can be expressed as
\[
    f(t, \mathbf{x}, \mathbf{z}) = \sum_{r=1}^{q} \tau^{(r)}(t) \, \phi^{(r)}(\mathbf{x}) \, \psi^{(r)}(\mathbf{z}),
\]
where $q \in \mathbb{N}$ is a finite positive integer, and $\tau^{(r)} \in \mathcal{H}_{\mathrm{time}}$, $\phi^{(r)} \in \mathcal{H}_{\mathrm{space}}$, $\psi^{(r)} \in \mathcal{H}_{\mathrm{stoc}}$. 

By definition, the function class $\mathcal{F}_q$ of separable functions of rank $q$ precisely describes the set of functions formed by the addition of at most $q$ pure tensors. Therefore, taking the union over all possible finite ranks $q \ge 1$ exactly yields the entire algebraic tensor product space $\bigcup_{q \ge 1} \mathcal{F}_q = \mathcal{V}.$

For two tensors $u_1 \otimes v_1 \otimes w_1$ and $u_2 \otimes v_2 \otimes w_2$, their inner product is defined as the product of the inner products within their respective spaces
\[
    \langle u_1 \otimes v_1 \otimes w_1, u_2 \otimes v_2 \otimes w_2 \rangle = \langle u_1, u_2 \rangle_{\mathcal{H}_{\mathrm{time}}} \langle v_1, v_2 \rangle_{\mathcal{H}_{\mathrm{space}}} \langle w_1, w_2 \rangle_{\mathcal{H}_{\mathrm{stoc}}}.
\]
Since all three spaces are subspaces of $L^2$ spaces, the aforementioned inner product is equivalent to the standard $L^2$ inner product under the product measure $\mu_t \otimes \mu_x \otimes \mathbb{P}_{\mathbf{Z}}$. 

Extending this inner product bilinearly to the entire algebraic tensor product $\mathcal{V}$ induces a norm $\|\cdot\|_{L^2(\mathcal{T} \times \Omega \times {\mathbf{Z}})}$. We denote the Hilbert tensor product as $\mathcal{H}_{\mathrm{time}} \otimes \mathcal{H}_{\mathrm{space}} \otimes \mathcal{H}_{\mathrm{stoc}}$. By definition, it is the completion (or closure) of the algebraic tensor product $\mathcal{V}$ with respect to this induced $L^2$ norm
\[
    \mathcal{H}_{\mathrm{time}} \otimes \mathcal{H}_{\mathrm{space}} \otimes \mathcal{H}_{\mathrm{stoc}} := \overline{\mathcal{V}}^{\|\cdot\|_{L^2}}.
\]

According to the fundamental principles of topology and metric spaces, the closure of a set is the smallest closed set containing it. Furthermore, a subset $\mathcal{V}$ is dense in a space $\mathcal{H}$ if and only if the closure of $\mathcal{V}$ is exactly $\mathcal{H}$. 

Given that the Hilbert tensor product is intrinsically the closure of the algebraic tensor product, we have
\[
    \overline{\bigcup_{q \ge 1} \mathcal{F}_q}^{\|\cdot\|_{L^2}} = \overline{\mathcal{V}}^{\|\cdot\|_{L^2}} = \mathcal{H}_{\mathrm{time}} \otimes \mathcal{H}_{\mathrm{space}} \otimes \mathcal{H}_{\mathrm{stoc}}.
\]
This establishes that for any arbitrary function $f$ in the Hilbert tensor product space and any given error tolerance $\epsilon > 0$, there necessarily exists a sufficiently large finite integer $q$ and an approximant $\tilde{f} = \sum_{r=1}^{q}\phi^{(r)}\tau^{(r)}\psi^{(r)} \in \mathcal{F}_q$ such that:
\[
    \|f - \tilde{f}\|_{L^2(\mathcal{T} \times \Omega \times {\mathbf{Z}})} < \epsilon.
\]
Consequently, the union of finite-rank separable functions, $\bigcup_{q \ge 1} \mathcal{F}_q$, is dense in the Hilbert tensor product space $\mathcal{H}_{\mathrm{time}} \otimes \mathcal{H}_{\mathrm{space}} \otimes \mathcal{H}_{\mathrm{stoc}}$.

Finally, under the mild density conditions stated in \Cref{sec:theory}, the closures $\mathcal{H}_{\mathrm{time}}$, $\mathcal{H}_{\mathrm{space}}$, $\mathcal{H}_{\mathrm{stoc}}$ coincide with the full $L^{2}$ spaces, so the Hilbert tensor product satisfies
\[
    \mathcal{H}_{\mathrm{time}} \otimes \mathcal{H}_{\mathrm{space}} \otimes \mathcal{H}_{\mathrm{stoc}}
    = L^{2}(\mathcal{T}) \otimes L^{2}(\Omega) \otimes L^{2}(\mathbf{Z})
    \cong L^{2}(\mathcal{T} \times \Omega \times \mathbf{Z}),
\]
where the final isomorphism is the standard one for $\sigma$-finite product measures (a consequence of Fubini's theorem). Combining this identification with the density just established yields
\[
    \overline{\bigcup_{q \ge 1} \mathcal{F}_q}^{\|\cdot\|_{L^{2}}} = L^{2}(\mathcal{T} \times \Omega \times \mathbf{Z}).
\]
\end{proof}

	\subsection{Projection in the random space}
	Assume $\{\Psi_{g}\}_{g=1}^{G}$ are orthonormal in $L^{2}(\mathbf{Z})$ and
	let $V_{p}=\mathrm{span}\{\Psi_{g}\}_{g=1}^{G}$.
	\begin{theorem}[Projection–Galerkin equivalence in $L^{2}(\mathbf{Z})$]
		\label{prop:galerkin_agnostic} For any measurable $h:\mathbf{Z}\to\mathbb{R}$,
		\[
			\begin{aligned}
				\sum_{g=1}^{G}\big(\mathbb{E}[\,h(\mathbf{Z})\,\Psi_{g}(\mathbf{Z})\,]\big)^{2} & =\big\|\mathsf{P}_{V_p}h\big\|_{L^2(\mathbf{Z})}^{2}, \\
				\mathbb{E}[\,h\Psi_{g}\,]=0\ \forall g                                          & \iff \mathsf{P}_{V_p}h=0.
			\end{aligned}
		\]
	\end{theorem}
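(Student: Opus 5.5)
Since $\{\Psi_g\}_{g=1}^{G}$ is an orthonormal system spanning the finite-dimensional subspace $V_p$, the plan is to write down the orthogonal projection explicitly in this basis and read off both claims from Parseval's identity in $V_p$. One point needs care first. The statement says ``any measurable $h$'', but $\mathbb{E}[h\Psi_g]$ and $\mathsf{P}_{V_p}h$ only make sense when $h\in L^2(\mathbf{Z})$, or at least when every product $h\Psi_g$ is integrable. We will therefore assume $h\in L^{2}(\mathbf{Z})$ throughout. In the application, $h=r_\theta(t,\mathbf{x},\cdot)$, and this assumption is the natural one there. By Cauchy--Schwarz, $|\mathbb{E}[h\Psi_g]|\le\|h\|_{L^2}\|\Psi_g\|_{L^2}=\|h\|_{L^2}<\infty$, so every coefficient is well defined.

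\textbf{Step 1 (explicit projection).} Set $c_g:=\langle h,\Psi_g\rangle_{L^2(\mathbf{Z})}=\mathbb{E}[h(\mathbf{Z})\Psi_g(\mathbf{Z})]$ and define $\tilde h:=\sum_{g=1}^{G}c_g\Psi_g\in V_p$. For each $g'$, orthonormality gives $\langle h-\tilde h,\Psi_{g'}\rangle=c_{g'}-\sum_g c_g\delta_{gg'}=0$. Hence $h-\tilde h\perp V_p$. Since $V_p$ is finite-dimensional and therefore closed, the orthogonal decomposition $h=\tilde h+(h-\tilde h)$ is unique, so $\mathsf{P}_{V_p}h=\tilde h$.

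\textbf{Step 2 (norm identity).} Expand the norm bilinearly and use orthonormality:
\[
\|\mathsf{P}_{V_p}h\|_{L^2(\mathbf{Z})}^2=\Big\langle \sum_g c_g\Psi_g,\sum_{g'}c_{g'}\Psi_{g'}\Big\rangle=\sum_{g,g'}c_gc_{g'}\delta_{gg'}=\sum_{g=1}^{G}c_g^2 .
\]
This is the first displayed identity.

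\textbf{Step 3 (equivalence).} Suppose $\mathbb{E}[h\Psi_g]=0$ for all $g$. Then the right-hand side of Step 2 vanishes, so $\|\mathsf{P}_{V_p}h\|=0$ and $\mathsf{P}_{V_p}h=0$ in $L^2$, that is, almost surely. Conversely, if $\mathsf{P}_{V_p}h=0$, Step 2 gives $\sum_g c_g^2=0$, which forces every $c_g=0$.

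\textbf{Main obstacle.} None of these steps is technically hard; the only real subtlety is the integrability issue already flagged. If one insists on $h$ merely measurable, we would instead assume $h\Psi_g\in L^1$ for each $g$. The coefficient identity then still defines $\tilde h$, but $\mathsf{P}_{V_p}$ is no longer an $L^2$ projection. So we will state the $L^2$ assumption explicitly and interpret $\mathsf{P}_{V_p}$ as the $L^2(\mathbf{Z})$-orthogonal projection.
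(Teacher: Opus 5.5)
Your proposal is correct and follows essentially the same route as the paper. The paper invokes the projection theorem to write $\mathsf{P}_{V_p}h=\sum_g\alpha_g\Psi_g$ and then identifies $\alpha_g=\mathbb{E}[h\Psi_g]$, whereas you build $\sum_g c_g\Psi_g$ and verify it is the projection; both then use Parseval and the sum-of-squares argument for the equivalence, and your restriction to $h\in L^2(\mathbf{Z})$ is exactly what the paper's proof assumes implicitly (it opens with ``for any function $h\in L^2(\mathbf{Z})$''), even though the statement says ``measurable''.
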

	\noindent
    \begin{proof}
According to the Orthogonal Projection Theorem in Hilbert spaces, for any function $h \in L^2(\mathbf{Z})$ can be uniquely decomposed into an orthogonal sum:
\[
    h = h_{\parallel} + h_{\perp},
\]
where $h_{\parallel} = \mathsf{P}_{V_p}h \in V_p$ is the orthogonal projection of $h$ onto the subspace $V_p$, and $h_{\perp} \in V_p^{\perp}$ is the residual component such that $\langle h_{\perp}, v \rangle_{L^2(\mathbf{Z})} = 0$ for all $v \in V_p$. Given that $V_p = \mathrm{span}\{\Psi_{g}\}_{g=1}^{G}$, the projection $h_{\parallel}$ can be expressed as a linear combination of the basis functions
\[
    h_{\parallel} = \sum_{g=1}^{G} \alpha_{g} \Psi_{g},
\]
where $\alpha_{g} \in \mathbb{R}$ denote the expansion coefficients.

\paragraph{Part I: Proof of the First Equality.}
In the $L^2(\mathbf{Z})$ space, the inner product is equivalent to the expectation under the probability measure $\mathbb{P}_{\mathbf{Z}}$, i.e., $\langle u, v \rangle_{L^2} = \mathbb{E}[uv]$. Taking the inner product of $h$ with the $g'$-th basis function $\Psi_{g'}$, we have
\[
    \mathbb{E}[\,h\,\Psi_{g'}\,] = \langle h, \Psi_{g'} \rangle_{L^2(\mathbf{Z})}.
\]
Substituting the orthogonal decomposition $h = \sum_{g=1}^{G} \alpha_{g} \Psi_{g} + h_{\perp}$ into the expression above yields
\[
    \mathbb{E}[\,h\,\Psi_{g'}\,] = \left\langle \sum_{g=1}^{G} \alpha_{g} \Psi_{g} + h_{\perp}, \Psi_{g'} \right\rangle_{L^2(\mathbf{Z})}=\sum_{g=1}^{G} \alpha_{g} \langle \Psi_{g}, \Psi_{g'} \rangle_{L^2(\mathbf{Z})} + \langle h_{\perp}, \Psi_{g'} \rangle_{L^2(\mathbf{Z})}.
\]
We now invoke the following properties:
\begin{enumerate}
    \item Since $h_{\perp} \perp V_p$ and $\Psi_{g'} \in V_p$, it follows that $\langle h_{\perp}, \Psi_{g'} \rangle_{L^2(\mathbf{Z})} = 0$.
    \item The basis $\{\Psi_{g}\}$ is orthonormal in $L^2(\mathbf{Z})$, meaning $\langle \Psi_{g}, \Psi_{g'} \rangle_{L^2(\mathbf{Z})} = \delta_{g,g'}$, where $\delta_{g,g'}$ is the Kronecker delta.
\end{enumerate}
Substituting these properties, all terms in the summation vanish except for $g = g'$, leading to $\mathbb{E}[\,h\,\Psi_{g'}\,] = \alpha_{g'}$.
This confirms that the projection coefficients are precisely the expectations (inner products) of $h$ with the basis functions. 

Next, we compute the squared $L^2$-norm of the projection $h_{\parallel} = \mathsf{P}_{V_p}h$,
\[
    \big\|\mathsf{P}_{V_p}h\big\|_{L^2(\mathbf{Z})}^{2} = \left\| \sum_{g=1}^{G} \alpha_{g} \Psi_{g} \right\|_{L^2(\mathbf{Z})}^{2}.
\]
Applying the orthonormality of $\{\Psi_{g}\}$ (a generalized form of the Pythagorean theorem or Parseval’s identity), we obtain
\[
    \big\|\mathsf{P}_{V_p}h\big\|_{L^2(\mathbf{Z})}^{2} = \sum_{g=1}^{G} \alpha_{g}^{2}.
\]
Substituting $\alpha_{g} = \mathbb{E}[\,h(\mathbf{Z})\,\Psi_{g}(\mathbf{Z})\,]$ into the equation above directly yields the first desired identity
\[
    \sum_{g=1}^{G}\big(\mathbb{E}[\,h(\mathbf{Z})\,\Psi_{g}(\mathbf{Z})\,]\big)^{2} = \big\|\mathsf{P}_{V_p}h\big\|_{L^2(\mathbf{Z})}^{2}.
\]

\paragraph{Part II: Proof of the Equivalence.}
We aim to prove that $\mathbb{E}[\,h\Psi_{g}\,]=0$ for all $g \in \{1, \dots, G\}$ if and only if $\mathsf{P}_{V_p}h=0$. From Part I, we have established
\[
    \big\|\mathsf{P}_{V_p}h\big\|_{L^2(\mathbf{Z})}^{2} = \sum_{g=1}^{G}\big(\mathbb{E}[\,h\Psi_{g}\,]\big)^{2}.
\]

\textit{Sufficiency ($\implies$):} If $\mathbb{E}[\,h\Psi_{g}\,] = 0$ for all $g \in \{1, \dots, G\}$, then every term in the summation on the right-hand side is zero. Consequently, $\big\|\mathsf{P}_{V_p}h\big\|_{L^2(\mathbf{Z})}^{2} = 0$. By the definiteness property of the norm, it follows that $\mathsf{P}_{V_p}h = 0$.

\textit{Necessity ($\impliedby$):} Conversely, if $\mathsf{P}_{V_p}h = 0$, then its squared norm must be zero, $\big\|\mathsf{P}_{V_p}h\big\|_{L^2(\mathbf{Z})}^{2} = 0$. This implies $    \sum_{g=1}^{G}\big(\mathbb{E}[\,h\Psi_{g}\,]\big)^{2} = 0.
$
Since this is a sum of non-negative real terms (squares), the sum equals zero if and only if each individual term is zero. Therefore, it must hold that $\mathbb{E}[\,h\Psi_{g}\,] = 0$ for all $g \in \{1, \dots, G\}$.
\end{proof}

{\subsection{Mini-batch estimator and uniform concentration}
The two theorems in this subsection control the Monte-Carlo error incurred when the projection coefficients $\mathbb{E}[h\Psi_{g}]$ are estimated from a finite collocation batch. The first establishes unbiasedness and the standard $1/B$ variance scaling and the second upgrades this to a high-probability uniform bound across all $G$ basis functions, which is the form used in our training analysis.}

\begin{theorem}[Mini-batch estimator: unbiasedness and variance]
		\label{prop:mc_agnostic} 
        Let $\widehat{m}_{g}=\frac{1}{B}\sum_{b=1}^{B}h(
		\mathbf{Z}_{b})\Psi_{g}(\mathbf{Z}_{b})$ with i.i.d.\ $\mathbf{Z}_{b}\sim
		\mathbb{P}_{\mathbf{Z}}$. Then $\mathbb{E}\widehat{m}_{g}=\mathbb{E}[h\Psi
		_{g}]$ and $\mathrm{Var}(\widehat{m}_{g})=\mathrm{Var}(h\Psi_{g})/B$.
		Moreover, for $\widehat{\mathbf{m}}=(\widehat{m}_{g})_{g=1}^{G}$ and $\mathbf{m}
		=(\mathbb{E}[h\Psi_{g}])_{g=1}^{G}$,
		$\mathbb{E}\|\widehat{\mathbf{m}}\|_{2}^{2}=\|\mathbf{m}\|_{2}^{2}+\frac{1}{B}
		\sum_{g=1}^{G}\mathrm{Var}(h\Psi_{g})$.
	\end{theorem}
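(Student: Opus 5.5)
The proof is a direct computation with i.i.d.\ sample means. Fix $g$ and set $Y_b^{(g)} := h(\mathbf{Z}_b)\Psi_g(\mathbf{Z}_b)$ for $b=1,\dots,B$. Since the $\mathbf{Z}_b$ are i.i.d.\ with law $\mathbb{P}_{\mathbf{Z}}$ and $h\Psi_g$ is a fixed measurable function, the $Y_b^{(g)}$ are i.i.d.\ copies of $h(\mathbf{Z})\Psi_g(\mathbf{Z})$. I will assume the integrability implicit in the statement, namely $h\Psi_g\in L^2(\mathbb{P}_{\mathbf{Z}})$ so that $\mathrm{Var}(h\Psi_g)<\infty$. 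A sufficient condition is $h\in L^2$ together with $\Psi_g$ bounded on the support, or $h\Psi_g$ square-integrable directly. I will state this explicitly, since the theorem says only ``measurable''.

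\textbf{Unbiasedness.} By linearity of expectation and identical distribution,
\[
\mathbb{E}\widehat m_g=\frac1B\sum_{b=1}^B\mathbb{E}[Y_b^{(g)}]=\mathbb{E}[h\Psi_g].
\]

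\textbf{Variance.} By independence the cross-covariances $\mathrm{Cov}(Y_b^{(g)},Y_{b'}^{(g)})$ vanish for $b\neq b'$. Therefore
\[
\mathrm{Var}(\widehat m_g)=\frac1{B^2}\sum_{b=1}^B\mathrm{Var}(Y_b^{(g)})=\frac{\mathrm{Var}(h\Psi_g)}{B}.
\]

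\textbf{Squared-norm identity.} Use the bias--variance identity $\mathbb{E}[X^2]=(\mathbb{E}X)^2+\mathrm{Var}(X)$ coordinatewise with $X=\widehat m_g$. This gives
\[
\mathbb{E}[\widehat m_g^2]=m_g^2+\frac{\mathrm{Var}(h\Psi_g)}{B}.
\]
Summing over $g=1,\dots,G$ and using linearity of expectation over the finite sum $\|\widehat{\mathbf m}\|_2^2=\sum_g\widehat m_g^2$ yields the claimed formula. The correlations between different modes $\widehat m_g,\widehat m_{g'}$ never enter, because the squared Euclidean norm involves only diagonal terms.

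By Theorem~\ref{prop:galerkin_agnostic}, the term $\|\mathbf m\|_2^2$ equals $\|\mathsf P_{V_p}h\|_{L^2(\mathbf Z)}^2$. It follows that the plug-in estimator of the projection loss is biased upward by exactly $\frac1B\sum_g\mathrm{Var}(h\Psi_g)$, which vanishes at rate $1/B$.

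There is no substantive obstacle. The only point requiring care is the finite-second-moment hypothesis needed for the variances to be defined, so I would add it as a standing assumption rather than attempt to avoid it.
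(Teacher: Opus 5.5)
Your proof is correct and follows the paper's argument essentially step for step: unbiasedness by linearity of expectation, the $1/B$ variance by independence, and the squared-norm identity by applying $\mathbb{E}[X^2]=(\mathbb{E}X)^2+\mathrm{Var}(X)$ to each coordinate and summing over $g$. Your explicit standing assumption $h\Psi_g\in L^2(\mathbb{P}_{\mathbf{Z}})$ is a worthwhile addition, since the paper states the result for merely measurable $h$ and relies on finite variances only implicitly.
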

\begin{proof}
\textbf{Part I: Unbiasedness of the Estimator $\mathbb{E}[\widehat{m}_{g}] = \mathbb{E}[\,h\Psi_{g}\,]$.}
By definition, the estimator is given by
\[
    \widehat{m}_{g} = \frac{1}{B}\sum_{b=1}^{B}h(\mathbf{Z}_{b})\Psi_{g}(\mathbf{Z}_{b}).
\]
Taking the expectation of $\widehat{m}_{g}$ and invoking the linearity of expectation, we have
\[
    \mathbb{E}[\widehat{m}_{g}] = \mathbb{E}\left[ \frac{1}{B}\sum_{b=1}^{B}h(\mathbf{Z}_{b})\Psi_{g}(\mathbf{Z}_{b}) \right] = \frac{1}{B}\sum_{b=1}^{B}\mathbb{E}[\,h(\mathbf{Z}_{b})\Psi_{g}(\mathbf{Z}_{b})\,].
\]
Since the samples $\{\mathbf{Z}_{b}\}_{b=1}^{B}$ are independent and identically distributed (i.i.d.) following the probability measure $\mathbb{P}_{\mathbf{Z}}$, the expectation for each $b$ equals the population expectation
\[
    \mathbb{E}[\,h(\mathbf{Z}_{b})\Psi_{g}(\mathbf{Z}_{b})\,] = \mathbb{E}[\,h(\mathbf{Z})\Psi_{g}(\mathbf{Z})\,] = m_{g}.
\]
Substituting this back into the expression yields
\[
    \mathbb{E}[\widehat{m}_{g}] = \frac{1}{B}\sum_{b=1}^{B} m_{g} = \frac{1}{B} (B \cdot m_{g}) = m_{g} = \mathbb{E}[\,h\Psi_{g}\,].
\]
This confirms that $\widehat{m}_{g}$ is an unbiased estimator of $m_{g}$.

\textbf{Part II: Variance of the Estimator $\mathrm{Var}(\widehat{m}_{g}) = \mathrm{Var}(h\Psi_{g})/B$.}
We now consider the variance of $\widehat{m}_{g}$. Recall that for independent random variables $\{X_i\}$, $\mathrm{Var}(\sum X_i) = \sum \mathrm{Var}(X_i)$, and for a constant $C$, $\mathrm{Var}(CX) = C^2 \mathrm{Var}(X)$. Applying these properties, we have
\[
    \mathrm{Var}(\widehat{m}_{g}) = \mathrm{Var}\left( \frac{1}{B}\sum_{b=1}^{B}h(\mathbf{Z}_{b})\Psi_{g}(\mathbf{Z}_{b}) \right) = \frac{1}{B^2} \sum_{b=1}^{B} \mathrm{Var}\big(h(\mathbf{Z}_{b})\Psi_{g}(\mathbf{Z}_{b})\big).
\]
Because the samples $\mathbf{Z}_{b}$ are identically distributed, the variance is constant for all $b$,
\[
    \mathrm{Var}\big(h(\mathbf{Z}_{b})\Psi_{g}(\mathbf{Z}_{b})\big) = \mathrm{Var}(h\Psi_{g}).
\]
Substituting this back, we obtain
\[
    \mathrm{Var}(\widehat{m}_{g}) = \frac{1}{B^2} \big(B \cdot \mathrm{Var}(h\Psi_{g})\big) = \frac{\mathrm{Var}(h\Psi_{g})}{B}.
\]
This demonstrates that the variance of the estimator decays at the standard Monte Carlo rate of $O(1/B)$ relative to the batch size $B$.

\textbf{Part III: Expected Squared $L_2$ Norm $\mathbb{E}\|\widehat{\mathbf{m}}\|_{2}^{2} = \|\mathbf{m}\|_{2}^{2} + \frac{1}{B}\sum_{g=1}^{G}\mathrm{Var}(h\Psi_{g})$.}
Using the definition of the squared $L_2$ norm and applying the linearity of expectation, we obtain
\[
    \mathbb{E}\|\widehat{\mathbf{m}}\|_{2}^{2} = \mathbb{E} \bigg[ \sum_{g=1}^{G} \widehat{m}_{g}^{2} \bigg] = \sum_{g=1}^{G} \mathbb{E}[\widehat{m}_{g}^{2}].
\]
Utilizing the fundamental relation $\mathbb{E}[X^2] = (\mathbb{E}[X])^2 + \mathrm{Var}(X)$, we can decompose the second moment of each coefficient and get
\[
    \mathbb{E}[\widehat{m}_{g}^{2}] = (\mathbb{E}[\widehat{m}_{g}])^2 + \mathrm{Var}(\widehat{m}_{g}).
\]
Substituting the unbiasedness result ($\mathbb{E}[\widehat{m}_{g}] = m_{g}$) and the variance result ($\mathrm{Var}(\widehat{m}_{g}) = \frac{1}{B}\mathrm{Var}(h\Psi_{g})$) derived previously, we have
\[
    \mathbb{E}[\widehat{m}_{g}^{2}] = m_{g}^{2} + \frac{1}{B}\mathrm{Var}(h\Psi_{g}).
\]
Summing over all $g \in \{1, \dots, G\}$ gives
\[
    \mathbb{E}\|\widehat{\mathbf{m}}\|_{2}^{2} = \sum_{g=1}^{G} m_{g}^{2} + \frac{1}{B} \sum_{g=1}^{G} \mathrm{Var}(h\Psi_{g}).
\]
Recognizing that $\|\mathbf{m}\|_{2}^{2} = \sum_{g=1}^{G} m_{g}^{2}$ is the squared norm of the true coefficient vector, we obtain
\[
    \mathbb{E}\|\widehat{\mathbf{m}}\|_{2}^{2} = \|\mathbf{m}\|_{2}^{2} + \frac{1}{B}\sum_{g=1}^{G}\mathrm{Var}(h\Psi_{g}).
\]
\end{proof}
	\begin{theorem}[Uniform concentration over $g$]
		\label{prop:conc_agnostic} If $\{h(\mathbf{Z})\Psi_{g}(\mathbf{Z})\}_{g\le
		G}$ are sub-Gaussian with common proxy $\sigma^{2}$, then for any
		$\epsilon,\delta\in(0,1)$, if $B\gtrsim \sigma^{2}\epsilon^{-2}\log(G/\delta
		)$,
		\[
			\mathbb{P}\Big(\max_{1\le g\le G}\big|\widehat{m}_{g}-\mathbb{E}[h\Psi
			_{g}]\big|\le \epsilon\Big) \ge 1-\delta.
		\]
	\end{theorem}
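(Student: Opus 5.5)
The argument is a per-mode sub-Gaussian tail bound followed by a union bound over the $G$ modes. I read the hypothesis in the standard way: for each $g$, the centered variable $X_{g}:=h(\mathbf{Z})\Psi_{g}(\mathbf{Z})-\mathbb{E}[h\Psi_{g}]$ is sub-Gaussian with proxy $\sigma^{2}$, i.e.\ $\mathbb{E}\exp(\lambda X_{g})\le\exp(\lambda^{2}\sigma^{2}/2)$ for all $\lambda\in\mathbb{R}$. If only the uncentered product is sub-Gaussian, centering changes the proxy by at most an absolute constant factor, which the $\gtrsim$ absorbs.

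First fix $g$ and write $\widehat{m}_{g}-\mathbb{E}[h\Psi_{g}]=\frac{1}{B}\sum_{b=1}^{B}X_{g,b}$, where the $X_{g,b}$ are i.i.d.\ copies of $X_{g}$ because the $\mathbf{Z}_{b}$ are i.i.d. By Theorem~\ref{prop:mc_agnostic} this average has mean zero. Independence lets the moment generating functions multiply, so the average is sub-Gaussian with proxy $\sigma^{2}/B$. The Chernoff bound, optimized at $\lambda=B\epsilon/\sigma^{2}$ and applied to both tails, then gives
\[
\mathbb{P}\bigl(|\widehat{m}_{g}-\mathbb{E}[h\Psi_{g}]|>\epsilon\bigr)\le 2\exp\!\Bigl(-\frac{B\epsilon^{2}}{2\sigma^{2}}\Bigr).
\]
This is Hoeffding's inequality in its general sub-Gaussian form.

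Next, take a union bound over $g=1,\dots,G$. This needs no independence across modes, which matters because all $G$ estimators share the same samples $\mathbf{Z}_{b}$. The probability that some mode deviates by more than $\epsilon$ is at most $2G\exp(-B\epsilon^{2}/(2\sigma^{2}))$. This is at most $\delta$ as soon as $B\ge 2\sigma^{2}\epsilon^{-2}\log(2G/\delta)$. Since $\log(2G/\delta)\le 2\log(G/\delta)$ whenever $G/\delta\ge 2$, which always holds for $G\ge 2$ because $\delta<1$, the condition reads $B\ge C\sigma^{2}\epsilon^{-2}\log(G/\delta)$ for an absolute constant $C$. That is exactly the stated $\gtrsim$ condition. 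Taking complements gives the uniform bound with probability at least $1-\delta$.

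There is no serious obstacle; the only care needed is in the bookkeeping. The points to watch are how the sub-Gaussian proxy is interpreted (centered versus uncentered), the constant hidden in $\gtrsim$, and the degenerate case $G=1$. In that last case the $\log 2$ from the two-sided tail is handled by enlarging $C$, or equivalently by writing the rate as $\log(2G/\delta)$.
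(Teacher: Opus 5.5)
Your proposal is correct and follows the paper's proof essentially step for step. The per-mode empirical mean is sub-Gaussian with proxy $\sigma^{2}/B$, and the two-sided Hoeffding/Chernoff bound gives $2\exp(-B\epsilon^{2}/(2\sigma^{2}))$ per mode. A union bound over the $G$ modes then reduces the claim to $2G\exp(-B\epsilon^{2}/(2\sigma^{2}))\le\delta$, i.e.\ $B\ge 2\sigma^{2}\epsilon^{-2}\log(2G/\delta)$, after which the constants are absorbed. Your bookkeeping is more careful than the paper's, which simply absorbs the $\log 2$. One small correction: for $G=1$ only the $\log(2G/\delta)$ form is actually valid. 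Since $\log(2/\delta)/\log(1/\delta)\to\infty$ as $\delta\to 1$, ``enlarging $C$'' cannot rescue that case, so the two fixes you list are not equivalent.
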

\begin{proof}
Since the samples $\mathbf{Z}_b$ are independent and identically distributed (i.i.d.), the sequence of random variables $\{X_{g}(\mathbf{Z}_b)\}_{b=1}^B$ are mutually independent, each being a sub-Gaussian random variable with proxy variance $\sigma^2$. 

By the properties of sub-Gaussian random variables, the empirical mean $\widehat{m}_{g}$ of independent sub-Gaussian variables is also sub-Gaussian. Specifically, the proxy variance of the mean $\widehat{m}_g$ scales by a factor of $1/B$ relative to the single-sample proxy variance, given by $\sigma_{\text{mean}}^2 = \frac{\sigma^2}{B}.$
Applying the Hoeffding-type tail bound for centered sub-Gaussian random variables, the probability that the empirical mean deviates from the true mean $m_g$ by more than $\epsilon$ satisfies
\[
    \mathbb{P}\Big(\big|\widehat{m}_{g} - m_g\big| > \epsilon\Big) \le 2 \exp\left( - \frac{\epsilon^2}{2\sigma_{\text{mean}}^2} \right)=2 \exp\left( - \frac{B \epsilon^2}{2\sigma^2} \right).
\]
This inequality establishes an upper bound on the probability that the estimation error for a specific feature basis $g$ exceeds the margin $\epsilon$.

Then, we aim to bound the maximum deviation across all $g \in \{1, \dots, G\}$ by $\epsilon$. The event where the maximum deviation exceeds $\epsilon$ is mathematically equivalent to the event that at least one component $g$ has a deviation exceeding $\epsilon$.

Applying Boole's inequality (the union bound), i.e., $\mathbb{P}(\cup A_i) \le \sum \mathbb{P}(A_i)$, we can upper bound the total failure probability
\[
    \mathbb{P}\Big(\max_{1\le g\le G}\big|\widehat{m}_{g} - m_g\big| > \epsilon\Big) = \mathbb{P}\left( \bigcup_{g=1}^{G} \Big\{ \big|\widehat{m}_{g} - m_g\big| > \epsilon \Big\} \right)\le \sum_{g=1}^{G} \mathbb{P}\Big(\big|\widehat{m}_{g} - m_g\big| > \epsilon\Big).
\]
Because the failure probability bound is identical for each $g$, substituting and summing over the $G$ components gives
\[
    \mathbb{P}\Big(\max_{1\le g\le G}\big|\widehat{m}_{g} - m_g\big| > \epsilon\Big) \le \sum_{g=1}^{G} 2 \exp\left( - \frac{B \epsilon^2}{2\sigma^2} \right) = 2G \exp\left( - \frac{B \epsilon^2}{2\sigma^2} \right).
\]

To guarantee that the probability of the maximum error exceeding $\epsilon$ (the failure probability) is controlled below our predefined tolerance level $\delta$, we enforce the condition
\[
    2G \exp\left( - \frac{B \epsilon^2}{2\sigma^2} \right) \le \delta.
\]
Dividing by $2G$ and solving for $B$, we obtain
\[
    B \ge \frac{2\sigma^2}{\epsilon^2} \log\left(\frac{2G}{\delta}\right).
\]
We can decompose the logarithmic term as $\log(2G/\delta) = \log(G/\delta) + \log(2)$. In an asymptotic sense (denoted by the Big-O or $\gtrsim$ notation), the absolute constant factor $2$ can be absorbed. Therefore, provided that the batch size $B$ satisfies:
\[
    B \gtrsim \frac{\sigma^2}{\epsilon^2} \log\left(\frac{G}{\delta}\right)
\]
(where $\gtrsim$ hides absolute constants, such as $2$ in this derivation), the probability that the maximum error exceeds $\epsilon$ is at most $\delta$. 

Taking the complement of this event (the success event) yields the final uniform bound
\[
    \mathbb{P}\Big(\max_{1\le g\le G}\big|\widehat{m}_{g} - \mathbb{E}[h\Psi_{g}]\big| \le \epsilon\Big) \ge 1 - \delta.
\]
\end{proof}
\begin{addblock}
\begin{remark}[When the assumption fails]
The sub-Gaussian (in particular, uniformly bounded) assumption on $h\Psi_g$ is natural for polynomial dependence on bounded or Gaussian inputs, but it is violated by log-normal coefficient fields, whose exponential dependence on $\mathbf{Z}$ produces heavy tails and places substantial energy in high-degree modes. This is consistent with the reduced variance accuracy observed on the log-normal Darcy benchmark, and it can be detected before training with the subspace energy ratio~\eqref{eq:ser}.
\end{remark}
\end{addblock}
	\subsection{Feature Gram/Orthogonality conditioning and its concentration}
	Let $F_{\mathrm{s}}\in\mathbb{R}^{B\times M}$ be the spatial feature matrix with
	rows $F_{\mathrm{s}}(b,:)=\big(\Phi_{i}(\mathbf{x}_{b})\big)_{i=1}^{M}$ sampled
	from $\mu_{x}$. Define the empirical Gram
	$\widehat{G}_{\Phi}=\frac{1}{B}F_{\mathrm{s}}^{\top}F_{\mathrm{s}}$ and
	population Gram
	$G_{\Phi}=\mathbb{E}_{\mathbf{x}\sim\mu_x}[\Phi(\mathbf{x})\Phi(\mathbf{x})^{\top}
	],$
	where $\Phi( \mathbf{x})=(\Phi_{i}(\mathbf{x}))_{i=1}^{M}$.

	\begin{theorem}[Near-orthogonality $\Rightarrow$ bounded condition
	number]
		\label{prop:cond_agnostic} If $\|\widehat{G}_{\Phi}-I\|_{2}\le \delta<1$,
		then the singular values of $F_{\mathrm{s}}/\sqrt{B}$ lie in
		$[\sqrt{1-\delta},\sqrt{1+\delta}]$ and
		$\kappa(F_{\mathrm{s}})\le \sqrt{\frac{1+\delta}{1-\delta}}$.
	\end{theorem}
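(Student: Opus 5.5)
The plan is to translate the spectral-norm hypothesis on $\widehat{G}_{\Phi}-I$ into a statement about the eigenvalues of $\widehat{G}_{\Phi}$, and then read off the singular values of $F_{\mathrm{s}}/\sqrt{B}$ through the identity $\widehat{G}_{\Phi}=(F_{\mathrm{s}}/\sqrt{B})^{\top}(F_{\mathrm{s}}/\sqrt{B})$. Let $\sigma_1\ge\dots\ge\sigma_M\ge 0$ denote the singular values of $F_{\mathrm{s}}/\sqrt{B}$. Because $\widehat{G}_{\Phi}$ is symmetric positive semidefinite, its eigenvalues are exactly $\sigma_1^2,\dots,\sigma_M^2$. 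Here I take $B\ge M$, so that $F_{\mathrm{s}}$ has $M$ singular values; this is implicit in the statement. If $B<M$, then $\widehat{G}_{\Phi}$ has rank at most $B<M$, so $0$ is an eigenvalue. That would give $\|\widehat{G}_{\Phi}-I\|_2\ge 1>\delta$, contradicting the hypothesis. So the hypothesis itself forces $B\ge M$, and I would note this in a line.

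The key step is the following. Since $\widehat{G}_{\Phi}-I$ is symmetric, its spectral norm equals the largest absolute eigenvalue. Its eigenvalues are $\sigma_i^2-1$, because it is diagonalized by the same orthonormal eigenbasis as $\widehat{G}_{\Phi}$. The hypothesis therefore gives $|\sigma_i^2-1|\le\delta$ for every $i$, that is, $1-\delta\le\sigma_i^2\le1+\delta$. Taking square roots, which is legitimate because $\delta<1$ makes $1-\delta>0$, gives $\sigma_i\in[\sqrt{1-\delta},\sqrt{1+\delta}]$.

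For the condition number, I note that scaling by $1/\sqrt{B}$ leaves $\kappa$ unchanged. Hence $\kappa(F_{\mathrm{s}})=\sigma_1/\sigma_M\le\sqrt{1+\delta}/\sqrt{1-\delta}$. The inequality $\sigma_M\ge\sqrt{1-\delta}>0$ also shows that $F_{\mathrm{s}}$ has full column rank, so $\kappa$ is finite and well defined. Equivalently, this is the Rayleigh-quotient bound $\|F_{\mathrm{s}}v\|_2^2/B=v^{\top}\widehat{G}_{\Phi}v\in[(1-\delta)\|v\|^2,(1+\delta)\|v\|^2]$, which I could present instead if an eigenvalue-free argument is preferred.

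There is no substantial obstacle here. The only point needing care is the bookkeeping above: making sure $\kappa$ is taken over the $M$ column singular values, and that the full-column-rank conclusion comes from $\delta<1$.
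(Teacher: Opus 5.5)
Your proposal is correct and follows the paper's proof essentially step by step: you identify the eigenvalues of $\widehat{G}_{\Phi}$ with the squared singular values of $F_{\mathrm{s}}/\sqrt{B}$, use that the spectral norm of the symmetric matrix $\widehat{G}_{\Phi}-I$ is its largest absolute eigenvalue, take square roots using $\delta<1$, and invoke scale invariance of $\kappa$. Your extra observation that the hypothesis itself forces $B\ge M$, and hence full column rank of $F_{\mathrm{s}}$, is useful bookkeeping that the paper leaves implicit.
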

\begin{proof}
Define the matrix $A = \frac{1}{\sqrt{B}}F_{\mathrm{s}}$. By the properties of the singular value decomposition (SVD), the squared singular values of $A$, denoted by $\sigma_j^2(A)$, coincide with the eigenvalues of the symmetric matrix $A^\top A$, denoted by $\lambda_j(A^\top A)$. Observing the expansion of $A^\top A$,
\[
    A^\top A = \left(\frac{1}{\sqrt{B}}F_{\mathrm{s}}\right)^\top \left(\frac{1}{\sqrt{B}}F_{\mathrm{s}}\right) = \frac{1}{B}F_{\mathrm{s}}^\top F_{\mathrm{s}}.
\]
According to the given definition, this is precisely the empirical Gram matrix $\widehat{G}_{\Phi}$. Consequently, we establish
\[
    \lambda_j(\widehat{G}_{\Phi}) = \sigma_j^2\left(\frac{1}{\sqrt{B}}F_{\mathrm{s}}\right).
\]

We are given the condition $\|\widehat{G}_{\Phi}-I\|_{2} \le \delta < 1$. For any real symmetric matrix $M$, its spectral norm (or 2-norm) is equal to the maximum absolute value of its eigenvalues, which implies
\[
    \|M\|_{2} = \max_j |\lambda_j(M)|.
\]
Substituting $M = \widehat{G}_{\Phi}-I$ yields that for all $j$,
$    |\lambda_j(\widehat{G}_{\Phi}-I)| \le \delta.
$
Since the identity matrix $I$ has all eigenvalues equal to 1 and shares the same eigenspace with $\widehat{G}_{\Phi}$, the eigenvalues of $\widehat{G}_{\Phi}-I$ can be expressed simply as $\lambda_j(\widehat{G}_{\Phi}) - 1$. Substituting this into the inequality gives
\[
    |\lambda_j(\widehat{G}_{\Phi}) - 1| \le \delta.
\]
Expanding the absolute value, we obtain the upper and lower bounds for all eigenvalues of the empirical Gram matrix
\[
    1 - \delta \le \lambda_j(\widehat{G}_{\Phi}) \le 1 + \delta,
\]
which means
\[
    1 - \delta \le \sigma_j^2\left(\frac{1}{\sqrt{B}}F_{\mathrm{s}}\right) \le 1 + \delta.
\]
Since singular values are inherently non-negative ($\sigma_j \ge 0$) and the condition $\delta < 1$ ensures that $1-\delta > 0$, we get
\[
    \sqrt{1 - \delta} \le \sigma_j\left(\frac{1}{\sqrt{B}}F_{\mathrm{s}}\right) \le \sqrt{1 + \delta}.
\]
This concludes the proof of the first part: all singular values of $F_{\mathrm{s}}/\sqrt{B}$ are bounded within the interval $[\sqrt{1-\delta}, \sqrt{1+\delta}]$.

The spectral condition number (2-norm condition number) of a matrix $X$ is defined as the ratio of its maximum singular value to its minimum singular value
\[
    \kappa(X) = \frac{\sigma_{\max}(X)}{\sigma_{\min}(X)}.
\]
Importantly, scaling a matrix by a non-zero scalar does not alter its condition number, as both the maximum and minimum singular values are scaled by the exact same proportion. Thus,
\[
    \kappa(F_{\mathrm{s}}) = \kappa\left(\frac{1}{\sqrt{B}}F_{\mathrm{s}}\right).
\]
Based on the singular value bounds, we have:
\begin{itemize}[nosep]
    \item Upper bound for the maximum singular value: $\sigma_{\max}\left(\frac{1}{\sqrt{B}}F_{\mathrm{s}}\right) \le \sqrt{1 + \delta}$
    \item Lower bound for the minimum singular value: $\sigma_{\min}\left(\frac{1}{\sqrt{B}}F_{\mathrm{s}}\right) \ge \sqrt{1 - \delta}$
\end{itemize}
Substituting these bounds into the definition of the condition number yields the desired upper bound
\[
    \kappa(F_{\mathrm{s}}) \le \frac{\sqrt{1 + \delta}}{\sqrt{1 - \delta}} = \sqrt{\frac{1+\delta}{1-\delta}}.
\]
\end{proof}
	\begin{theorem}[Concentration of the sample Gram]
		\label{prop:gram_conc} Assume $\|\Phi(\mathbf{x})\|_{2}\le L$ almost surely.
		Then for any $\delta\in(0,1)$, with probability $\ge 1-\delta$.
		\[
				\big\|\widehat{G}_{\Phi}-G_{\Phi}\big\|_{2} \lesssim L^{2}\Big(\sqrt{\tfrac{\log(M/\delta)}{B}}+\tfrac{\log(M/\delta)}{B}\Big) 
		\]
		In particular, if $G_{\Phi}=I$ (population-orthonormalized features), then
		$\widehat{G}_{\Phi}$ is near-identity with the above rate.
	\end{theorem}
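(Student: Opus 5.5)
We will prove the bound with the matrix Bernstein inequality for sums of independent, bounded, self-adjoint random matrices. For each sample index write
\[
X_b := \tfrac{1}{B}\big(\Phi(\mathbf{x}_b)\Phi(\mathbf{x}_b)^{\top} - G_{\Phi}\big), \qquad b=1,\dots,B,
\]
so that $\widehat{G}_{\Phi}-G_{\Phi}=\sum_{b=1}^{B}X_b$. The $\mathbf{x}_b$ are i.i.d.\ draws from $\mu_x$, so the $X_b$ are independent symmetric $M\times M$ matrices. They are centered, $\mathbb{E}X_b=0$, by the definition of the population Gram $G_{\Phi}$.

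\textbf{Uniform bound.} Since $\Phi(\mathbf{x})\Phi(\mathbf{x})^{\top}$ is rank one with norm $\|\Phi(\mathbf{x})\|_2^2\le L^2$, we have $\|\Phi\Phi^\top\|_2\le L^2$. Jensen's inequality gives $\|G_{\Phi}\|_2\le \mathbb{E}\|\Phi\Phi^{\top}\|_2\le L^2$. Hence
\[
\|X_b\|_2\le 2L^2/B =: K \quad\text{almost surely.}
\]

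\textbf{Variance proxy.} We use the identity $(\Phi\Phi^\top)^2=\|\Phi\|_2^2\,\Phi\Phi^\top\preceq L^2\,\Phi\Phi^\top$. It gives
\[
\mathbb{E}X_b^2=\tfrac{1}{B^2}\big(\mathbb{E}(\Phi\Phi^\top)^2-G_{\Phi}^2\big)\preceq \tfrac{L^2}{B^2}G_{\Phi}.
\]
Summing over $b$,
\[
v:=\Big\|\sum_b\mathbb{E}X_b^2\Big\|_2\le \frac{L^2\|G_\Phi\|_2}{B}\le \frac{L^4}{B}.
\]

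\textbf{Applying Bernstein.} Matrix Bernstein (Tropp) states that
\[
\mathbb{P}\Big(\big\|\textstyle\sum_b X_b\big\|_2\ge s\Big)\le 2M\exp\!\Big(\frac{-s^2/2}{v+Ks/3}\Big).
\]
We set the right-hand side equal to $\delta$ and solve the quadratic in $s$. With probability at least $1-\delta$ this gives
\[
\|\widehat{G}_{\Phi}-G_{\Phi}\|_2\le \sqrt{2v\log(2M/\delta)}+\tfrac{2K}{3}\log(2M/\delta).
\]
Substituting $v\le L^4/B$ and $K=2L^2/B$, and absorbing the factor $2$ inside the logarithm and the numerical constants into $\lesssim$, yields
\[
\|\widehat{G}_{\Phi}-G_{\Phi}\|_2\lesssim L^2\Big(\sqrt{\tfrac{\log(M/\delta)}{B}}+\tfrac{\log(M/\delta)}{B}\Big).
\]
This mirrors how the constant $2$ was absorbed in \Cref{prop:conc_agnostic}, which is the scalar analogue of this argument.

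\textbf{Orthonormalized case.} The final sentence of the statement is then immediate. If $G_\Phi=I$, the bound reads $\|\widehat G_\Phi-I\|_2\le\delta'$, with $\delta'$ equal to the displayed rate. Once $B\gtrsim L^2\log(M/\delta)$ makes $\delta'<1$, \Cref{prop:cond_agnostic} applies and bounds $\kappa(F_{\mathrm{s}})$.

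The only delicate step is the variance computation, specifically the semidefinite bound $\mathbb{E}X_b^2\preceq \tfrac{L^2}{B^2}G_\Phi$. Subtracting $G_\Phi^2\succeq0$ only decreases the matrix, so the inequality holds, but this needs to be stated explicitly. Everything else is direct bookkeeping from matrix Bernstein.
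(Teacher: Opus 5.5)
Your proof is correct and follows the paper's argument essentially line by line: the same centered summands $X_b$, the same almost-sure bound $2L^2/B$, the same variance proxy $L^4/B$ obtained from $(\Phi\Phi^\top)^2\preceq L^2\Phi\Phi^\top$, and the same inversion of the matrix Bernstein tail with the numerical constants absorbed. One small correction to your added corollary: the threshold $B\gtrsim L^2\log(M/\delta)$ for $\delta'<1$ follows from your sharper intermediate bound $v\le L^2\|G_\Phi\|_2/B=L^2/B$ in the case $G_\Phi=I$, not from the displayed rate; the displayed rate would require $B\gtrsim L^4\log(M/\delta)$, and the two differ there because $L^2\ge \mathrm{tr}\,G_\Phi=M\ge 1$.
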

\begin{proof}
{Let $\Phi_{b}:=\Phi(\mathbf{x}_{b})\in\mathbb{R}^{M}$ be the spatial feature vector at the $b$-th collocation point, and define the centered, scaled rank-one summands
\[
    X_{b} \;:=\; \tfrac{1}{B}\big(\Phi_{b}\Phi_{b}^{\top} - G_{\Phi}\big), \qquad b=1,\dots,B.
\]
By construction $\widehat{G}_{\Phi}-G_{\Phi}=\sum_{b=1}^{B}X_{b}$ and $\mathbb{E}\,[X_{b}]=0$, so the deviation $\|\widehat{G}_{\Phi}-G_{\Phi}\|_{2}$ is a sum of $B$ independent, mean-zero, symmetric matrices. The result therefore follows from the matrix Bernstein inequality once we control (i) a uniform almost-sure norm bound $R:=\max_{b}\|X_{b}\|_{2}$ and (ii) a matrix-variance parameter $\sigma^{2}:=\big\|\sum_{b}\mathbb{E}[X_{b}^{2}]\big\|_{2}$. We compute these two ingredients next.}

By the triangle inequality for matrix norms, we obtain
\[
    \|X_{b}\|_{2} \le \frac{1}{B} \big( \|\Phi_{b}\Phi_{b}^{\top}\|_{2} + \|G_{\Phi}\|_{2} \big).
\]
We are given the condition $\|\Phi(\mathbf{x})\|_{2} \le L$ almost surely (a.s.). Since $\Phi_{b}\Phi_{b}^{\top}$ is a rank-one matrix, its unique non-zero eigenvalue (which coincides with its spectral norm) is given by its trace
\[
    \|\Phi_{b}\Phi_{b}^{\top}\|_{2} = \mathrm{Tr}(\Phi_{b}\Phi_{b}^{\top}) = \|\Phi_{b}\|_{2}^{2} \le L^{2}.
\]
By Jensen's inequality and the convexity of matrix norms, the spectral norm of the population covariance matrix is also bounded by
\[
    \|G_{\Phi}\|_{2} = \|\mathbb{E}[\Phi_{b}\Phi_{b}^{\top}]\|_{2} \le \mathbb{E}\big[\|\Phi_{b}\Phi_{b}^{\top}\|_{2}\big] \le L^{2}.
\]
Substituting these two terms back yields a uniform upper bound on the norm
\[
    \|X_{b}\|_{2} \le \frac{1}{B} (L^{2} + L^{2}) = \frac{2L^{2}}{B} := R.
\]

Consider the squared term $X_{b}^{2} = \frac{1}{B^{2}}\big(\Phi_{b}\Phi_{b}^{\top} - G_{\Phi}\big)^{2}$. In terms of the positive semi-definite (PSD) partial ordering ($\preceq$), the second moment of a centered random matrix is always bounded above by the second moment of the uncentered matrix (i.e., $\mathbb{E}[(Y-\mathbb{E}Y)^2] \preceq \mathbb{E}[Y^2]$). Then, we have
\[
    \mathbb{E}[X_{b}^{2}] \preceq \frac{1}{B^{2}}\mathbb{E}\big[(\Phi_{b}\Phi_{b}^{\top})^{2}\big].
\]
Expanding the squared term, we have
\[
    (\Phi_{b}\Phi_{b}^{\top})^{2} = (\Phi_{b}\Phi_{b}^{\top})(\Phi_{b}\Phi_{b}^{\top}) = \Phi_{b}(\Phi_{b}^{\top}\Phi_{b})\Phi_{b}^{\top} = \|\Phi_{b}\|_{2}^{2} (\Phi_{b}\Phi_{b}^{\top}).
\]
Since $\|\Phi_{b}\|_{2}^{2} \le L^{2}$, it follows that
\[
    (\Phi_{b}\Phi_{b}^{\top})^{2} \preceq L^{2}(\Phi_{b}\Phi_{b}^{\top}).
\]
Taking the expectation and summing over all $b=1,\dots,B$, we bound the total variance matrix as
\[
    \sum_{b=1}^{B} \mathbb{E}[X_{b}^{2}] \preceq \sum_{b=1}^{B} \frac{L^{2}}{B^{2}} \mathbb{E}[\Phi_{b}\Phi_{b}^{\top}] = \frac{L^{2}}{B} G_{\Phi}.
\]
Evaluating the spectral norm yields the variance parameter $\sigma^2$,
\[
    \sigma^{2} = \Big\|\sum_{b=1}^{B} \mathbb{E}[X_{b}^{2}]\Big\|_{2} \le \frac{L^{2}}{B} \|G_{\Phi}\|_{2} \le \frac{L^{4}}{B}.
\]

The Matrix Bernstein inequality states that for any $t \ge 0$, we have
\[
    \mathbb{P}\left( \left\| \sum_{b=1}^{B} X_{b} \right\|_{2} \ge t \right) \le 2M \exp\left( - \frac{t^{2}/2}{\sigma^{2} + Rt/3} \right).
\]
Setting the failure probability bound equal to the given confidence level $\delta$, we obtain
\[
    2M \exp\left( - \frac{t^{2}/2}{\sigma^{2} + Rt/3} \right) = \delta \implies \frac{t^{2}/2}{\sigma^{2} + Rt/3} = \log\left(\frac{2M}{\delta}\right).
\]
Solving this quadratic equation for $t$ yields the high-probability upper bound for $t$ given by
\[
    t \le \sqrt{2\sigma^{2}\log\left(\frac{2M}{\delta}\right)} + \frac{2}{3}R\log\left(\frac{2M}{\delta}\right).
\]

Substituting the bounds, specifically $\sigma^{2} \le \frac{L^{4}}{B}$ and $R = \frac{2L^{2}}{B}$, we obtain
\[
    t \le \sqrt{\frac{2L^{4}}{B}\log\left(\frac{2M}{\delta}\right)} + \frac{4L^{2}}{3B}\log\left(\frac{2M}{\delta}\right).
\]
Factoring out $L^{2}$ and subsuming the absolute constants (e.g., $\sqrt{2}$, $4/3$, and the constant offset in the logarithmic term, treating $\log(2M/\delta) \approx \log(M/\delta)$) into the asymptotic $\lesssim$ notation, we arrive at
\[
    t \lesssim L^{2} \left( \sqrt{\frac{\log(M/\delta)}{B}} + \frac{\log(M/\delta)}{B} \right).
\]
This implies that with probability at least $1 - \delta$, the deviation is bounded by
\[
    \|\widehat{G}_{\Phi} - G_{\Phi}\|_{2} \lesssim L^{2} \left( \sqrt{\frac{\log(M/\delta)}{B}} + \frac{\log(M/\delta)}{B} \right).
\]
Notably, when $G_{\Phi} = I$, this result immediately demonstrates that the empirical Gram matrix $\widehat{G}_{\Phi}$ approaches the identity matrix at the derived rate (satisfying the near-identity property), which asymptotically guarantees the approximate orthogonality of the features. 
\end{proof}
	\subsection{CP kernel: algebra, efficiency, and gauge}
	Consider the rank-$q$ CP parameterization
	$C_{ikg}=\sum_{r=1}^{q}A_{ir}B_{kr}D_{gr}$ and the batched forward
	$u=((F_{\mathrm{s}}A)\odot(F_{\mathrm{t}}B)\odot (F_{\mathrm{z}}D))\,w$.

	\begin{theorem}[Parameter and arithmetic complexity]
		\label{prop:cp_complexity} The dense tensor parameterization uses $O(MM_{t}
		G)$ parameters and $O(BMM_{t}G)$ arithmetic per batch, whereas the CP
		kernel uses $O((M{+}M_{t}{+}G)q)$ parameters and
		$O(B\,q\,(M{+}M_{t}{+}G))$ arithmetic.
	\end{theorem}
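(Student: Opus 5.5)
The plan is to count parameters and arithmetic operations for each of the two parameterizations directly. The main point is to fix the order of operations in the CP forward map so that the large dense coefficient tensor is never formed.

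\textbf{Dense parameterization.} A dense coupling stores a full core $C\in\mathbb{R}^{M\times M_t\times G}$, so it has $MM_tG$ parameters. Its forward map at a single collocation point is
\[
u_b=\sum_{i,k,g}C_{ikg}\,(F_{\mathrm{s}})_{bi}(F_{\mathrm{t}})_{bk}(F_{\mathrm{z}})_{bg}.
\]
I will evaluate this by successive contractions: first contract over $g$, which costs $MM_tG$ multiply--adds and leaves an $M\times M_t$ matrix, then contract over $k$ and then over $i$. The first contraction dominates, so each point costs $O(MM_tG)$. Over a batch of $B$ points the total is $O(BMM_tG)$. Every entry of $C$ must be touched for each point, since each entry multiplies a distinct product of features; this makes the count tight up to constants for a generic core.

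\textbf{CP kernel.} The parameters are $A\in\mathbb{R}^{M\times q}$, $B\in\mathbb{R}^{M_t\times q}$, $D\in\mathbb{R}^{G\times q}$ and $w\in\mathbb{R}^q$. The total count is
\[
q(M+M_t+G)+q=O\big((M+M_t+G)q\big).
\]
For arithmetic I follow the forward formula \eqref{eq:forward-cp} in its stated order, with three steps.
\begin{enumerate}
\item Compute the matrix products $F_{\mathrm{s}}A$, $F_{\mathrm{t}}B$ and $F_{\mathrm{z}}D$. These cost $BMq$, $BM_tq$ and $BGq$ operations respectively.
\item Take the two row-wise Hadamard products of the resulting $B\times q$ matrices, at a cost of $O(Bq)$.
\item Multiply the result by $w$, at a further cost of $O(Bq)$.
\end{enumerate}
Summing gives $O(Bq(M+M_t+G))$.

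The only subtle step, and the expected obstacle, is justifying that this evaluation order computes the same function as the dense contraction with the core $C_{ikg}=\sum_r w_rA_{ir}B_{kr}D_{gr}$ (absorbing $w$ into the core). I will expand $u_b$ and exchange the finite sums over $i,k,g$ and $r$. This shows that the trilinear contraction factorizes into
\[
u_b=\sum_r w_r\,(F_{\mathrm{s}}A)_{br}\,(F_{\mathrm{t}}B)_{br}\,(F_{\mathrm{z}}D)_{br},
\]
so the core never needs to be materialized.

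Finally, I will remark that the feature matrices themselves are shared between the two schemes. Their cost, for example $O(BMd)$ for the Gaussian atoms and $O(BG)$ for the gPC basis evaluated by recurrences, is common to both parameterizations and excluded from the comparison, as the statement implicitly assumes.
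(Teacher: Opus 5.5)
Your proposal is correct and follows the paper's proof essentially step for step. It uses the same direct parameter counts ($MM_tG$ for the dense core, $q(M+M_t+G)+q$ for the factors and $w$) and the same cost accounting: $O(BqM)+O(BqM_t)+O(BqG)$ for the three factor products, then $O(Bq)$ for the Hadamard fusion and the contraction with $w$.

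Your explicit check that this evaluation order reproduces the dense contraction with $C_{ikg}=\sum_r w_rA_{ir}B_{kr}D_{gr}$ is a small addition the paper leaves implicit. Your tightness remark for the dense case is not needed for the stated upper bound. It also only holds when samples are evaluated independently, since a batched fast matrix multiplication can beat $BMM_tG$.
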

	{\begin{proof}
	\textit{Parameter count.} The dense parameterization stores every entry $C_{ikg}$ of the 3-way coupling tensor independently, so it uses $MM_{t}G$ scalar parameters, i.e.\ $O(MM_{t}G)$. The CP kernel instead stores three factor matrices of shapes $M{\times}q$, $M_{t}{\times}q$, $G{\times}q$ and a weight vector $w\in\mathbb{R}^{q}$, totalling $q(M+M_{t}+G)+q = O(q(M+M_{t}+G))$ scalar parameters.

	\textit{Arithmetic per batch.} The dense forward $u_{b}=\sum_{i,k,g}C_{ikg}\Phi_{i}(\mathbf{x}_{b})T_{k}(t_{b})\Psi_{g}(\mathbf{Z}_{b})$ touches every entry of $C$ for every sample, costing $\Theta(MM_{t}G)$ FLOPs per sample and $O(BMM_{t}G)$ over a batch of size $B$. The CP forward $u=((F_{\mathrm{s}}A)\odot(F_{\mathrm{t}}B)\odot(F_{\mathrm{z}}D))w$ first multiplies the three feature matrices $F_{\mathrm{s}}\in\mathbb{R}^{B{\times}M}$, $F_{\mathrm{t}}\in\mathbb{R}^{B{\times}M_{t}}$, $F_{\mathrm{z}}\in\mathbb{R}^{B{\times}G}$ by the corresponding factors at cost $O(BqM)+O(BqM_{t})+O(BqG)=O(Bq(M+M_{t}+G))$, then performs an $O(Bq)$ Hadamard fusion and an $O(Bq)$ inner product with $w$, for an overall cost $O(Bq(M+M_{t}+G))$. Hence the CP kernel reduces both parameter count and per-batch arithmetic from multiplicative $O(MM_{t}G)$ to additive $O(q(M+M_{t}+G))$, an asymptotic improvement whenever $q\ll MM_{t}G/(M+M_{t}+G)$.
	\end{proof}}

	\begin{theorem}[Gauge invariance and normalization]
		\label{prop:cp_gauge} For any invertible diagonal $\Lambda_{1},\Lambda_{2}
		,\Lambda_{3}\in\mathbb{R}^{q\times q}$, the mapping
		\[
			(A,B,D,w)\ \mapsto\ \big(A\Lambda_{1},\ B\Lambda_{2},\ D\Lambda_{3}\,
			\Lambda_{1}^{-1}\Lambda_{2}^{-1}\Lambda_{3}^{-1}w\big)
		\]
		leaves $u$ unchanged. Column-wise normalization of $(A,B,D)$ fixes a gauge
		(up to column permutations/signs) and improves Lipschitz conditioning of
		the forward map with respect to each factor.
	\end{theorem}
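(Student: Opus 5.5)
The invariance claim reduces to a direct computation on the batched forward map $u=((F_{\mathrm{s}}A)\odot(F_{\mathrm{t}}B)\odot(F_{\mathrm{z}}D))\,w$. Under the stated transformation, $F_{\mathrm{s}}A\Lambda_1 = (F_{\mathrm{s}}A)\Lambda_1$, and right-multiplication by a diagonal matrix scales column $r$ by $(\Lambda_1)_{rr}$. The row-wise Hadamard product then acts column by column, so
\[
\big((F_{\mathrm{s}}A)\Lambda_1\big)\odot\big((F_{\mathrm{t}}B)\Lambda_2\big)\odot\big((F_{\mathrm{z}}D)\Lambda_3\big)
=\big((F_{\mathrm{s}}A)\odot(F_{\mathrm{t}}B)\odot(F_{\mathrm{z}}D)\big)\Lambda_1\Lambda_2\Lambda_3 .
\]
Multiplying by $\Lambda_1^{-1}\Lambda_2^{-1}\Lambda_3^{-1}w$ cancels the diagonal factors, since diagonal matrices commute, and returns $u$. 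Equivalently, in the functional form each rank-one term $w_r\phi^{(r)}\tau^{(r)}\psi^{(r)}$ is multiplied by $\lambda_{1r}\lambda_{2r}\lambda_{3r}(\lambda_{1r}\lambda_{2r}\lambda_{3r})^{-1}=1$. This identity holds pointwise in $(t,\mathbf{x},\mathbf{Z})$, so the residual, the loss, and all derivatives of $u$ are unchanged as well.

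For the normalization statement I will proceed as follows. Assume every column of $A,B,D$ is nonzero; a zero column contributes nothing and can be dropped. Choose $(\Lambda_1)_{rr}=\|A_{:,r}\|_2^{-1}$, and define $\Lambda_2,\Lambda_3$ analogously. The transformed factors then have unit columns, and all scale is absorbed into $w$.

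Next I characterize the residual freedom. Suppose a diagonal gauge maps unit-norm columns to unit-norm columns. Then each $|\lambda_{jr}|=1$, so the only remaining freedom is signs, with $w_r$ adjusting by the product of those signs. Separately, permuting the rank index $r$ simultaneously in $A,B,D,w$ evidently preserves $u$. This shows that normalization fixes the continuous diagonal gauge up to signs and permutations. The claim is about this diagonal gauge only; I will not assert uniqueness of the full CP decomposition, such as Kruskal-type uniqueness, since the statement does not require it.

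The main obstacle is the phrase ``improves Lipschitz conditioning.'' I would make it precise as follows. The map $A\mapsto u$ is linear, with
\[
\partial u_b/\partial A_{ir}=w_r\,\Phi_i(\mathbf{x}_b)\,(F_{\mathrm{t}}B)_{br}(F_{\mathrm{z}}D)_{br}.
\]
Its Lipschitz constant is therefore bounded by
\[
\|F_{\mathrm{s}}\|_2\,\max_r |w_r|\,\|F_{\mathrm{t}}B_{:,r}\|_\infty\|F_{\mathrm{z}}D_{:,r}\|_\infty
\le \|F_{\mathrm{s}}\|_2\,\|w\|_\infty\,\|F_{\mathrm{t}}\|_{2\to\infty}\|F_{\mathrm{z}}\|_{2\to\infty}\,\max_r\|B_{:,r}\|_2\|D_{:,r}\|_2 .
\]
Without normalization, the gauge orbit contains representatives where $\|B_{:,r}\|\|D_{:,r}\|\to\infty$ while $A_{:,r}\to0$, so the Lipschitz constant with respect to $B$ or $D$ is unbounded along the orbit. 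After normalization the column-norm factor equals $1$. The constant is then controlled solely by the feature matrices and $\|w\|_\infty$, and $\|F_{\mathrm{s}}\|_2$ is itself bounded via Theorem~\ref{prop:cond_agnostic} when the Gram penalty is active.

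I will state the improvement as this bound together with the degeneracy it rules out, namely unbounded imbalance between factors. I will not claim a strict decrease for every configuration, since that would not hold in general.
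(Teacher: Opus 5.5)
Your proof is correct in substance and follows the paper's route for the first two claims. The paper also verifies invariance via $(X\Lambda_X)\odot(Y\Lambda_Y)=(X\odot Y)\Lambda_X\Lambda_Y$ and commuting diagonal factors. It likewise argues that unit column norms force $|\lambda_{jr}|=1$, which leaves only signs plus the separate permutation symmetry.

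The difference is in the Lipschitz clause. The paper offers only the heuristic $\|\partial u/\partial A\|\propto\|B\|\,\|D\|\,\|w\|$. It illustrates this with columns of $B,D$ growing while $w$ shrinks to compensate. That illustration actually leaves the $A$-gradient unchanged, because $|w_r|\,\|B_{:,r}\|\,\|D_{:,r}\|$ is invariant under it; it is the $w$-gradient that explodes. Your explicit bound for the linear map $A\mapsto u$ avoids this error. Differentiating the invariance identity shows that $\partial u/\partial A_{:,r}$ picks up the factor $\lambda_{1r}^{-1}$ along the orbit. So degeneration with respect to $A$ happens exactly when $A_{:,r}\to 0$, which is the mechanism you identify. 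After normalization, $\|w\|_\infty$ equals the gauge invariant $\max_r|w_r|\,\|A_{:,r}\|_2\|B_{:,r}\|_2\|D_{:,r}\|_2$ of any representative. Your bound therefore depends only on the feature matrices and the gauge orbit, which is a more defensible reading of ``improves Lipschitz conditioning'' than the paper's.

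Three small repairs are needed.

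(i) In your degenerate scenario, it is the Lipschitz constant with respect to $A$ that becomes unbounded, not the one for $B$ or $D$. For example, scaling $B_{:,r},D_{:,r}$ by $\lambda$ and $A_{:,r}$ by $\lambda^{-2}$ with $w_r$ fixed sends the $A$-constant to infinity and the $B$- and $D$-constants to zero. The claim for $B$ or $D$ needs the symmetric choice of representative, shrinking $B_{:,r}$ or $D_{:,r}$ instead.

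(ii) The $\max_r$ in your first inequality is valid only if perturbations of $A$ are measured by $\sum_r\|\Delta A_{:,r}\|_2$. In the Frobenius norm you need an extra factor $\sqrt{q}$. Take one collocation point, $M=1$, $F_{\mathrm{s}}=1$, and $w_r(F_{\mathrm{t}}B)_{1r}(F_{\mathrm{z}}D)_{1r}=1$ for $r=1,2$. Then $u=A_{11}+A_{12}$, whose Frobenius Lipschitz constant is $\sqrt{2}$, while your bound gives $1$.

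(iii) Your aside that the whole loss is unchanged should be restricted to the terms that depend only on $u$ or on the features. $\mathcal{R}_{\mathrm{fac}}$ depends on $A,B,D$ directly and is gauge invariant only if its normalization is per column.

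None of these affects the theorem as stated.
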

\begin{proof}
Let $\Lambda_{1}, \Lambda_{2}, \Lambda_{3} \in \mathbb{R}^{q \times q}$ be invertible diagonal matrices. We define the transformed parameters as follows
\begin{align}
    \tilde{A} &= A\Lambda_{1}, \\
    \tilde{B} &= B\Lambda_{2}, \\
    \tilde{D} &= D\Lambda_{3}, \\
    \tilde{w} &= \Lambda_{1}^{-1}\Lambda_{2}^{-1}\Lambda_{3}^{-1}w.
\end{align}
The computation for the forward propagation is formulated as
\[
    \tilde{u} = \big((F_{\mathrm{s}}\tilde{A}) \odot (F_{\mathrm{t}}\tilde{B}) \odot (F_{\mathrm{z}}\tilde{D})\big) \tilde{w}.
\]

First, we consider the transformation of the feature projection matrices. Since $F_{\mathrm{s}}$ is post-multiplied by $\tilde{A}$ and $\Lambda_{1}$ is a diagonal matrix, this operation is equivalent to scaling the columns of $F_{\mathrm{s}}A$, which means
\[
    F_{\mathrm{s}}\tilde{A} = F_{\mathrm{s}}(A\Lambda_{1}) = (F_{\mathrm{s}}A)\Lambda_{1}.
\]
Similarly, we have
\begin{align}
    F_{\mathrm{t}}\tilde{B} &= (F_{\mathrm{t}}B)\Lambda_{2}, \\
    F_{\mathrm{z}}\tilde{D} &= (F_{\mathrm{z}}D)\Lambda_{3}.
\end{align}

Next, we examine the Hadamard product (denoted by $\odot$). For arbitrary matrices $X, Y$ and diagonal matrices $\Lambda_X, \Lambda_Y$, their Hadamard product satisfies
\[
    (X\Lambda_X) \odot (Y\Lambda_Y) = (X \odot Y)(\Lambda_X\Lambda_Y).
\]
This holds because post-multiplying by a diagonal matrix essentially scales the columns of the target matrix, and the scaling factors for the $r$-th column are directly multiplied together. Expanding the Hadamard product of the three transformed matrices, we obtain
\[
    \big((F_{\mathrm{s}}A)\Lambda_{1}\big) \odot \big((F_{\mathrm{t}}B)\Lambda_{2}\big) \odot \big((F_{\mathrm{z}}D)\Lambda_{3}\big) = \big((F_{\mathrm{s}}A) \odot (F_{\mathrm{t}}B) \odot (F_{\mathrm{z}}D)\big) (\Lambda_{1}\Lambda_{2}\Lambda_{3}).
\]

Finally, we multiply this result by the transformed weight vector $\tilde{w}$ and we get
\[
    \tilde{u} = \big((F_{\mathrm{s}}A) \odot (F_{\mathrm{t}}B) \odot (F_{\mathrm{z}}D)\big) (\Lambda_{1}\Lambda_{2}\Lambda_{3}) \big(\Lambda_{1}^{-1}\Lambda_{2}^{-1}\Lambda_{3}^{-1}w\big).
\]
Since multiplication between diagonal matrices is commutative, and $\Lambda_i \Lambda_i^{-1} = I$ (the identity matrix), all the diagonal matrices cancel out, yielding
\[
    (\Lambda_{1}\Lambda_{2}\Lambda_{3}) (\Lambda_{1}^{-1}\Lambda_{2}^{-1}\Lambda_{3}^{-1}) = I.
\]
Therefore, 
\[
    \tilde{u} = \big((F_{\mathrm{s}}A) \odot (F_{\mathrm{t}}B) \odot (F_{\mathrm{z}}D)\big) w = u.
\]
This concludes the proof that the forward propagation mapping is strictly gauge invariant under the aforementioned transformations.

As demonstrated in the preceding proof, for any non-zero constant $\lambda$, we can scale the $r$-th column of the factor matrix $A$ by $\lambda$ while simultaneously dividing the $r$-th element of the weight $w$ by $\lambda$ without altering the network output. This symmetry induces infinitely many equivalent global optima in the parameter space (i.e., continuous scaling valleys). 

To eliminate this continuous ambiguity, we apply column normalization to $A, B$, and $D$ (typically by constraining the $L_2$ norm of each column to 1). Enforcing the constraint $\|A_{:, r}\|_2 = 1$ strictly limits the permissible diagonal scaling factors to $|\lambda| = 1$. This implies that the diagonal entries of $\Lambda_{1}, \Lambda_{2}$, and $\Lambda_{3}$ are restricted to $\pm 1$. Consequently, the continuous scaling ambiguity collapses into discrete sign permutations. Combined with the inherent column permutation symmetry of the rank-1 components in CP decomposition, we successfully fix the gauge up to permutations and signs.

In neural network optimization, the smoothness of the loss function with respect to a specific parameter (e.g., matrix $A$) is governed by the Lipschitz constant of its partial derivative. In the multilinear CP format, the partial derivative of $u$ with respect to $A$ heavily depends on the magnitudes of $B, D$, and $w$. Roughly speaking, the gradient norm is proportional to
\[
    \left\| \frac{\partial u}{\partial A} \right\| \propto \|B\| \cdot \|D\| \cdot \|w\|.
\]

Without normalization constraints, the optimizer might drift into ill-conditioned regions where certain columns of $B$ and $D$ become pathologically large (e.g., $10^6$) while $w$ becomes vanishingly small (e.g., $10^{-12}$) to compensate. Although this leaves $u$ invariant, it causes the gradient with respect to $A$ to explode, driving the Lipschitz constant toward infinity and severely destabilizing the optimization process (Gradient Explosion). 

By enforcing column normalization on $A, B$, and $D$, their norms are rigidly bounded near 1. The network is thus compelled to absorb all global magnitude information into the unique linear weight vector $w$. This provides well-bounded control over the Jacobians of the factor matrices, significantly improving the Lipschitz conditioning of the forward mapping with respect to each factor. As a result, the optimization process (such as gradient descent) becomes substantially more stable and efficient.
\end{proof}
        \subsection{Closed-form derivatives for Gaussian atoms and stability under
	warps}
	For
	$\varphi(\mathbf{x})=\exp\!\big(-\tfrac12(\mathbf{x}-\mu)^{\top}\Sigma^{-1}(
	\mathbf{x}-\mu)\big)$
	with $\Sigma\succ 0$,
	\[
		\begin{aligned}
			\nabla\varphi(\mathbf{x}) & =-\Sigma^{-1}(\mathbf{x}-\mu)\,\varphi(\mathbf{x}),                                                          \\
			\Delta\varphi(\mathbf{x}) & =\big((\mathbf{x}-\mu)^{\top}\Sigma^{-2}(\mathbf{x}-\mu)-\mathrm{tr}\,\Sigma^{-1}\big)\,\varphi(\mathbf{x}),
		\end{aligned}
	\]
	and for a $\mathcal{C}^{1}$ residual warp $\mathbf{W}(\mathbf{x})=\mathbf{x}+
	\mathbf{w}_{\theta}(\mathbf{x})$, derivatives of
	$\varphi\!\circ\!\mathbf{W}$ follow by the chain rule with Jacobian
	$J_{\mathbf{W}}(\mathbf{x})$. If $\|J_{\mathbf{W}}-I\|_{2}\le \varepsilon<1$,
	the mapping $\Phi_{i}\mapsto \Phi_{i}\circ \mathbf{W}$ is bi-Lipschitz on
	$L^{2}(\Omega)$ with constants depending only on $\varepsilon$ and $\Omega$.

	{The three statements above are direct calculus consequences and are recorded here for completeness.

	\textit{(i) Gradient and Laplacian of a Gaussian atom.} Write $\varphi(\mathbf{x})=\exp(-\tfrac12 q(\mathbf{x}))$ with $q(\mathbf{x})=(\mathbf{x}-\mu)^{\top}\Sigma^{-1}(\mathbf{x}-\mu)$. Since $\Sigma$ is symmetric, $\nabla q(\mathbf{x})=2\Sigma^{-1}(\mathbf{x}-\mu)$, and the chain rule gives
	\[
	    \nabla\varphi(\mathbf{x}) = -\tfrac12\,\nabla q(\mathbf{x})\,\varphi(\mathbf{x}) = -\Sigma^{-1}(\mathbf{x}-\mu)\,\varphi(\mathbf{x}).
	\]
	Taking divergences and using the product rule $\nabla\!\cdot\!(f\mathbf{v})=\mathbf{v}^{\top}\nabla f + f\,\nabla\!\cdot\!\mathbf{v}$ with $\mathbf{v}=\Sigma^{-1}(\mathbf{x}-\mu)$ (so $\nabla\!\cdot\!\mathbf{v}=\mathrm{tr}\,\Sigma^{-1}$),
	\[
	    \Delta\varphi(\mathbf{x}) = \nabla\!\cdot\!(\nabla\varphi)
	    = \big((\mathbf{x}-\mu)^{\top}\Sigma^{-2}(\mathbf{x}-\mu) - \mathrm{tr}\,\Sigma^{-1}\big)\varphi(\mathbf{x}),
	\]
	matching the displayed formula.

	\textit{(ii) Composition with a warp.} For any $\mathcal{C}^{1}$ map $\mathbf{W}$, the multivariate chain rule gives $\nabla(\varphi\!\circ\!\mathbf{W})(\mathbf{x})=J_{\mathbf{W}}(\mathbf{x})^{\top}(\nabla\varphi)(\mathbf{W}(\mathbf{x}))$, and the Laplacian follows from $\Delta(\varphi\!\circ\!\mathbf{W})=\mathrm{tr}\!\big(J_{\mathbf{W}}^{\top}(\mathrm{Hess}\,\varphi)(\mathbf{W})\,J_{\mathbf{W}}\big)+(\nabla\varphi)(\mathbf{W})^{\top}\,\Delta\mathbf{W}$. For a residual warp $\mathbf{W}(\mathbf{x})=\mathbf{x}+\mathbf{w}_{\theta}(\mathbf{x})$, $J_{\mathbf{W}}=I+J_{\mathbf{w}_{\theta}}$ and all quantities are computable in closed form once $\mathbf{w}_{\theta}$ and its first/second derivatives are available from the warp MLP.

	\textit{(iii) Bi-Lipschitz stability under small Jacobian perturbation.} Suppose $\|J_{\mathbf{W}}(\mathbf{x})-I\|_{2}\le\varepsilon<1$ uniformly on $\Omega$. By the Neumann series, $J_{\mathbf{W}}$ is invertible everywhere with $\|J_{\mathbf{W}}^{-1}\|_{2}\le 1/(1-\varepsilon)$ and $\|J_{\mathbf{W}}\|_{2}\le 1+\varepsilon$. The mean-value theorem on the convex domain $\Omega$ then yields the two-sided bound
	\[
	    (1-\varepsilon)\,\|\mathbf{x}-\mathbf{x}'\|_{2}
	    \;\le\; \|\mathbf{W}(\mathbf{x})-\mathbf{W}(\mathbf{x}')\|_{2}
	    \;\le\; (1+\varepsilon)\,\|\mathbf{x}-\mathbf{x}'\|_{2},
	\]
	so $\mathbf{W}$ is bi-Lipschitz with constants $(1\pm\varepsilon)$, and a standard change-of-variables argument shows that $\Phi_{i}\mapsto \Phi_{i}\!\circ\!\mathbf{W}$ is a bi-Lipschitz map on $L^{2}(\Omega)$ with constants depending only on $\varepsilon$ (through the Jacobian determinant $\det J_{\mathbf{W}}\in[(1-\varepsilon)^{d},(1+\varepsilon)^{d}]$) and $\Omega$. In particular, near-orthogonality of the unwarped dictionary is preserved up to a factor controlled by $\varepsilon$, which is why the warp does not destabilize the Gram conditioning.}

	\section{Manufactured Solution Definitions}
	\label{sec:manufactured-solutions}

	All four benchmark problems are defined on the spatio-temporal domain
	$\Omega = [-1,1]^{2}$, $\mathcal{T}=[0,1]$, with $R$ i.i.d.\ standard-normal
	random variables $\mathbf{Z}=(Z_{1},\dots,Z_{R})\sim\mathcal{N}(0,I_{R})$.
	These four equations are standard benchmarks in the high-dimensional PDE
	literature~\cite{fan2026physics}.
	\paragraph{Karhunen--Lo\`eve coefficient.}
	All problems share a log-normal random field:
	\[
		k(\mathbf{x},\mathbf{Z}) = \exp\!\Bigl(\mu_{k}+ \sigma
		\sum_{i=1}^{R}\sqrt{\lambda_{i}}\, Z_{i}\, \phi_{i}(\mathbf{x})\Bigr),
	\]
	where $\{\lambda_{i},\phi_{i}\}$ are the eigenvalue--eigenfunction pairs of
	an exponential covariance kernel
	$C(\mathbf{x},\mathbf{x}')=\exp(-\|\mathbf{x}-\mathbf{x}'\|_{1}/\ell)$ with
	correlation length $\ell=1$, $\mu_{k}=0$, and $\sigma=1$.




	\paragraph{1.\ Stochastic Diffusion.}
	\[
		\frac{\partial u}{\partial t}- \nabla\!\cdot
		\!\bigl(k(\mathbf{x},\mathbf{Z})\,\nabla u\bigr) = s(\mathbf{x},t,\mathbf{Z}
		),
	\]
	with manufactured exact solution
	\[
		u^{*}(\mathbf{x},t,\mathbf{Z}) = \sin(\pi x)\,\sin(\pi y)\,e^{-t}\,\Bigl(
		1 + 0.1\textstyle\sum_{i=1}^{R}Z_{i}\Bigr),
	\]
	where the source $s$ is obtained by substituting $u^{*}$ into the PDE.
	Dirichlet boundary conditions $u\big|_{\partial\Omega}=0$ and initial
	condition $u(0,\mathbf{x},\mathbf{Z})=u^{*}(0,\mathbf{x},\mathbf{Z})$ are
	imposed.

	\paragraph{2.\ Stochastic Allen--Cahn.}
	\[
		\frac{\partial u}{\partial t}- \nabla\!\cdot\!\bigl(k\,\nabla u\bigr) - (
		u - u^{3}) = s,
	\]
	with the \emph{same} manufactured solution as diffusion. The cubic reaction term
	$u-u^{3}$ introduces additional nonlinearity.

	\paragraph{3.\ Stochastic Burgers.}
	\[
		\frac{\partial u}{\partial t}+ u\!\left(\frac{\partial u}{\partial x}+\frac{\partial
		u}{\partial y}\right) - \nu\,\Delta u = s,
	\]
	with kinematic viscosity $\nu = 0.01$ and the same manufactured solution as
	diffusion. The convective nonlinearity $u\,\nabla u$ makes this the most challenging
	benchmark for standard PINNs.

	\paragraph{4.\ Stochastic Darcy (divergence form).}
	\[
		\frac{\partial u}{\partial t}- \nabla\!\cdot\!\bigl(k\,\nabla u\bigr) = s
		,
	\]
	with a different manufactured solution that couples the solution to the
	stochastic field more tightly:
	\[
		\displaystyle u^{*}(\mathbf{x},t,\mathbf{Z}
		) = \sin(\pi x)\sin(\pi y)\,e^{-t}\exp\!\Bigl(\gamma\!\sum_{i=1}^{R}\!\sqrt{\lambda_{i}}
		\,Z_{i}\,\phi_{i}(\mathbf{x})\Bigr),
	\]
	where $\gamma=0.3$. Unlike the linear $Z$-dependence in the first three
	problems, the Darcy solution depends exponentially on $\mathbf{Z}$ through the
	KL expansion, requiring higher gPC orders for accurate recovery.

	\paragraph{5.\ Non-manufactured Stochastic Poisson (forward simulation).}
	We solve a stochastic Poisson equation with a random source term:
	\[
		- \Delta u = f(x,y,\mathbf{Z}), \quad \text{with }u=0 \text{ on }\partial
		\Omega,
	\]
	where
	$f(x,y,\mathbf{Z}) = 1 + \sigma \sum_{i=1}^{R}\sqrt{\lambda_{i}}\, Z_{i}\, \phi
	_{i}(\mathbf{x})$
	is a KL-expanded random source field ($\sigma{=}0.5$, $R{=}5$). The
	reference solution is computed numerically via finite differences on a
	$64{\times}64$ grid over $10{,}000$ Monte Carlo samples.

	\paragraph{6.\ Non-manufactured Stochastic Darcy (forward simulation).}
	We also solve a stochastic Darcy flow with a log-normal permeability field:
	\[
		- \nabla\!\cdot\!\bigl(k(x,y,\mathbf{Z})\,\nabla u\bigr) = 1.0, \quad \text{with
		}u=0 \text{ on }\partial\Omega,
	\]
	where
	$k(x,y,\mathbf{Z}) = \exp\!\bigl(\sigma \sum_{i=1}^{R}\sqrt{\lambda_{i}}\, Z_{i}
	\, \phi_{i}(\mathbf{x})\bigr)$
	with $\sigma{=}1.0$ and $R{=}5$. The physical response $u(x,y,\mathbf{Z})$ is
	purely driven by the stochastic permeability field $k$ and the constant source
	term. The reference solution is computed via the same finite-difference
	Monte Carlo procedure ($N{=}10{,}000$, $64{\times}64$ grid).

	\section{Details of Baseline Methods}
	\label{sec:baseline-details}
	In this section, we record the exact baseline protocols used in the main
	experiments. The primary comparisons include six neural or classical baselines
	(Vanilla PINN, Vanilla PINN+proj, PI-DeepONet, MC-Dropout PINN, Deep
	Ensembles, stochastic collocation, \add{ SPINN+Z, functional CP-PINN, and Neural Chaos),} and PC$^{2}$ is another 
	physics-informed polynomial-chaos baseline for the selected forward
	benchmarks.

	{\paragraph{Common training configuration (all neural baselines).}
	Unless otherwise noted, every neural baseline shares the following implementation choices, which match those of S$^{2}$-PINN to ensure a fair comparison:
	\begin{itemize}[itemsep=2pt, leftmargin=*]
	    \item \textit{Optimizer.} Adam with $(\beta_1,\beta_2)=(0.9,0.999)$, $\epsilon=10^{-8}$, no learning-rate scheduler\add{ (except for the tuned PI-DeepONet, see below)}, and no gradient clipping. The base learning rate is $10^{-3}$ for diffusion / Allen--Cahn / Burgers and $5\!\times\!10^{-4}$ for Darcy.
	    \item \textit{Initialization.} All \texttt{nn.Linear} layers use Xavier-normal weights and zero biases.
	    \item \textit{Loss weights.} For diffusion / Allen--Cahn / Burgers, $w_{\mathrm{pde}}=w_{\mathrm{bc}}=w_{\mathrm{ic}}=1.0$; for Darcy, $w_{\mathrm{bc}}=w_{\mathrm{ic}}=10.0$ to compensate for the steeper boundary layers. The projection weight $w_{\mathrm{proj}}$ is $1.0$ for ``$+$proj'' variants and $0$ otherwise (see per-baseline notes below).
	    \item \textit{Training horizon.} 20{,}000 Adam steps for diffusion / Allen--Cahn / Burgers and 30{,}000 for Darcy. 
	    \item \textit{Sampling strategy.} All collocation, boundary, and initial-condition batches are resampled \emph{every} step (\texttt{resample\_every}\,$=\!1$). Collocation points are drawn uniformly at random in $\mathcal{T}\!\times\!\Omega$; boundary points draw $t\sim\mathcal{U}(\mathcal{T})$, the free spatial coordinate $\sim\mathcal{U}([-1,1])$, and the boundary edge index $\sim\mathrm{Uniform}\{1,2,3,4\}$; initial-condition points fix $t=t_{\min}$ with uniform $(x,y)$. Random inputs are drawn from $\mathcal{N}(0,I_R)$ for Gaussian configurations and from $\mathcal{U}([-1,1]^R)$ for uniform configurations.
	\end{itemize}}

	\paragraph{Vanilla PINN.}
	The standard physics-informed neural network represents the solution $u(t,\mathbf{x}
	,\mathbf{Z})$ as a fully connected Multi-Layer Perceptron (MLP) taking the concatenated
	space, time, and stochastic variables as inputs. We use a 4-layer MLP with
	256 hidden units per layer and hyperbolic tangent ($\tanh$) activations. {Concretely, the architecture is $(3{+}R)\!\to\!256\!\to\!256\!\to\!256\!\to\!256\!\to\!1$ with $\tanh$ on every hidden layer and a linear output; for the main stochastic runs $R{=}5$, giving an input dimension of $8$.} The
	loss function is defined as 
    \begin{equation}
        \mathcal{L}_{\text{PINN}}= w_{\text{pde}}
	\mathbb{E}[r^{2}] + w_{\text{bc}}\mathbb{E}_{\partial\Omega}[(u-u^{*})^{2}
	] + w_{\text{ic}}\mathbb{E}[(u-u^{*})^{2}|_{t=0}].
    \end{equation}
The network relies
	purely on dense sampling and gradient descent to learn the complex high-dimensional
	stochastic dependence without any explicit polynomial basis.

	\paragraph{Vanilla PINN + proj.}
	This baseline uses the exact same MLP architecture as the Vanilla PINN but augments
	the standard loss function with the gPC Galerkin projection term introduced in
	our method:
	$\mathcal{L}_{\text{PINN+proj}}= \mathcal{L}_{\text{PINN}}+ w_{\text{proj}}\sum
	_{g=1}^{G}\big(\mathbb{E}[r\Psi_{g}(\mathbf{Z})]\big)^{2}${, with $w_{\mathrm{proj}}{=}1.0$ matching the S$^{2}$-PINN setting}. This isolates the
	effect of the physics objective from the separable tensor architecture.

	\paragraph{PI-DeepONet.}
	Physics-Informed DeepONet~\cite{goswami2023physics} utilizes a dual-network
	architecture consisting of a Branch net and a Trunk net. The Branch net takes
	the stochastic parameters $\mathbf{Z}$ and outputs $q$ latent coefficients.
	The Trunk net processes the spatiotemporal coordinates $(t,\mathbf{x})$ and
	outputs $q$ basis functions. The final solution is computed as the inner product
	of these two latent vectors:
	$u(t,\mathbf{x},\mathbf{Z}) = \sum_{k=1}^{q}b_{k}(\mathbf{Z}) \tau_{k}(t,\mathbf{x}
	) {\,+\,b_{0}}${, where $b_{0}$ is a single learnable scalar bias added to the inner product. Concretely, both branches use $3$ hidden layers of width $128$ with $\tanh$ activations and a linear projection to a $q{=}128$-dimensional latent: Branch is $R\!\to\!128\!\to\!128\!\to\!128\!\to\!128$ (so input dim $5$ for $R{=}5$) and Trunk is $3\!\to\!128\!\to\!128\!\to\!128\!\to\!128$ (taking $(t,x,y)$). Weights are Xavier-normal initialised with zero biases}. It is trained using the standard
	$\mathcal{L}_{\text{PINN}}$ loss. \add{Because the default configuration diverged on Allen--Cahn, the reported results use a tuned setting: learning rate $10^{-3}$, and on Allen--Cahn and Darcy a \texttt{ReduceLROnPlateau} scheduler with factor $0.5$, patience $3$, and minimum learning rate $10^{-6}$.}
	\paragraph{Derivative-informed operator training.}
	Rather than constraining the residual, derivative-informed objectives supervise
	the input--output sensitivities of a neural operator, which matters when the
	surrogate is differentiated inside inverse problems or PDE-constrained
	optimization. Yang et al.~\cite{yang2026derivativeinformedtrainingneuraloperators}
	enforce this consistency entirely on-the-fly: a sketched tangent consistency
	loss derives the sensitivity constraint directly from the governing equation,
	so no offline tangent labels are required and the operator architecture is
	unchanged. We do not include it as a direct baseline because it targets
	tangent-level supervision of a \emph{deterministic} operator and carries no
	explicit stochastic basis, so it does not produce calibrated moments for random
	PDEs.
	\paragraph{MC-Dropout PINN.}
	To provide a strong probabilistic baseline for uncertainty quantification,
	we augment the Vanilla PINN with Dropout layers (with drop probability $p=0.1$)
	after every hidden layer. {Concretely, a dropout layer is inserted after the activation of every hidden layer (but not after the linear output), so the architecture is $\big[\text{Linear}\to\tanh\to\mathrm{Dropout}(p)\big]^{4}\to\text{Linear}$ with the same Xavier-normal initialization as the Vanilla PINN.} During training, dropout acts as a regularizer. During
	inference, dropout remains active, and we sample 50 forward passes for each input
	collocation point. This yields a predictive distribution that estimates the
	epistemic uncertainty of the network, which is then combined with the aleatoric
	variation over $\mathbf{Z}$ to compute coverage and calibration metrics. {The training loss is $\mathcal{L}_{\text{PINN}}$.}

	\paragraph{Deep Ensembles.}
	Following Lakshminarayanan et al.~\cite{DBLP:conf/nips/Lakshminarayanan17}, we
	train an ensemble of $K=5$ independent Vanilla PINNs, each initialized with a
	different random seed. {Each member uses the same Vanilla-PINN architecture (4 hidden layers $\times$ 256, $\tanh$) and the same 20{,}000-step training schedule, but draws its own collocation samples and weight initialization from the seed. Members are trained completely independently and then aggregated post-hoc: the prediction is the per-point mean and the epistemic uncertainty is the per-point standard deviation across the $K$ members. Uncertainty-quantification metrics are computed with $1024$ stochastic samples each for moment estimation, interval coverage, and held-out test points (\texttt{n\_z\_moment}\,$=\!$\,\texttt{n\_z\_interval}\,$=\!$\,\texttt{n\_z\_test}\,$=\!1024$).} Deep ensembles are widely considered a gold-standard
	baseline for predictive uncertainty. {Unlike the other no-projection baselines, the Deep-Ensembles launch script does not override $w_{\mathrm{proj}}$, so each member inherits $w_{\mathrm{proj}}{=}1.0$ from the base configuration. This is intentional, since the strongest possible Deep-Ensembles baseline includes the same projection regularizer used by S$^{2}$-PINN.}

	\paragraph{Stochastic Collocation (SC).}
	SC is a classical non-neural spectral method that leverages tensorized {Gauss quadrature} nodes in the stochastic space.
    For a quadrature order $n_{q}$, we define a grid of $n_{q}^{R}$ nodes in $\mathbb{R}^{R}$. We then train a
	completely independent, deterministic spatial PINN $u_{i}(t,\mathbf{x})$ for
	each node $\mathbf{Z}_{i}$. The statistical moments of the random field are
	reconstructed post-training using the corresponding quadrature weights{: $\bar{u}(t,\mathbf{x})=\sum_{i}W_{i}\,u_{i}(t,\mathbf{x})$ and $\mathrm{Var}(u)(t,\mathbf{x})=\max\!\big(0,\,\sum_{i}W_{i}\,u_{i}(t,\mathbf{x})^{2}-\bar{u}(t,\mathbf{x})^{2}\big)$, where $\{W_{i}\}$ are the tensor-product quadrature weights and the clamp at zero guards against round-off-induced negative variance}. While
	highly accurate for smooth problems, the number of required PINNs scales
	exponentially with $R$, leading to severe computational bottlenecks (e.g.,
	$n_{q}=3$ for $R =5$ requires 243 independent training runs).
	{The reported tractable SC baseline uses $n_{q}{=}2$ per stochastic dimension, which gives $2^{5}{=}32$ tensor-product nodes for $R{=}5$. Each node is a Vanilla-PINN-style spatial network trained for $5{,}000$ Adam steps on diffusion / Allen--Cahn / Burgers and $8{,}000$ steps on Darcy. 
    Reference Monte-Carlo statistics for SC's relative-$L^{2}$ metrics are computed with $4{,}096$ stochastic samples and $2{,}048$ spatial probes ($n_{\mathrm{mc,ref}}{=}4096$, $n_{\mathrm{probes}}{=}2048$).}
    {\paragraph{Physics-informed polynomial chaos (PC$^{2}$).}
We implement the PC$^{2}$ baseline of Nov\'ak et al.~\cite{novak2024physics}
using UQpy's physics-informed PCE stack. Dirichlet data are used as boundary
regression data, while PDE residual constraints are enforced at virtual
collocation points through the KKT system in \texttt{ConstrainedPCE}. To avoid
an unfair capacity mismatch, we do not use UQpy's native total-degree basis over
all variables as the main comparison; instead, we use an anisotropic product
basis with stochastic total degree fixed at $p_Z{=}3$ and deterministic degree
$p_{\mathrm{det}}$ swept over $\{3,4,5\}$ for diffusion and $\{4,6,8\}$ for
Poisson/Darcy. Model selection uses only held-out PDE/boundary residuals. This
keeps the stochastic gPC capacity aligned with S$^{2}$-PINN while giving PC$^{2}$
a reasonable deterministic polynomial space.}

	\begin{addblock}
	\paragraph{SPINN+Z.}
	A separable PINN~\cite{cho2024separable} in which the random variables $Z_1,\dots,Z_R$ are treated as additional separable input coordinates, each processed by its own coordinate network, with the per-coordinate features combined by the SPINN outer-product/summation rule. It is trained with $\mathcal{L}_{\text{PINN}}$ and contains no explicit stochastic basis or projection term.

	\paragraph{Functional CP-PINN.}
	The functional tensor decomposition PINN of Vemuri et al.~\cite{vemuri2024functional}, in which each coordinate (space, time, and random variables) is assigned a separate sub-network and the outputs are coupled by a CP outer product. It is trained with $\mathcal{L}_{\text{PINN}}$.

	\paragraph{Neural Chaos.}
	Neural Chaos~\cite{bahmani2025neural} replaces the fixed polynomial chaos basis with sequentially learned neural basis functions. As a data-driven operator, it is trained on reference solution realizations and therefore acts as a supervised oracle rather than a purely physics-informed competitor; a physics residual is only evaluated post hoc. Since diffusion, Allen--Cahn, and Burgers share the same manufactured solution field, a single shared checkpoint is used for these three equations, and only the equation-specific post-hoc residual differs.

    	\paragraph{Architectural PINN variants.}
	Two recent PINN architectures improve the base network rather than the
	stochastic treatment. compleX-PINN~\cite{si2026complex} replaces the activation
	with a learnable form derived from Cauchy's integral theorem, obtaining high
	accuracy with a single hidden layer and reporting gains on high-dimensional
	problems. IDPINN~\cite{si2025initialization} decomposes the domain into
	subdomains with dedicated networks and initializes training from a
	small-data pretrained model, improving interface smoothness and convergence
	speed. We do not include either as a baseline because both are
	\emph{deterministic} architecture or optimization devices: they carry no
	explicit stochastic basis and therefore produce no calibrated moments or
	variance estimates for random PDEs. Their improvements are orthogonal to ours
	in principle and could be combined with the separable gPC representation
	introduced here.
	\end{addblock}



    \section{Model Architectures, Parameter Counts, and Training Configuration}
	\label{sec:model-details}
	\Cref{tab:model-params} summarises the architecture and learnable
	parameter count for the main neural baselines and stochastic collocation.
	PC$^{2}$ is excluded from this parameter table because its capacity is better
	described by polynomial basis size rather than neural weights. S$^{2}$-PINN
	uses roughly $30\times$ fewer neural parameters than a single Vanilla PINN
	because the CP kernel couples three compact, purpose-built feature extractors
	instead of a monolithic MLP.
    
	\begin{table}[htbp]
		\centering
		\caption{Architecture details and parameter counts for all compared
		methods on the $R{=}5$ stochastic diffusion benchmark. $M$: spatial
		atoms; $M_{t}$: temporal features; $G$: gPC basis size; $q$: CP rank; $h$:
		hidden width; $L$: number of hidden layers.}
		\label{tab:model-params} \resizebox{\columnwidth}{!}{%
		\begin{tabular}{llcr}
			\toprule Method                & Architecture                                                                                                                                                                                & Training steps & \#Params    \\
			\midrule \textbf{S$^{2}$-PINN} & \makecell[l]{SpatialDict2D ($M{=}256$, warp MLP $2{\to}64{\to}2$) \\ + TimeBasis ($K{=}8 \Rightarrow M_{t}{=}17$) \\ + GPCBasis ($R{=}5, p{=}3 \Rightarrow G{=}56$) \\ + CPCore ($q{=}16$)} & 20\,000        & 7\,018      \\
			\midrule Vanilla PINN          & MLP $8{\to}256{\to}256{\to}256{\to}256{\to}1$, tanh                                                                                                                                         & 20\,000        & 199\,937    \\
			Vanilla PINN+proj              & same + gPC projection loss                                                                                                                                                                  & 20\,000        & 199\,937    \\
			MC-Dropout PINN                & same + dropout $p{=}0.1$                                                                                                                                                                    & 20\,000        & 199\,937    \\
			PI-DeepONet                    & \makecell[l]{Branch: $5{\to}128{\to}128{\to}128{\to}128$ \\ Trunk: $3{\to}128{\to}128{\to}128{\to}128$}                                                                                     & 20\,000        & 101\,633    \\
			SC ($n_{q}{=}2$, 32 nodes)     & 32 $\times$ Vanilla PINN                                                                                                                                                                    & 5\,000 each    & 6\,397\,504 \\
			\bottomrule
		\end{tabular}%
		}
	\end{table}
 All experiments are based on NVIDIA T4. The S$^{2}$-PINN
	parameter count decomposes as
	\begin{itemize}
		\setlength{\itemsep}{0pt}

		\item \textit{Spatial dictionary}: 256 centres ($\mu \in \mathbb{R}^{256\times
			2}$), 256 log-bandwidths, warp MLP ($2{\times}64{+}64{+}64{\times}2{+}
			2 = 194$
			params) $\Rightarrow$ 962.

		\item \textit{Temporal basis}: 8 learnable log-frequencies $\Rightarrow$
			8.

		\item \textit{gPC basis}: fixed Hermite polynomials (no learnable params).

		\item \textit{CP core}: $A\!\in\!\mathbb{R}^{256\times 16},\, B\!\in\!\mathbb{R}
			^{17\times 16},\, D\!\in\!\mathbb{R}^{56\times 16},\, w\!\in\!\mathbb{R}
			^{16}$ $\Rightarrow$ 5\,280.

		\item \textit{Warp MLP}: $2{\times}64{+}64{+}64{\times}2{+}2 = 194$ $\Rightarrow$
			included above. Biases account for the remainder up to 7\,018.
	\end{itemize}

	{\paragraph{Training algorithm and default hyperparameters.}
	The pseudocode~\Cref{alg:training} summarizes the training loop.}
	\begin{algorithm}
		[H]
		\caption{Training loop}
		\label{alg:training}
		\begin{algorithmic}
			[1] \State Initialize dictionary/warp parameters, CP factors
			$(A,B,D,w)$ \For{$s=1$ to $S$} \If{$s \bmod S_{\text{rs}}=1$} \State
			Resample PDE / BC / IC / Gram batches \EndIf \State Build features
			$\mathbf{F}_{\text{space}},\mathbf{F}_{\text{time}},\mathbf{F}_{\text{stoc}}$
			\State Compute $u$, $\partial_{t}u$, and $\Delta u$ (analytic time,
			autodiff space) \State Evaluate residuals $r$ and all loss terms in \eqref{eq:loss}
			\State Adam step
			\State Column-normalize CP factors; update moving stats for Gram if
			used \EndFor
		\end{algorithmic}
	\end{algorithm}

	\paragraph{Default Hyperparameters.}
	For the four benchmark problems in Tables~\ref{tab:mean-accuracy}--\ref{tab:var-accuracy},
	we use
	\[
    M=256,\quad M_{t}=17,\quad (R,p)=(5,3)\Rightarrow
		G=\tbinom{8}{3}=56,\quad q=16,
	\]
	loss weights
	\[
		\begin{aligned}
			w_{\mathrm{pde}} =w_{\mathrm{proj}}=w_{\mathrm{bc}}=w_{\mathrm{ic}}=
			w_{\mathrm{orth}} =1.0,\, \lambda_{\mathrm{fac}}=0.1,
		\end{aligned}
	\]
	gradient accumulation $\kappa=1$, resampling period $S_{\text{rs}}=1$, and collocation
	batch sizes of $32{,}768$ for PDE, boundary, and initial conditions.

	\section{Ablation Study Details}
	\label{sec:ablation-details}

	We conducted controlled ablations on the stochastic diffusion benchmark to
	isolate the contribution of each hyperparameter. All experiments report errors
	at final time $t{=}1.0$. Tables~\ref{tab:ablation_baseline_arch}--\ref{tab:ablation_cp_rank}
	present the full results for baseline architecture, spatial atoms, gPC order, and CP rank, respectively. 

	\begin{table}[htbp]
		\centering
		\caption{Baseline Architecture Ablation (hidden size × layers)}
		\label{tab:ablation_baseline_arch} \resizebox{\columnwidth}{!}{%
		\begin{tabular}{lccc}
			\toprule Configuration               & $\mathcal{E}_{\bar{u}}$                   & $\mathcal{E}_{u_1}$                       & $\mathcal{E}_{u_2}$                       \\
			\midrule \textit{default (baseline)} & $(4.23 \pm 1.03) \times 10^{-3}$          & $(3.97 \pm 1.39) \times 10^{-3}$          & $(4.25 \pm 1.07) \times 10^{-3}$          \\
			$h=128, L=4 $                          & $(4.41 \pm 0.63) \times 10^{-2}$          & $(4.10 \pm 0.34) \times 10^{-2}$          & $(3.86 \pm 0.64) \times 10^{-2}$          \\
			$h=128, L=6$                           & \textbf{$(2.33 \pm 0.44) \times 10^{-2}$} & \textbf{$(2.12 \pm 0.44) \times 10^{-2}$} & \textbf{$(2.26 \pm 0.38) \times 10^{-2}$} \\
			$h=256, L=4$                           & $(3.26 \pm 1.39) \times 10^{-2}$          & $(3.09 \pm 1.09) \times 10^{-2}$          & $(3.04 \pm 1.01) \times 10^{-2}$          \\
			$h=256, L=6$                           & $(6.10 \pm 2.69) \times 10^{-2}$          & $(5.81 \pm 2.56) \times 10^{-2}$          & $(5.68 \pm 2.58) \times 10^{-2}$          \\
			\bottomrule
		\end{tabular}%
		}
	\end{table}

	\begin{table}[htbp]
		\centering
		\caption{Spatial Atoms (M) Ablation}
		\label{tab:ablation_spatial_atoms} \resizebox{\columnwidth}{!}{%
		\begin{tabular}{lccc}
			\toprule Configuration               & $\mathcal{E}_{\bar{u}}$                   & $\mathcal{E}_{u_1}$                       & $\mathcal{E}_{u_2}$                       \\
			\midrule \textit{default (baseline)} & $(4.23 \pm 1.03) \times 10^{-3}$          & $(3.97 \pm 1.39) \times 10^{-3}$          & $(4.25 \pm 1.07) \times 10^{-3}$          \\
			$M=64$                                 & $(9.79 \pm 1.95) \times 10^{-3}$          & $(9.82 \pm 1.89) \times 10^{-3}$          & $(9.77 \pm 1.90) \times 10^{-3}$          \\
			$M=128$                                & $(6.11 \pm 1.37) \times 10^{-3}$          & $(6.16 \pm 1.30) \times 10^{-3}$          & $(6.07 \pm 1.30) \times 10^{-3}$          \\
			$M=512$                                & \textbf{$(1.91 \pm 0.49) \times 10^{-3}$} & \textbf{$(1.89 \pm 0.39) \times 10^{-3}$} & \textbf{$(1.93 \pm 0.62) \times 10^{-3}$} \\
			\bottomrule
		\end{tabular}%
		}
	\end{table}

	\begin{table}[H]
		\centering
		\caption{gPC Order (p) Ablation}
		\label{tab:ablation_gpc_order} \resizebox{\columnwidth}{!}{%
		\begin{tabular}{lccc}
			\toprule Configuration               & $\mathcal{E}_{\bar{u}}$                   & $\mathcal{E}_{u_1}$                       & $\mathcal{E}_{u_2}$                       \\
			\midrule \textit{default (baseline)} & $(4.23 \pm 1.03) \times 10^{-3}$          & $(3.97 \pm 1.39) \times 10^{-3}$          & $(4.25 \pm 1.07) \times 10^{-3}$          \\
			p=1                                  & $(3.85 \pm 1.75) \times 10^{-3}$          & $(4.30 \pm 1.91) \times 10^{-3}$          & $(4.44 \pm 1.68) \times 10^{-3}$          \\
			p=2                                  & \textbf{$(3.50 \pm 1.06) \times 10^{-3}$} & \textbf{$(3.54 \pm 1.03) \times 10^{-3}$} & \textbf{$(3.51 \pm 1.12) \times 10^{-3}$} \\
			p=4                                  & $(2.56 \pm 3.43) \times 10^{-2}$          & $(2.35 \pm 3.08) \times 10^{-2}$          & $(2.55 \pm 3.41) \times 10^{-2}$          \\
			\bottomrule
		\end{tabular}%
		}
	\end{table}

	\begin{table}[H]
		\centering
		\caption{CP Rank ($q$) Ablation}
		\label{tab:ablation_cp_rank} \resizebox{\columnwidth}{!}{%
		\begin{tabular}{lccc}
			\toprule Configuration               & $\mathcal{E}_{\bar{u}}$                   & $\mathcal{E}_{u_1}$                       & $\mathcal{E}_{u_2}$                       \\\midrule 
			$q=4$& $(4.74 \pm 1.84) \times 10^{-3}$          & $(4.66 \pm 1.79) \times 10^{-3}$          & $(4.79 \pm 1.89) \times 10^{-3}$          \\
			$q=8$& \textbf{$(2.80 \pm 1.04) \times 10^{-3}$} & \textbf{$(2.73 \pm 1.09) \times 10^{-3}$} & \textbf{$(2.85 \pm 0.93) \times 10^{-3}$} \\
             $q=16$ (\textit{default}) & $(4.23 \pm 1.03) \times 10^{-3}$          & $(3.97 \pm 1.39) \times 10^{-3}$          & $(4.25 \pm 1.07) \times 10^{-3}$          \\
			$q=32$& $(3.24 \pm 0.91) \times 10^{-3}$          & $(3.43 \pm 0.42) \times 10^{-3}$          & $(3.15 \pm 0.83) \times 10^{-3}$          \\
			\bottomrule
		\end{tabular}%
		}
	\end{table}

	\section{Scaling Study: Stochastic Dimension and CP Rank}
	\label{sec:scaling-sweep}

	To characterize how S$^{2}$-PINN scales with stochastic dimensionality $R$ and
	CP rank $q$, we sweep $R\in\{3,5,8,10\}$ and $q\in\{8,16,32\}$ on the
	stochastic diffusion benchmark with $p{=}3$ fixed. 
    \Cref{tab:scaling-sweep} shows the full results
	including wall-clock time and peak GPU memory. The $R=15$ setting is included as a larger stress test and is evaluated for $q\in\{16,32\}$.
    
	{The table supports three conclusions. First, mean and variance
	errors remain in the $10^{-3}$ to $10^{-2}$ range through the diffusion stress
	test, including $R{=}15$ where the gPC basis has $G{=}816$ modes. Second, larger
	CP rank is not uniformly better on this benchmark: $q{=}8$ is competitive for
	$R\le 10$, while $q{=}32$ improves the reported $R{=}15$ run. Third, wall-clock
	time increases with stochastic dimension, but peak GPU memory remains close to
	$1$\,GB, consistent with the low-rank CP coupling avoiding a dense
	$M M_t G$ tensor.}
	\add{For reference, $R{=}15$ with $p{=}3$ already yields $G{=}\binom{18}{3}{=}816$ stochastic modes, comparable to the number of retained modes in KL-based stochastic modeling studies of heterogeneous media. To probe extreme scaling, we additionally run $R{=}50$ ($G{=}23{,}426$): the mean and variance errors remain at the $10^{-2}$ level, while peak memory grows to about $18$\,GB, reflecting the polynomial growth of $G$ with $R$ (Appendix~\ref{sec:cost-analysis}).}

	\begin{table}[htbp]
		\centering
		\caption{Scaling sweep on stochastic diffusion: mean/variance relative $L
		^{2}$ error, wall-clock time, and peak GPU memory
		across stochastic dimension $R$ and CP rank $q$ ($p{=}3$, $M{=}256$, 20K
		steps.) \add{$^{\ast}$The $R{=}50$ run ($G{=}23{,}426$) uses learning rate $10^{-4}$ and an NVIDIA L40S GPU because of its ${\approx}18$\,GB peak memory; its time is therefore not directly comparable with the T4 runs.}}
		\label{tab:scaling-sweep} \scriptsize \resizebox{\columnwidth}{!}{%
		\begin{tabular}{cc|cccc}
			\toprule $R$ & $q$ & Mean rel.\ $L^{2}$                        & Var.\ rel.\ $L^{2}$                       & Time (min)      & Mem (GB)      \\
			\midrule 3   & 8   & $(5.88 \pm 1.83) \times 10^{-3}$          & $(8.00 \pm 2.37) \times 10^{-3}$          & $84.6 \pm 0.9$  & $0.99$        \\
			3            & 16  & $(5.81 \pm 0.89) \times 10^{-3}$          & $(7.32 \pm 1.38) \times 10^{-3}$          & $84.9 \pm 0.6$  & $0.99$        \\
			3            & 32  & $(5.37 \pm 0.66) \times 10^{-3}$          & $(9.61 \pm 2.98) \times 10^{-3}$          & $84.3 \pm 0.5$  & $0.99$        \\
			\midrule 5   & 8   & \textbf{$(4.50 \pm 0.31) \times 10^{-3}$} & $(6.96 \pm 2.23) \times 10^{-3}$          & $93.6 \pm 0.4$  & $0.99$        \\
			5            & 16  & $(6.43 \pm 1.71) \times 10^{-3}$          & $(1.26 \pm 0.45) \times 10^{-2}$          & $93.6 \pm 0.6$  & $0.99$        \\
			5            & 32  & $(4.58 \pm 0.94) \times 10^{-3}$          & $(7.94 \pm 1.29) \times 10^{-3}$          & $94.0 \pm 0.2$  & $0.99$        \\
			\midrule 8   & 8   & $(5.86 \pm 0.73) \times 10^{-3}$          & $(8.14 \pm 1.46) \times 10^{-3}$          & $146.1 \pm 1.1$ & $0.99$        \\
			8            & 16  & $(5.81 \pm 2.25) \times 10^{-3}$          & $(8.50 \pm 2.17) \times 10^{-3}$          & $146.7 \pm 0.4$ & $0.99$        \\
			8            & 32  & $(6.41 \pm 1.72) \times 10^{-3}$          & $(8.69 \pm 1.02) \times 10^{-3}$          & $146.1 \pm 1.0$ & $1.00$        \\
			\midrule 10  & 8   & $(4.27 \pm 0.62) \times 10^{-3}$          & \textbf{$(5.57 \pm 0.98) \times 10^{-3}$} & $233.6 \pm 1.7$ & $0.99$        \\
			10           & 16  & $(5.49 \pm 2.30) \times 10^{-3}$          & $(7.49 \pm 3.44) \times 10^{-3}$          & $233.8 \pm 1.8$ & $0.99$        \\
			10           & 32  & $(5.16 \pm 1.78) \times 10^{-3}$          & $(6.10 \pm 1.81) \times 10^{-3}$          & $234.5 \pm 1.0$ & $0.99$        \\
			\midrule 15  & 16  & $(7.30 \pm 1.55) \times 10^{-3}$          & $(9.46 \pm 2.84) \times 10^{-3}$          & {$677.3\pm 1.7$}   & {$1.06$} \\
			15           & 32  & $(4.18 \pm 0.52) \times 10^{-3}$          & $(5.40 \pm 0.39) \times 10^{-3}$          &  {$679.5\pm 2.3$}   & {$1.06$}  \\
			\midrule \add{50$^{\ast}$} & \add{--} & \add{$(1.09 \pm 0.14) \times 10^{-2}$} & \add{$(1.24 \pm 0.30) \times 10^{-2}$} & \add{$131.9 \pm 0.3$} & \add{$18.0$} \\
			\bottomrule
		\end{tabular}%
		}
	\end{table}

	\section{Collocation Density Sensitivity}
	\label{sec:colloc-sweep}

	We vary the number of collocation points $N_{c}\in\{4096, 8192, 16384, 32768,
	65536\}$ on the stochastic diffusion benchmark ($R{=}5$, $p{=}3$, $q{=}16$, 20K
	steps) to characterize sensitivity to spatial sampling
	density. \Cref{tab:colloc-sweep} reports the results.

	{Mean error decreases modestly from $6.34\!\times\!10^{-3}$ at
	$N_{c}{=}4096$ to $4.68\!\times\!10^{-3}$ at $N_{c}{=}65536$, a
	$1.4\times$ improvement over a $16\times$ increase in batch size. Variance
	error is also lowest at the largest batch, while calibration error is not
	monotone. We therefore use $N_{c}{=}32{,}768$ as a practical default rather
	than claiming that larger collocation batches are uniformly better.}

	\begin{table}[htbp]
		\centering
		\caption{Collocation density sweep on stochastic diffusion ($R{=}5$,
		$p{=}3$, $q{=}16$, 20K steps).}
		\label{tab:colloc-sweep} \scriptsize \resizebox{\columnwidth}{!}{%
		\begin{tabular}{c|ccc}
			\toprule $N_{c}$ & Mean rel.\ $L^{2}$                        & Var.\ rel.\ $L^{2}$                       & Cal.\ error                      \\
			\midrule 4096    & $(6.34 \pm 3.27) \times 10^{-3}$          & $(1.30 \pm 1.25) \times 10^{-2}$          & $(1.85 \pm 0.32) \times 10^{-2}$ \\
			8192             & $(5.95 \pm 1.58) \times 10^{-3}$          & $(9.70 \pm 3.23) \times 10^{-3}$          & $(1.47 \pm 0.53) \times 10^{-2}$ \\
			16384            & $(5.72 \pm 0.79) \times 10^{-3}$          & $(9.54 \pm 1.75) \times 10^{-3}$          & $(1.66 \pm 0.72) \times 10^{-2}$ \\
			32768            & $(6.43 \pm 1.71) \times 10^{-3}$          & $(1.26 \pm 0.45) \times 10^{-2}$          & $(1.27 \pm 0.35) \times 10^{-2}$ \\
			65536            & \textbf{$(4.68 \pm 0.56) \times 10^{-3}$} & \textbf{$(8.49 \pm 5.49) \times 10^{-3}$} & $(1.94 \pm 0.52) \times 10^{-2}$ \\
			\bottomrule
		\end{tabular}%
		}
	\end{table}

	\begin{addblock}
	\section{Computational Cost with Respect to \texorpdfstring{$q$}{q}, \texorpdfstring{$M$}{M}, and \texorpdfstring{$R$}{R}}
	\label{sec:cost-analysis}

	\paragraph{CP core.}
	For a batch of size $B$, the forward map is $u_b=\sum_{k=1}^{q}w_k(\mathbf{F}_{\text{space}}A)_{bk}(\mathbf{F}_{\text{time}}B_t)_{bk}(\mathbf{F}_{\text{stoc}}D)_{bk}$ with $A\in\mathbb{R}^{M\times q}$, $B_t\in\mathbb{R}^{M_t\times q}$, and $D\in\mathbb{R}^{G\times q}$. The CP core therefore has exactly $P_{\mathrm{CP}}=q(M+M_t+G+1)$ parameters, and the three matrix products cost $O(BMq)$, $O(BM_tq)$, and $O(BGq)$, i.e., $T_{\mathrm{CP}}=O\big(Bq(M+M_t+G)\big)$. A dense coupling tensor would require $P_{\mathrm{dense}}=MM_tG$ parameters and $T_{\mathrm{dense}}=O(BMM_tG)$ operations, so the CP coupling is cheaper whenever $q\ll MM_tG/(M+M_t+G)$.

	\paragraph{Cost in $q$.}
	Each additional CP component adds $\Delta_q P_{\mathrm{CP}}=M+M_t+G+1$ parameters and $\Delta_q T_{\mathrm{CP}}=O\big(B(M+M_t+G)\big)$ operations; for fixed $M,M_t,G$ we have $P_{\mathrm{CP}}=\Theta(q)$ and $T_{\mathrm{CP}}=\Theta(Bq)$. The best rank-$q$ approximation error is non-increasing in $q$, because rank-$(q{-}1)$ tensors are contained in the rank-$q$ class. The accuracy of a finitely trained model, however, need not improve monotonically, because optimization is non-convex and subject to a fixed step budget, sampling noise, and regularization. This explains the non-monotone behaviour across $q\in\{8,16,32\}$ in \Cref{tab:ablation_cp_rank,tab:scaling-sweep}.

	\paragraph{Cost in $M$.}
	Each additional Gaussian atom adds $q$ entries to $A$ and three dictionary parameters (two centre coordinates and one width), i.e., $\Delta_M P\approx q+3$, and costs $\Delta_M T=O\big(B(q+1)\big)$ for feature evaluation and contraction. For fixed remaining sizes, $P=\Theta(Mq)$ and $T=\Theta(BMq)$. First- and second-order spatial derivatives only multiply the $BM$ term by a constant that depends on the derivative order of the PDE and do not change the linear dependence on $M$.

	\paragraph{Cost in $R$.}
	The number of total-degree gPC modes is $G(R,p)=\binom{R+p}{p}$; increasing $R\to R+1$ adds $\Delta_R G=\binom{R+p}{p-1}$ modes, hence $\Delta_R P_{\mathrm{CP}}=q\binom{R+p}{p-1}$ and $\Delta_R T_{\mathrm{CP}}=O\big(Bq\binom{R+p}{p-1}\big)$. For fixed $p$, $G(R,p)=\Theta(R^p/p!)$, so that
	\[
	P_{\mathrm{CP}}=O\Big(qM+qM_t+q\tfrac{R^p}{p!}\Big),\qquad T_{\mathrm{CP}}=O\Big(Bq\big(M+M_t+\tfrac{R^p}{p!}\big)\Big).
	\]
	Thus, for a fixed degree $p$, the CP core grows polynomially (of degree $p$) in $R$ rather than exponentially; the curse of dimensionality is mitigated but not eliminated.

	\paragraph{gPC basis evaluation.}
	Evaluating all $G$ tensorized modes costs $T_{\mathrm{gPC,impl}}=O(BRp+BRG)$, where $BRp$ accounts for the univariate polynomials of each random variable and $BRG$ for assembling the tensorized modes. Since every total-degree multi-index with $|\boldsymbol\alpha|\le p$ has at most $p$ non-zero entries, a sparse multi-index evaluation reduces this to $T_{\mathrm{gPC,sparse}}=O(BRp+BpG)$.

	\paragraph{Projected loss and total training cost.}
	The projected residual adds $T_{\mathrm{proj}}=O(BG)$ for the residual--basis products and reductions. Writing $C_{\mathrm{PDE}}$ for the constant factor of the PDE residual and its automatic differentiation, one training step costs
	\[
	T_{\mathrm{step}}=O\Big(C_{\mathrm{PDE}}\,B\big[M+M_t+Rp+RG+q(M+M_t+G)\big]+BG\Big),
	\]
	where $RG$ can be replaced by $pG$ with sparse gPC evaluation, and $S$ training steps cost $T_{\mathrm{train}}=S\,T_{\mathrm{step}}$. The activation memory is approximately $O\big(B(M+M_t+G+q)\big)$, and the memory for parameters and optimizer states is $O\big(q(M+M_t+G)+M\big)$.

	\paragraph{Accuracy versus nominal capacity.}
	The retained index set $\Lambda$ (determined by $R$ and $p$) and the CP rank $q$ play different roles: $G=\binom{R+p}{p}$ is the number of retained gPC modes, whereas $q$ is the separation rank. For an orthonormal expansion $u=\sum_{\boldsymbol\alpha}u_{\boldsymbol\alpha}\Psi_{\boldsymbol\alpha}$, the best-approximation error of $\Lambda$ is $\|u-\mathsf{P}_\Lambda u\|_{L^2}^2=\sum_{\boldsymbol\alpha\notin\Lambda}|u_{\boldsymbol\alpha}|^2$. Enlarging $R$ or $p$ strictly reduces this truncation error only when the newly included coefficients carry non-negligible stochastic energy, and a larger space also increases estimation and optimization difficulty under a fixed sampling and training budget. This is visible in \Cref{tab:ablation_gpc_order,tab:scaling-sweep}: $q{=}8$ is competitive for $R\le 10$, whereas $q{=}32$ helps at $R{=}15$, and $p{=}2$ outperforms $p{=}4$ in the trained model. Larger $R$, $p$, and $q$ increase nominal capacity, but by themselves do not guarantee better finite-budget accuracy.

	\section{Stochastic Navier--Stokes Benchmark}
	\label{sec:navier-stokes}
	We consider the 2D incompressible Navier--Stokes equations in vorticity form,
	\[
	\partial_t\omega+\mathbf{u}\cdot\nabla\omega=\nu(\boldsymbol\xi)\Delta\omega+f,\qquad \nabla\cdot\mathbf{u}=0,\qquad \omega=\nabla\times\mathbf{u},
	\]
	on $\Omega=[0,2\pi]^2$, $t\in[0,5]$, with $\boldsymbol\xi\sim\mathcal{U}([-1,1]^5)$, where $\mathbf{u}=(u_1,u_2)$ is the velocity, $\omega$ the vorticity, and the pressure has been eliminated. The five random variables control the two phase shifts, the viscosity, and the amplitudes of two vortex modes:
	\[
	X=x+0.25\xi_1,\quad Y=y+0.25\xi_2,\quad \nu(\boldsymbol\xi)=0.05(1+0.2\xi_3)\in[0.04,0.06],
	\]
	\[
	A(\boldsymbol\xi)=1+0.1\xi_4\in[0.9,1.1],\quad B(\boldsymbol\xi)=0.5(1+0.2\xi_5)\in[0.4,0.6].
	\]
	With $a=Ae^{-2\nu t}$ and $b=Be^{-5\nu t}$, the stream function is $\psi=a\sin X\sin Y+b\sin(2X)\sin Y$, and the exact velocity and vorticity are
	\[
	u_1=\partial_Y\psi=a\sin X\cos Y+b\sin(2X)\cos Y,\quad u_2=-\partial_X\psi=-a\cos X\sin Y-2b\cos(2X)\sin Y,
	\]
	\[
	\omega=\partial_X u_2-\partial_Y u_1=2a\sin X\sin Y+5b\sin(2X)\sin Y.
	\]
	This construction satisfies incompressibility exactly; the linear time-derivative and diffusion terms cancel, while the two modes produce a non-zero nonlinear interaction. The resulting analytical forcing is $f(t,x,y,\boldsymbol\xi)=-6ab\sin^3X\sin Y\cos Y$. Results are reported in \Cref{tab:ns-moments,tab:ns-calibration}. Neural Chaos is again a supervised oracle.

	\begin{table}[htbp]
		\centering
		\caption{Stochastic Navier--Stokes: relative $L^2$ errors of the velocity mean and variance.}
		\label{tab:ns-moments}\scriptsize\resizebox{\columnwidth}{!}{%
		\begin{tabular}{lcccc}
			\toprule
			Method & $u_1$ mean & $u_2$ mean & $u_1$ variance & $u_2$ variance \\
			\midrule
			S$^{2}$-PINN (proj) & $\mathbf{(2.05\pm0.34)\times10^{-3}}$ & $\mathbf{(2.40\pm0.21)\times10^{-3}}$ & $\mathbf{(3.13\pm0.49)\times10^{-2}}$ & $\mathbf{(2.58\pm0.71)\times10^{-2}}$ \\
			S$^{2}$-PINN (no-proj) & $(1.09\pm0.36)\times10^{-2}$ & $(1.15\pm0.27)\times10^{-2}$ & $(2.10\pm0.10)\times10^{-1}$ & $(1.27\pm0.03)\times10^{-1}$ \\
			Vanilla PINN & $(3.44\pm0.43)\times10^{-2}$ & $(3.17\pm1.22)\times10^{-2}$ & $(2.15\pm0.21)\times10^{-1}$ & $(1.89\pm0.17)\times10^{-1}$ \\
			SPINN+Z & $(1.70\pm0.62)\times10^{-2}$ & $(1.79\pm0.34)\times10^{-2}$ & $(2.63\pm0.81)\times10^{-1}$ & $(2.05\pm0.82)\times10^{-1}$ \\
			Functional CP-PINN & $(3.48\pm0.75)\times10^{-2}$ & $(2.83\pm0.74)\times10^{-2}$ & $(1.39\pm0.39)\times10^{-1}$ & $(9.81\pm3.49)\times10^{-2}$ \\
			Neural Chaos (supervised oracle) & $(1.98\pm1.10)\times10^{-2}$ & $(1.93\pm0.64)\times10^{-2}$ & $(7.91\pm1.09)\times10^{-2}$ & $(1.40\pm0.38)\times10^{-1}$ \\
			\bottomrule
		\end{tabular}}
	\end{table}

	\begin{table}[htbp]
		\centering
		\caption{Stochastic Navier--Stokes: velocity calibration. Coverage values are fractions; Cov90 and Cov95 are better when closer to $0.900$ and $0.950$, and the calibration error is better when smaller.}
		\label{tab:ns-calibration}\scriptsize
		\begin{tabular}{llccc}
			\toprule
			Method & Component & Cov90 & Cov95 & Cal.~Err. \\
			\midrule
			S$^{2}$-PINN (proj) & $u_1$ & $\mathbf{0.886\pm0.002}$ & $\mathbf{0.938\pm0.002}$ & $\mathbf{0.011\pm0.003}$ \\
			 & $u_2$ & $\mathbf{0.892\pm0.001}$ & $\mathbf{0.944\pm0.001}$ & $\mathbf{0.007\pm0.001}$ \\
			S$^{2}$-PINN (no-proj) & $u_1$ & $0.771\pm0.026$ & $0.837\pm0.024$ & $0.119\pm0.025$ \\
			 & $u_2$ & $0.854\pm0.016$ & $0.909\pm0.015$ & $0.041\pm0.015$ \\
			Vanilla PINN & $u_1$ & $0.753\pm0.015$ & $0.811\pm0.015$ & $0.128\pm0.014$ \\
			 & $u_2$ & $0.793\pm0.011$ & $0.852\pm0.011$ & $0.095\pm0.011$ \\
			SPINN+Z & $u_1$ & $0.754\pm0.008$ & $0.810\pm0.008$ & $0.129\pm0.007$ \\
			 & $u_2$ & $0.803\pm0.020$ & $0.857\pm0.018$ & $0.085\pm0.018$ \\
			Functional CP-PINN & $u_1$ & $0.828\pm0.023$ & $0.881\pm0.021$ & $0.061\pm0.022$ \\
			 & $u_2$ & $0.841\pm0.015$ & $0.894\pm0.013$ & $0.051\pm0.014$ \\
			Neural Chaos (supervised oracle) & $u_1$ & $\mathbf{0.886\pm0.013}$ & $0.935\pm0.011$ & $\mathbf{0.012\pm0.009}$ \\
			 & $u_2$ & $0.887\pm0.003$ & $0.937\pm0.003$ & $0.010\pm0.002$ \\
			\bottomrule
		\end{tabular}
	\end{table}


	\section{Additional Baseline Results}
	\label{sec:extra-baselines}
	\Cref{tab:calibration-extra} complements \Cref{tab:calibration} with the calibration of PI-DeepONet (all metrics) and of SPINN+Z, functional CP-PINN, and Neural Chaos (calibration error only), and \Cref{tab:runtime} reports training time and peak GPU memory.
	\begin{table}[htbp]
		\centering
		\caption{Calibration of additional baselines (same protocol as \Cref{tab:calibration}). $^{\ddagger}$Supervised oracle with one shared checkpoint for diffusion, Allen--Cahn, and Burgers; see \Cref{tab:mean-accuracy}.}
		\label{tab:calibration-extra} \scriptsize \resizebox{\columnwidth}{!}{%
		\begin{tabular}{lcccccc}
			\hline
			Method / PDE & Cov@50\% & Cov@80\% & Cov@90\% & Cov@95\% & Sharpness$_{90}$ & Cal.~Err. \\
			\hline
\multicolumn{7}{l}{{PI-DeepONet}} \\
\quad {Diffusion} & {$0.414\pm0.048$} & {$0.678\pm0.064$} & {$0.770\pm0.066$} & {$0.819\pm0.062$} & {$0.197\pm0.002$} & {$0.117\pm0.060$} \\
\quad {Allen--Cahn} & {$0.413\pm0.046$} & {$0.676\pm0.062$} & {$0.767\pm0.063$} & {$0.815\pm0.060$} & {$0.199\pm0.001$} & {$0.120\pm0.058$} \\
\quad {Burgers} & {$0.461\pm0.024$} & {$0.741\pm0.031$} & {$0.834\pm0.030$} & {$0.878\pm0.028$} & {$0.202\pm0.005$} & {$0.059\pm0.028$} \\
\quad {Darcy} & {$0.448\pm0.053$} & {$0.723\pm0.060$} & {$0.827\pm0.052$} & {$0.887\pm0.044$} & {$0.095\pm0.001$} & {$0.067\pm0.050$} \\
\multicolumn{7}{l}{{SPINN+Z (calibration error only)}} \\
\quad {Diffusion} & {--} & {--} & {--} & {--} & {--} & {$0.013\pm0.003$} \\
\quad {Allen--Cahn} & {--} & {--} & {--} & {--} & {--} & {$0.011\pm0.002$} \\
\quad {Burgers} & {--} & {--} & {--} & {--} & {--} & {$0.013\pm0.001$} \\
\quad {Darcy} & {--} & {--} & {--} & {--} & {--} & {$0.008\pm0.001$} \\
\multicolumn{7}{l}{{Functional CP-PINN (calibration error only)}} \\
\quad {Diffusion} & {--} & {--} & {--} & {--} & {--} & {$0.153\pm0.054$} \\
\quad {Allen--Cahn} & {--} & {--} & {--} & {--} & {--} & {$0.173\pm0.064$} \\
\quad {Burgers} & {--} & {--} & {--} & {--} & {--} & {$0.159\pm0.038$} \\
\quad {Darcy} & {--} & {--} & {--} & {--} & {--} & {$0.050\pm0.025$} \\
\multicolumn{7}{l}{{Neural Chaos$^{\ddagger}$ (calibration error only)}} \\
\quad {Diffusion} & {--} & {--} & {--} & {--} & {--} & {$0.054\pm0.006$} \\
\quad {Allen--Cahn} & {--} & {--} & {--} & {--} & {--} & {--} \\
\quad {Burgers} & {--} & {--} & {--} & {--} & {--} & {--} \\
\quad {Darcy} & {--} & {--} & {--} & {--} & {--} & {$0.093\pm0.070$} \\
			\hline
		\end{tabular}%
		}
	\end{table}
	\begin{table}[htbp]
		\centering
		\caption{Training time and peak GPU memory on stochastic diffusion ($R{=}5$).}
		\label{tab:runtime}\small\resizebox{\columnwidth}{!}{%
		\begin{tabular}{lcccccc}
			\toprule
			 & S$^{2}$-PINN & Vanilla PINN & PI-DeepONet & SPINN+Z & Func.\ CP-PINN & Neural Chaos \\
			\midrule
			Time (min) & ${\approx}90$ & ${\approx}80$ & ${\approx}120$ & ${\approx}100$ & ${\approx}100$ & ${\approx}120$ \\
			Peak memory (GB) & $1.0$ & $4.0$ & $5.0$ & $0.3$ & $0.3$ & $1.5$ \\
			\bottomrule
		\end{tabular}}
	\end{table}
	\end{addblock}

	\section{Visualizations of Stochastic Fields}
	\label{sec:visualizations}

	To visually compare spatial stochastic structure across PDE families, we present
	moment heatmaps (mean/variance) at $t{=}1.0$ for diffusion, Allen--Cahn,
	Burgers, and the two non-manufactured benchmarks (Poisson MC and Darcy MC).
	Each panel includes both 2D fields and 3D surfaces for reference, model prediction,
	and absolute error.

	\begin{figure}[htbp]
		\centering
		\includegraphics[width=\linewidth]{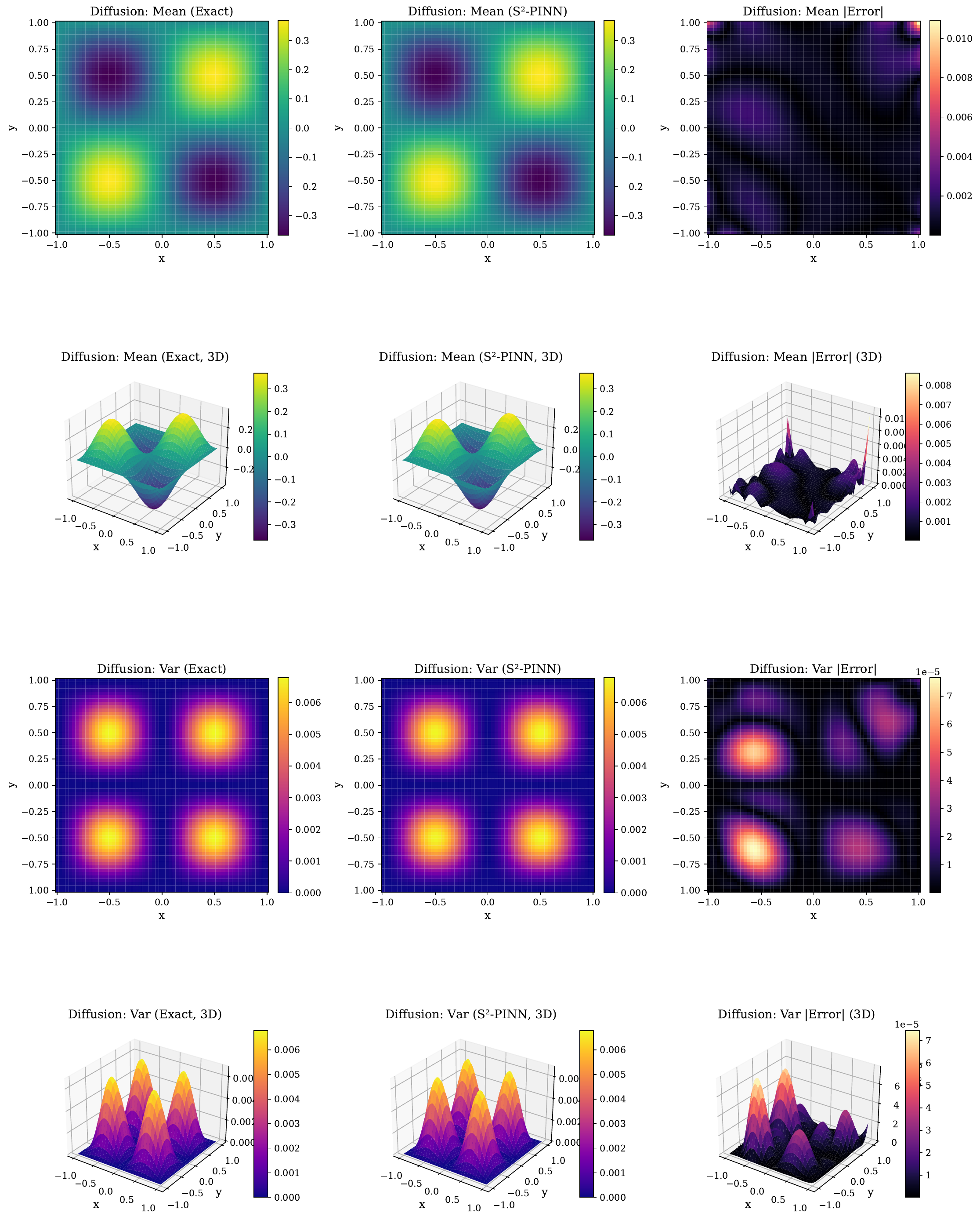}
		\caption{Diffusion heatmaps (2D and 3D) at $t{=}1.0$: reference mean/variance,
		model-predicted mean/variance, and absolute errors.}
		\label{fig:heatmaps-diffusion}
	\end{figure}

	\begin{figure}[htbp]
		\centering
		\includegraphics[width=\linewidth]{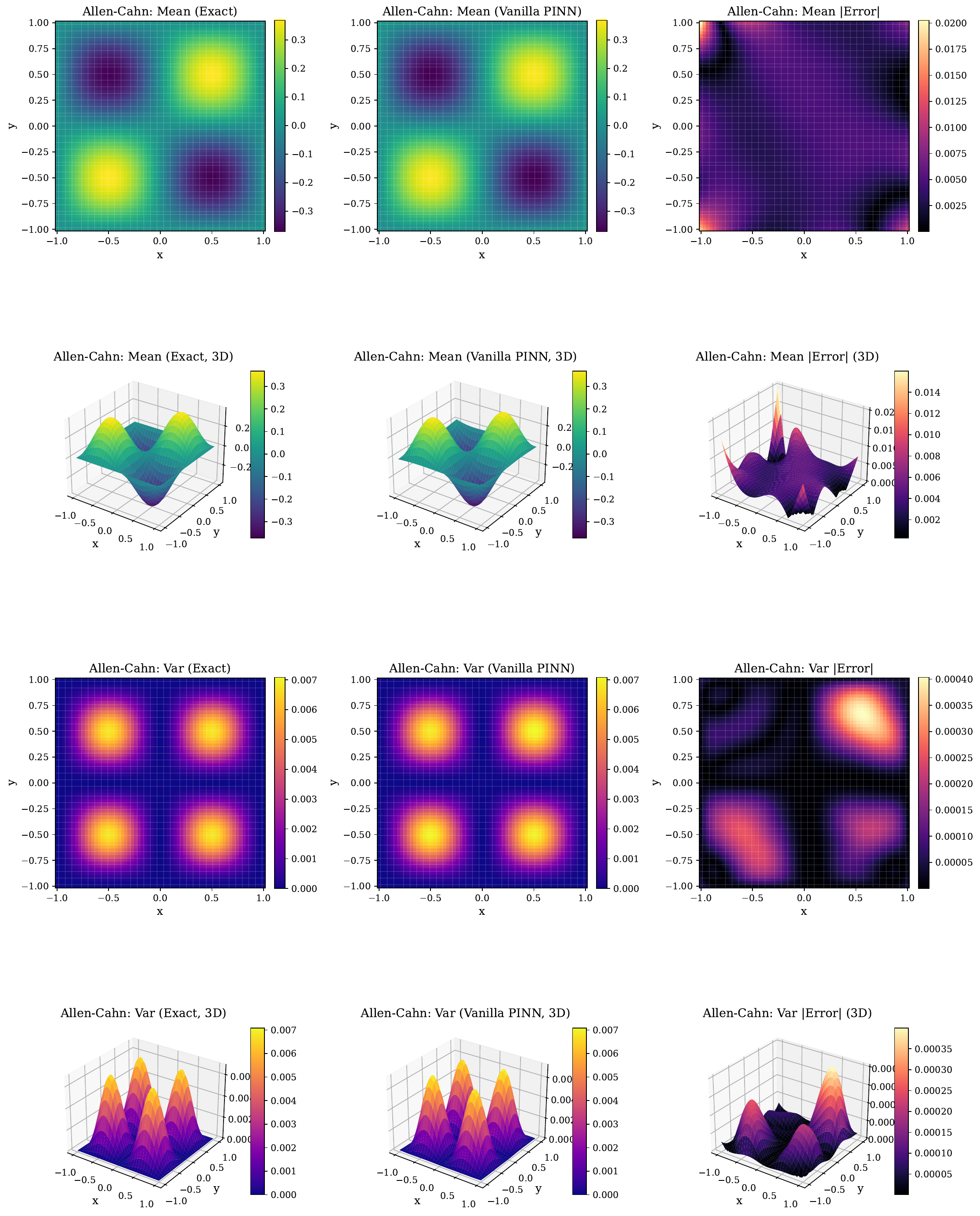}
		\caption{Allen--Cahn heatmaps (2D and 3D) at $t{=}1.0$: reference mean/variance,
		model-predicted mean/variance, and absolute errors.}
		\label{fig:heatmaps-allen-cahn}
	\end{figure}

	\begin{figure}[htbp]
		\centering
		\includegraphics[width=\linewidth]{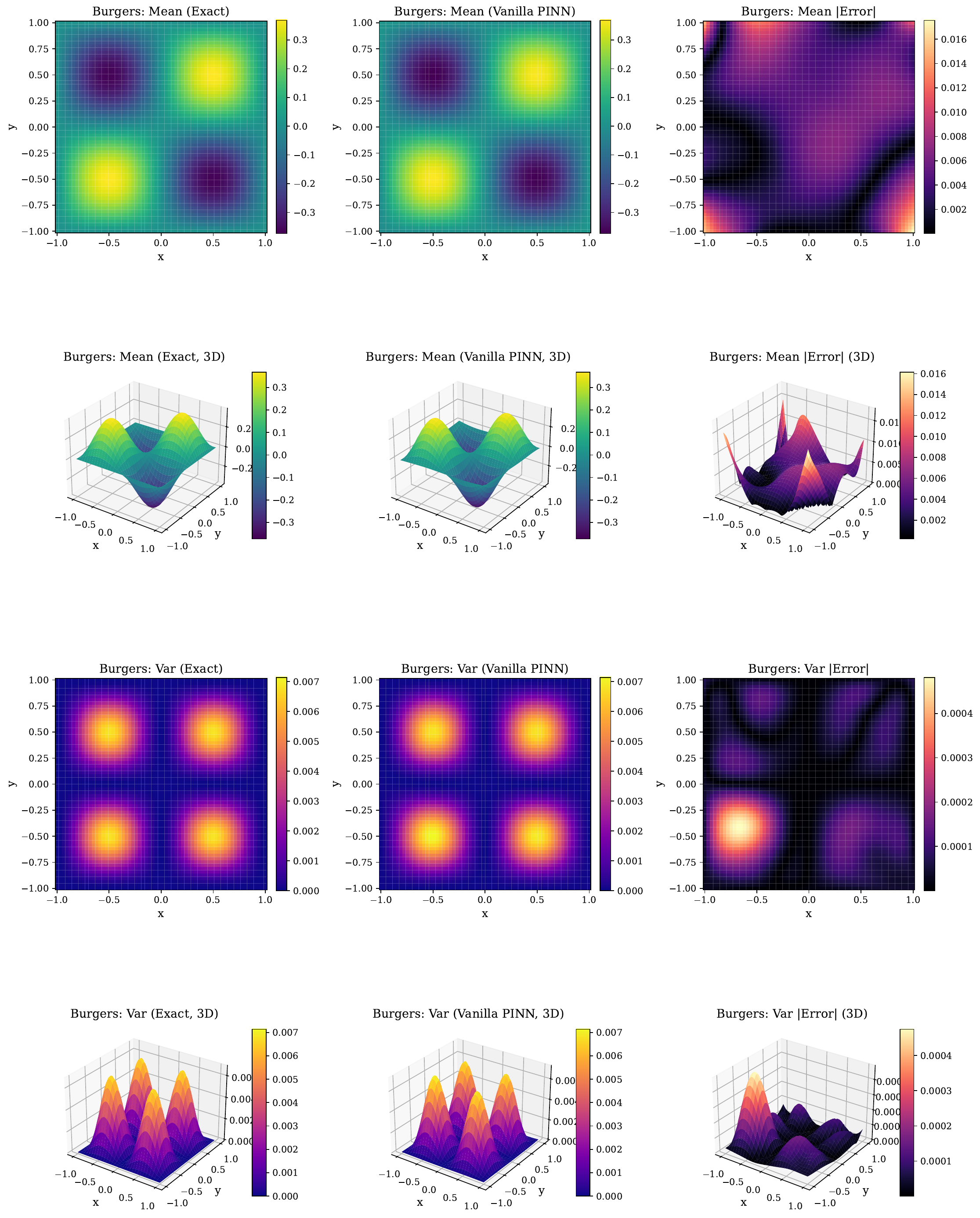}
		\caption{Burgers heatmaps (2D and 3D) at $t{=}1.0$: reference mean/variance,
		model-predicted mean/variance, and absolute errors.}
		\label{fig:heatmaps-burgers}
	\end{figure}

	\begin{figure}[htbp]
		\centering
		\includegraphics[width=\linewidth]{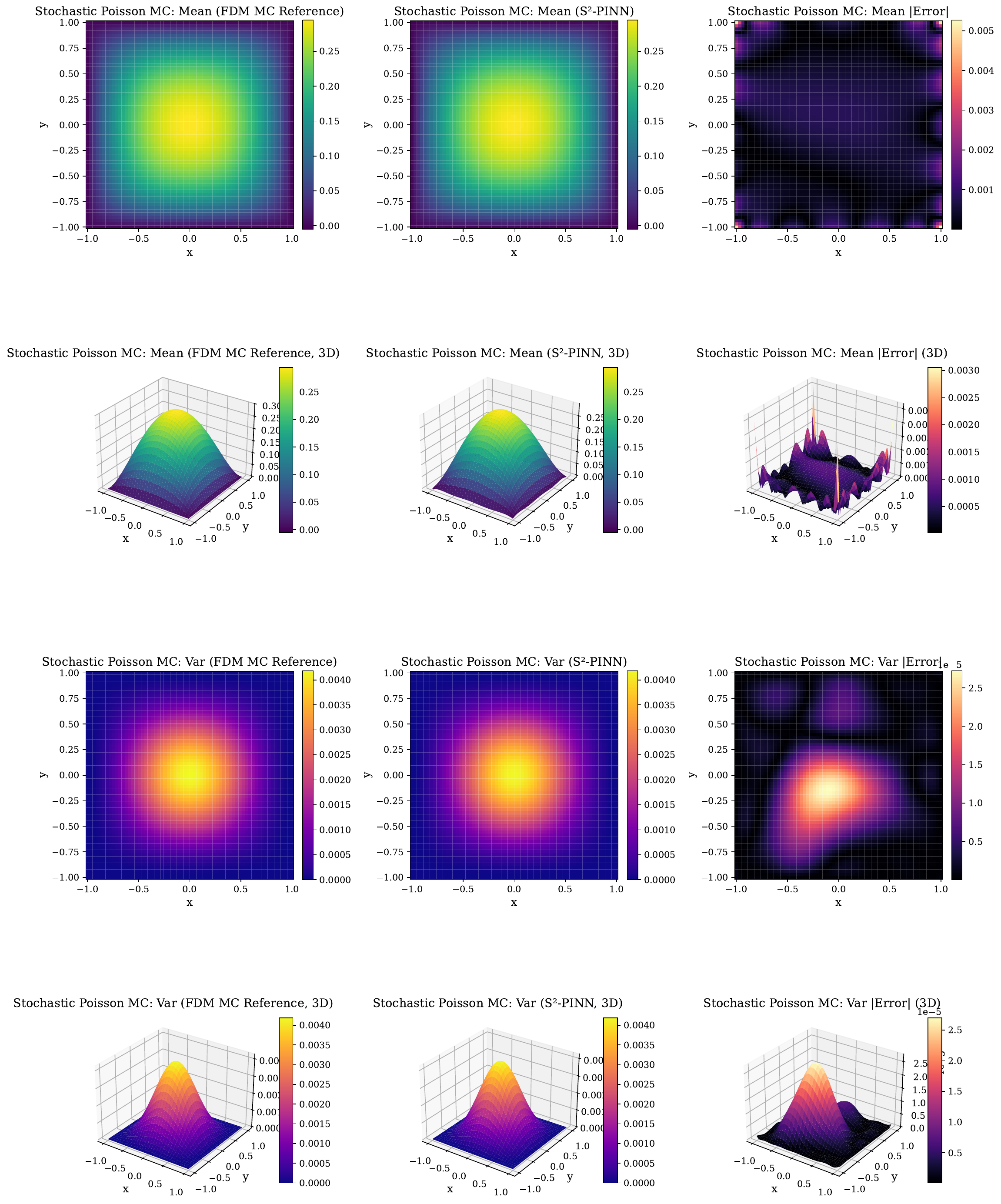}
		\caption{Non-manufactured Poisson heatmaps (2D and 3D) at $t{=}1.0$: FDM
		Monte Carlo reference mean/variance ($10{,}000$ samples), S$^{2}$-PINN prediction,
		and absolute errors.}
		\label{fig:heatmaps-poisson}
	\end{figure}

	\begin{figure}[htbp]
		\centering
		\includegraphics[width=\linewidth]{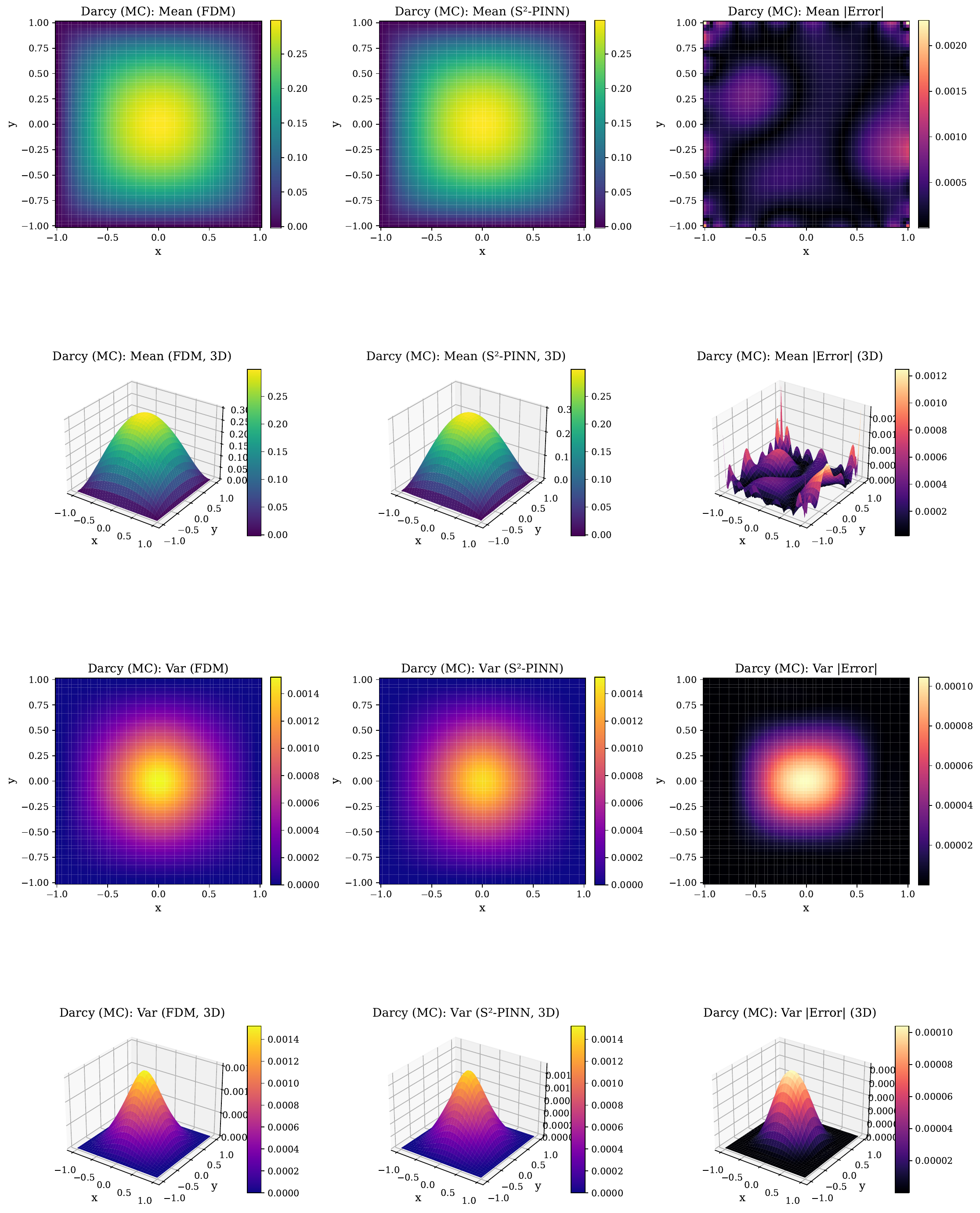}
		\caption{Non-manufactured Darcy heatmaps (2D and 3D) at $t{=}1.0$: FDM Monte
		Carlo reference mean/variance ($10{,}000$ samples), S$^{2}$-PINN prediction,
		and absolute errors.}
		\label{fig:heatmaps-darcy}
	\end{figure}


	\newpage
	\input{checklist.tex}
\end{document}

%% file: checklist.tex
\section*{NeurIPS Paper Checklist}

The checklist is designed to encourage best practices for responsible machine learning research, addressing issues of reproducibility, transparency, research ethics, and societal impact. Do not remove the checklist: {\bf The papers not including the checklist will be desk rejected.} The checklist should follow the references and follow the (optional) supplemental material.  The checklist does NOT count towards the page
limit. 

Please read the checklist guidelines carefully for information on how to answer these questions. For each question in the checklist:
\begin{itemize}
    \item You should answer \answerYes{}, \answerNo{}, or \answerNA{}.
    \item \answerNA{} means either that the question is Not Applicable for that particular paper or the relevant information is Not Available.
    \item Please provide a short (1--2 sentence) justification right after your answer (even for \answerNA). 
\end{itemize}

{\bf The checklist answers are an integral part of your paper submission.} They are visible to the reviewers, area chairs, senior area chairs, and ethics reviewers. You will also be asked to include it (after eventual revisions) with the final version of your paper, and its final version will be published with the paper.

The reviewers of your paper will be asked to use the checklist as one of the factors in their evaluation. While \answerYes{} is generally preferable to \answerNo{}, it is perfectly acceptable to answer \answerNo{} provided a proper justification is given (e.g., error bars are not reported because it would be too computationally expensive'' or ``we were unable to find the license for the dataset we used''). In general, answering \answerNo{} or \answerNA{} is not grounds for rejection. While the questions are phrased in a binary way, we acknowledge that the true answer is often more nuanced, so please just use your best judgment and write a justification to elaborate. All supporting evidence can appear either in the main paper or the supplemental material, provided in appendix. If you answer \answerYes{} to a question, in the justification please point to the section(s) where related material for the question can be found.

IMPORTANT, please:
\begin{itemize}
    \item {\bf Delete this instruction block, but keep the section heading ``NeurIPS Paper Checklist"},
    \item  {\bf Keep the checklist subsection headings, questions/answers and guidelines below.}
    \item {\bf Do not modify the questions and only use the provided macros for your answers}.
\end{itemize}


\begin{enumerate}

\item {\bf Claims}
    \item[] Question: Do the main claims made in the abstract and introduction accurately reflect the paper's contributions and scope?
    \item[] Answer: \answerYes{}
    \item[] Justification: The abstract and introduction accurately describe the proposed contributions about theoretical and experimental results.
    \item[] Guidelines:
    \begin{itemize}
        \item The answer \answerNA{} means that the abstract and introduction do not include the claims made in the paper.
        \item The abstract and/or introduction should clearly state the claims made, including the contributions made in the paper and important assumptions and limitations. A \answerNo{} or \answerNA{} answer to this question will not be perceived well by the reviewers. 
        \item The claims made should match theoretical and experimental results, and reflect how much the results can be expected to generalize to other settings. 
        \item It is fine to include aspirational goals as motivation as long as it is clear that these goals are not attained by the paper. 
    \end{itemize}

\item {\bf Limitations}
    \item[] Question: Does the paper discuss the limitations of the work performed by the authors?
    \item[] Answer: \answerYes{}
    \item[] Justification: We discuss limitations of our work in Section \ref{sec:conclusion}
    \item[] Guidelines:
    \begin{itemize}
        \item The answer \answerNA{} means that the paper has no limitation while the answer \answerNo{} means that the paper has limitations, but those are not discussed in the paper. 
        \item The authors are encouraged to create a separate ``Limitations'' section in their paper.
        \item The paper should point out any strong assumptions and how robust the results are to violations of these assumptions (e.g., independence assumptions, noiseless settings, model well-specification, asymptotic approximations only holding locally). The authors should reflect on how these assumptions might be violated in practice and what the implications would be.
        \item The authors should reflect on the scope of the claims made, e.g., if the approach was only tested on a few datasets or with a few runs. In general, empirical results often depend on implicit assumptions, which should be articulated.
        \item The authors should reflect on the factors that influence the performance of the approach. For example, a facial recognition algorithm may perform poorly when image resolution is low or images are taken in low lighting. Or a speech-to-text system might not be used reliably to provide closed captions for online lectures because it fails to handle technical jargon.
        \item The authors should discuss the computational efficiency of the proposed algorithms and how they scale with dataset size.
        \item If applicable, the authors should discuss possible limitations of their approach to address problems of privacy and fairness.
        \item While the authors might fear that complete honesty about limitations might be used by reviewers as grounds for rejection, a worse outcome might be that reviewers discover limitations that aren't acknowledged in the paper. The authors should use their best judgment and recognize that individual actions in favor of transparency play an important role in developing norms that preserve the integrity of the community. Reviewers will be specifically instructed to not penalize honesty concerning limitations.
    \end{itemize}

\item {\bf Theory assumptions and proofs}
    \item[] Question: For each theoretical result, does the paper provide the full set of assumptions and a complete (and correct) proof?
    \item[] Answer: \answerYes{}
    \item[] Justification: We provide our proof for stability and consistency in Appendix~\ref{sec:theory}
    \item[] Guidelines:
    \begin{itemize}
        \item The answer \answerNA{} means that the paper does not include theoretical results. 
        \item All the theorems, formulas, and proofs in the paper should be numbered and cross-referenced.
        \item All assumptions should be clearly stated or referenced in the statement of any theorems.
        \item The proofs can either appear in the main paper or the supplemental material, but if they appear in the supplemental material, the authors are encouraged to provide a short proof sketch to provide intuition. 
        \item Inversely, any informal proof provided in the core of the paper should be complemented by formal proofs provided in appendix or supplemental material.
        \item Theorems and Lemmas that the proof relies upon should be properly referenced. 
    \end{itemize}

    \item {\bf Experimental result reproducibility}
    \item[] Question: Does the paper fully disclose all the information needed to reproduce the main experimental results of the paper to the extent that it affects the main claims and/or conclusions of the paper (regardless of whether the code and data are provided or not)?
    \item[] Answer: \answerYes{}
    \item[] Justification: Yes, we include all experimental details necessary to reproduce. Pseudo code for the algorithm is given in Algorithm~\ref{alg:training} and default parameters are listed in Appendix~\ref{sec:model-details}.
    \item[] Guidelines:
    \begin{itemize}
        \item The answer \answerNA{} means that the paper does not include experiments.
        \item If the paper includes experiments, a \answerNo{} answer to this question will not be perceived well by the reviewers: Making the paper reproducible is important, regardless of whether the code and data are provided or not.
        \item If the contribution is a dataset and\slash or model, the authors should describe the steps taken to make their results reproducible or verifiable. 
        \item Depending on the contribution, reproducibility can be accomplished in various ways. For example, if the contribution is a novel architecture, describing the architecture fully might suffice, or if the contribution is a specific model and empirical evaluation, it may be necessary to either make it possible for others to replicate the model with the same dataset, or provide access to the model. In general. releasing code and data is often one good way to accomplish this, but reproducibility can also be provided via detailed instructions for how to replicate the results, access to a hosted model (e.g., in the case of a large language model), releasing of a model checkpoint, or other means that are appropriate to the research performed.
        \item While NeurIPS does not require releasing code, the conference does require all submissions to provide some reasonable avenue for reproducibility, which may depend on the nature of the contribution. For example
        \begin{enumerate}
            \item If the contribution is primarily a new algorithm, the paper should make it clear how to reproduce that algorithm.
            \item If the contribution is primarily a new model architecture, the paper should describe the architecture clearly and fully.
            \item If the contribution is a new model (e.g., a large language model), then there should either be a way to access this model for reproducing the results or a way to reproduce the model (e.g., with an open-source dataset or instructions for how to construct the dataset).
            \item We recognize that reproducibility may be tricky in some cases, in which case authors are welcome to describe the particular way they provide for reproducibility. In the case of closed-source models, it may be that access to the model is limited in some way (e.g., to registered users), but it should be possible for other researchers to have some path to reproducing or verifying the results.
        \end{enumerate}
    \end{itemize}

\item {\bf Open access to data and code}
    \item[] Question: Does the paper provide open access to the data and code, with sufficient instructions to faithfully reproduce the main experimental results, as described in supplemental material?
    \item[] Answer: \answerYes{}
    \item[] Justification: You can visit this \href{https://anonymous.4open.science/status/neurips2026-0424}{anonymous link} to get the code.
    \item[] Guidelines:
    \begin{itemize}
        \item The answer \answerNA{} means that paper does not include experiments requiring code.
        \item Please see the NeurIPS code and data submission guidelines (\url{https://neurips.cc/public/guides/CodeSubmissionPolicy}) for more details.
        \item While we encourage the release of code and data, we understand that this might not be possible, so \answerNo{} is an acceptable answer. Papers cannot be rejected simply for not including code, unless this is central to the contribution (e.g., for a new open-source benchmark).
        \item The instructions should contain the exact command and environment needed to run to reproduce the results. See the NeurIPS code and data submission guidelines (\url{https://neurips.cc/public/guides/CodeSubmissionPolicy}) for more details.
        \item The authors should provide instructions on data access and preparation, including how to access the raw data, preprocessed data, intermediate data, and generated data, etc.
        \item The authors should provide scripts to reproduce all experimental results for the new proposed method and baselines. If only a subset of experiments are reproducible, they should state which ones are omitted from the script and why.
        \item At submission time, to preserve anonymity, the authors should release anonymized versions (if applicable).
        \item Providing as much information as possible in supplemental material (appended to the paper) is recommended, but including URLs to data and code is permitted.
    \end{itemize}

\item {\bf Experimental setting/details}
    \item[] Question: Does the paper specify all the training and test details (e.g., data splits, hyperparameters, how they were chosen, type of optimizer) necessary to understand the results?
    \item[] Answer:  \answerYes{}
    \item[] Justification: Default parameters are listed in Appendix~\ref{sec:model-details}.
    \item[] Guidelines:
    \begin{itemize}
        \item The answer \answerNA{} means that the paper does not include experiments.
        \item The experimental setting should be presented in the core of the paper to a level of detail that is necessary to appreciate the results and make sense of them.
        \item The full details can be provided either with the code, in appendix, or as supplemental material.
    \end{itemize}

\item {\bf Experiment statistical significance}
    \item[] Question: Does the paper report error bars suitably and correctly defined or other appropriate information about the statistical significance of the experiments?
    \item[] Answer:  \answerYes{}
    \item[] Justification: All the results  are reported as an average of 5 random seeds (42,123,456,789.1024) along with mean and standard deviation.
    \item[] Guidelines:
    \begin{itemize}
        \item The answer \answerNA{} means that the paper does not include experiments.
        \item The authors should answer \answerYes{} if the results are accompanied by error bars, confidence intervals, or statistical significance tests, at least for the experiments that support the main claims of the paper.
        \item The factors of variability that the error bars are capturing should be clearly stated (for example, train/test split, initialization, random drawing of some parameter, or overall run with given experimental conditions).
        \item The method for calculating the error bars should be explained (closed form formula, call to a library function, bootstrap, etc.)
        \item The assumptions made should be given (e.g., Normally distributed errors).
        \item It should be clear whether the error bar is the standard deviation or the standard error of the mean.
        \item It is OK to report 1-sigma error bars, but one should state it. The authors should preferably report a 2-sigma error bar than state that they have a 96\% CI, if the hypothesis of Normality of errors is not verified.
        \item For asymmetric distributions, the authors should be careful not to show in tables or figures symmetric error bars that would yield results that are out of range (e.g., negative error rates).
        \item If error bars are reported in tables or plots, the authors should explain in the text how they were calculated and reference the corresponding figures or tables in the text.
    \end{itemize}

\item {\bf Experiments compute resources}
    \item[] Question: For each experiment, does the paper provide sufficient information on the computer resources (type of compute workers, memory, time of execution) needed to reproduce the experiments?
    \item[] Answer: \answerYes{}
    \item[] Justification: All the experiments are run on NVIDIA T4 and it is reported in~\Cref{sec:model-details}.
    \item[] Guidelines:
    \begin{itemize}
        \item The answer \answerNA{} means that the paper does not include experiments.
        \item The paper should indicate the type of compute workers CPU or GPU, internal cluster, or cloud provider, including relevant memory and storage.
        \item The paper should provide the amount of compute required for each of the individual experimental runs as well as estimate the total compute. 
        \item The paper should disclose whether the full research project required more compute than the experiments reported in the paper (e.g., preliminary or failed experiments that didn't make it into the paper). 
    \end{itemize}
    
\item {\bf Code of ethics}
    \item[] Question: Does the research conducted in the paper conform, in every respect, with the NeurIPS Code of Ethics \url{https://neurips.cc/public/EthicsGuidelines}?
    \item[] Answer: \answerYes{}
    \item[] Justification: Yes, the research conducted complies with the NeurIPS ethical guidelines in all respects.
    \item[] Guidelines:
    \begin{itemize}
        \item The answer \answerNA{} means that the authors have not reviewed the NeurIPS Code of Ethics.
        \item If the authors answer \answerNo, they should explain the special circumstances that require a deviation from the Code of Ethics.
        \item The authors should make sure to preserve anonymity (e.g., if there is a special consideration due to laws or regulations in their jurisdiction).
    \end{itemize}

\item {\bf Broader impacts}
    \item[] Question: Does the paper discuss both potential positive societal impacts and negative societal impacts of the work performed?
    \item[] Answer: \answerYes{},
    \item[] Justification: They are reported in Section~\ref{sec:conclusion}.
    \item[] Guidelines:
    \begin{itemize}
        \item The answer \answerNA{} means that there is no societal impact of the work performed.
        \item If the authors answer \answerNA{} or \answerNo, they should explain why their work has no societal impact or why the paper does not address societal impact.
        \item Examples of negative societal impacts include potential malicious or unintended uses (e.g., disinformation, generating fake profiles, surveillance), fairness considerations (e.g., deployment of technologies that could make decisions that unfairly impact specific groups), privacy considerations, and security considerations.
        \item The conference expects that many papers will be foundational research and not tied to particular applications, let alone deployments. However, if there is a direct path to any negative applications, the authors should point it out. For example, it is legitimate to point out that an improvement in the quality of generative models could be used to generate Deepfakes for disinformation. On the other hand, it is not needed to point out that a generic algorithm for optimizing neural networks could enable people to train models that generate Deepfakes faster.
        \item The authors should consider possible harms that could arise when the technology is being used as intended and functioning correctly, harms that could arise when the technology is being used as intended but gives incorrect results, and harms following from (intentional or unintentional) misuse of the technology.
        \item If there are negative societal impacts, the authors could also discuss possible mitigation strategies (e.g., gated release of models, providing defenses in addition to attacks, mechanisms for monitoring misuse, mechanisms to monitor how a system learns from feedback over time, improving the efficiency and accessibility of ML).
    \end{itemize}
    
\item {\bf Safeguards}
    \item[] Question: Does the paper describe safeguards that have been put in place for responsible release of data or models that have a high risk for misuse (e.g., pre-trained language models, image generators, or scraped datasets)?
    \item[] Answer: \answerNA{}
    \item[] Justification: The paper doesn’t include any pre-trained models or datasets.
    \item[] Guidelines:
    \begin{itemize}
        \item The answer \answerNA{} means that the paper poses no such risks.
        \item Released models that have a high risk for misuse or dual-use should be released with necessary safeguards to allow for controlled use of the model, for example by requiring that users adhere to usage guidelines or restrictions to access the model or implementing safety filters. 
        \item Datasets that have been scraped from the Internet could pose safety risks. The authors should describe how they avoided releasing unsafe images.
        \item We recognize that providing effective safeguards is challenging, and many papers do not require this, but we encourage authors to take this into account and make a best faith effort.
    \end{itemize}

\item {\bf Licenses for existing assets}
    \item[] Question: Are the creators or original owners of assets (e.g., code, data, models), used in the paper, properly credited and are the license and terms of use explicitly mentioned and properly respected?
    \item[] Answer: \answerYes{}
    \item[] Justification: Yes, all assets have been cited appropriately.
    \item[] Guidelines:
    \begin{itemize}
        \item The answer \answerNA{} means that the paper does not use existing assets.
        \item The authors should cite the original paper that produced the code package or dataset.
        \item The authors should state which version of the asset is used and, if possible, include a URL.
        \item The name of the license (e.g., CC-BY 4.0) should be included for each asset.
        \item For scraped data from a particular source (e.g., website), the copyright and terms of service of that source should be provided.
        \item If assets are released, the license, copyright information, and terms of use in the package should be provided. For popular datasets, \url{paperswithcode.com/datasets} has curated licenses for some datasets. Their licensing guide can help determine the license of a dataset.
        \item For existing datasets that are re-packaged, both the original license and the license of the derived asset (if it has changed) should be provided.
        \item If this information is not available online, the authors are encouraged to reach out to the asset's creators.
    \end{itemize}

\item {\bf New assets}
    \item[] Question: Are new assets introduced in the paper well documented and is the documentation provided alongside the assets?
    \item[] Answer: \answerNA{}
    \item[] Justification: This research does not release any new assets.
    \item[] Guidelines:
    \begin{itemize}
        \item The answer \answerNA{} means that the paper does not release new assets.
        \item Researchers should communicate the details of the dataset\slash code\slash model as part of their submissions via structured templates. This includes details about training, license, limitations, etc. 
        \item The paper should discuss whether and how consent was obtained from people whose asset is used.
        \item At submission time, remember to anonymize your assets (if applicable). You can either create an anonymized URL or include an anonymized zip file.
    \end{itemize}

\item {\bf Crowdsourcing and research with human subjects}
    \item[] Question: For crowdsourcing experiments and research with human subjects, does the paper include the full text of instructions given to participants and screenshots, if applicable, as well as details about compensation (if any)? 
    \item[] Answer: \answerNA{}
    \item[] Justification: The paper does not involve crowdsourcing nor research with human subjects.
    \item[] Guidelines:
    \begin{itemize}
        \item The answer \answerNA{} means that the paper does not involve crowdsourcing nor research with human subjects.
        \item Including this information in the supplemental material is fine, but if the main contribution of the paper involves human subjects, then as much detail as possible should be included in the main paper. 
        \item According to the NeurIPS Code of Ethics, workers involved in data collection, curation, or other labor should be paid at least the minimum wage in the country of the data collector. 
    \end{itemize}

\item {\bf Institutional review board (IRB) approvals or equivalent for research with human subjects}
    \item[] Question: Does the paper describe potential risks incurred by study participants, whether such risks were disclosed to the subjects, and whether Institutional Review Board (IRB) approvals (or an equivalent approval/review based on the requirements of your country or institution) were obtained?
    \item[] Answer: \answerNA{}
    \item[] Justification: This research does not involve any human subjects.
    \item[] Guidelines:
    \begin{itemize}
        \item The answer \answerNA{} means that the paper does not involve crowdsourcing nor research with human subjects.
        \item Depending on the country in which research is conducted, IRB approval (or equivalent) may be required for any human subjects research. If you obtained IRB approval, you should clearly state this in the paper. 
        \item We recognize that the procedures for this may vary significantly between institutions and locations, and we expect authors to adhere to the NeurIPS Code of Ethics and the guidelines for their institution. 
        \item For initial submissions, do not include any information that would break anonymity (if applicable), such as the institution conducting the review.
    \end{itemize}

\item {\bf Declaration of LLM usage}
    \item[] Question: Does the paper describe the usage of LLMs if it is an important, original, or non-standard component of the core methods in this research? Note that if the LLM is used only for writing, editing, or formatting purposes and does \emph{not} impact the core methodology, scientific rigor, or originality of the research, declaration is not required.
    \item[] Answer: \answerNo{}
    \item[] Justification: This research presented does not involve LLMs as any important, original, or non-standard components.
    \item[] Guidelines:
    \begin{itemize}
        \item The answer \answerNA{} means that the core method development in this research does not involve LLMs as any important, original, or non-standard components.
        \item Please refer to our LLM policy in the NeurIPS handbook for what should or should not be described.
    \end{itemize}

\end{enumerate}

%% file: origin.bib
@article{benner2016block,
  title={Block-diagonal preconditioning for optimal control problems constrained by {PDE}s with uncertain inputs},
  author={Benner, P. and Onwunta, A. and Stoll, M.},
  journal={SIAM Journal on Matrix Analysis and Applications},
  volume={37},
  number={2},
  pages={491--518},
  year={2016},
  publisher={SIAM}
}

@article{olivier2020uqpy,
  title={UQpy: A general purpose Python package and development environment for uncertainty quantification},
  author={Olivier, Audrey and Giovanis, Dimitris G and Aakash, BS and Chauhan, Mohit and Vandanapu, Lohit and Shields, Michael D},
  journal={Journal of Computational Science},
  volume={47},
  pages={101204},
  year={2020},
  publisher={Elsevier}
}

@article{benner2015low,
  title={Low-rank solution of unsteady diffusion equations with stochastic coefficients},
  author={Benner, Peter and Onwunta, Akwum and Stoll, Martin},
  journal={SIAM/ASA Journal on Uncertainty Quantification},
  volume={3},
  number={1},
  pages={622--649},
  year={2015},
  publisher={SIAM}
}

@article{BENNER201626,
title = {Low-rank solvers for unsteady Stokes–Brinkman optimal control problem with random data},
journal = {Computer Methods in Applied Mechanics and Engineering},
volume = {304},
pages = {26-54},
year = {2016},
issn = {0045-7825},
doi = {https://doi.org/10.1016/j.cma.2016.02.004},
url = {https://www.sciencedirect.com/science/article/pii/S0045782516300263},
author = {Peter Benner and Sergey Dolgov and Akwum Onwunta and Martin Stoll}
}

@article{Antil2023,
  title={TTRISK: Tensor Train Decomposition Algorithm for Risk-Averse Optimization},
  author={Antil, Harbir and Dolgov, Sergey and Onwunta, Akwum},
  journal={NumericalLinear Algebra with Applications},
  volume={3},
  number={3},
  pages={e2481},
  year={2023},
  url={https://doi.org/10.1002/nla.2481.12}
 }

@article{Antil2025,
  title={Smoothed Moreau-Yosida Tensor-Train Approximation of State-Constrained Optimization Problems Under Uncertainty},
  author={Antil, Harbir and Dolgov, Sergey and Onwunta, Akwum},
  journal={NumericalLinear Algebra with Applications},
  pages={32:e70028},
  year={2025},
  url={https://doi.org/10.1002/nla.70028}
 }

@article{Antil2025b,
  title={Tensor train solution to uncertain optimization problems with shared sparsity penalty},
  author={Antil, Harbir and Dolgov, Sergey and Onwunta, Akwum},
  journal={Optimization and Engineering},
   year={2025},
  url={https://doi.org/10.1007/s11081-025-10036-7}
 }


%% file: pinn.bib
@inproceedings{DBLP:conf/nips/Lakshminarayanan17,
  author       = {Balaji Lakshminarayanan and
                  Alexander Pritzel and
                  Charles Blundell},
  editor       = {Isabelle Guyon and
                  Ulrike von Luxburg and
                  Samy Bengio and
                  Hanna M. Wallach and
                  Rob Fergus and
                  S. V. N. Vishwanathan and
                  Roman Garnett},
  title        = {Simple and Scalable Predictive Uncertainty Estimation using Deep Ensembles},
  booktitle    = {Advances in Neural Information Processing Systems 30: Annual Conference
                  on Neural Information Processing Systems 2017, December 4-9, 2017,
                  Long Beach, CA, {USA}},
  pages        = {6402--6413},
  year         = {2017},
  url          = {https://proceedings.neurips.cc/paper/2017/hash/9ef2ed4b7fd2c810847ffa5fa85bce38-Abstract.html},
  bibsource    = {dblp computer science bibliography, https://dblp.org}
}

@incollection{goswami2023physics,
  title={Physics-informed deep neural operator networks},
  author={Goswami, Somdatta and Bora, Aniruddha and Yu, Yue and Karniadakis, George Em},
  booktitle={Machine learning in modeling and simulation: methods and applications},
  pages={219--254},
  year={2023},
  publisher={Springer}
}

@inproceedings{cho2024separable,
 author = {Cho, Junwoo and Nam, Seungtae and Yang, Hyunmo and Yun, Seok-Bae and Hong, Youngjoon and Park, Eunbyung},
 booktitle = {Advances in Neural Information Processing Systems},
 pages = {23761--23788},
 title = {Separable Physics-Informed Neural Networks},
 volume = {36},
 year = {2023}
}

@article{eskin2024separable,
  title={Separable physics-informed neural networks for the solution of elasticity problems},
  author={Es'kin, Vasiliy A and Davydov, Danil V and Gur'eva, Julia V and Malkhanov, Alexey O and Smorkalov, Mikhail E},
  journal={arXiv preprint arXiv:2401.13486},
  year={2024}
}

@article{zanotta2025quantum,
  title={Quantum Orthogonal Separable Physics-Informed Neural Networks},
  author={Zanotta, Pietro and Budinski, Ljubomir and Aytekin, Caglar and Lahtinen, Valtteri},
  journal={arXiv preprint arXiv:2511.12613},
  year={2025}
}

@article{raissi2019physics,
  title={Physics-informed neural networks: A deep learning framework for solving forward and inverse problems involving nonlinear partial differential equations},
  author={Raissi, Maziar and Perdikaris, Paris and Karniadakis, George E},
  journal={Journal of Computational Physics},
  volume={378},
  pages={686--707},
  year={2019},
  publisher={Elsevier}
}

@article{karniadakis2021physics,
  title={Physics-informed machine learning},
  author={Karniadakis, George E and Kevrekidis, Ioannis G and Lu, Lu and Perdikaris, Paris and Wang, Sifan and Yang, Liu},
  journal={Nature Reviews Physics},
  volume={3},
  number={6},
  pages={422--440},
  year={2021},
  publisher={Nature Publishing Group}
}

@article{zhang2019quantifying,
  title={Quantifying total uncertainty in physics-informed neural networks for solving forward and inverse stochastic problems},
  author={Zhang, Dongkun and Lu, Lu and Guo, Ling and Karniadakis, George E},
  journal={Journal of Computational Physics},
  volume={397},
  pages={108850},
  year={2019},
  publisher={Elsevier}
}

@article{yang2021bpinns,
  title={{B-PINNs}: {B}ayesian physics-informed neural networks for forward and inverse {PDE} problems with noisy data},
  author={Yang, Liu and Meng, Xuhui and Karniadakis, George E},
  journal={Journal of Computational Physics},
  volume={425},
  pages={109913},
  year={2021},
  publisher={Elsevier}
}

@article{psaros2023uncertainty,
  title={Uncertainty quantification in scientific machine learning: Methods, metrics, and comparisons},
  author={Psaros, Apostolos F and Meng, Xuhui and Zou, Zongren and Guo, Ling and Karniadakis, George E},
  journal={Journal of Computational Physics},
  volume={477},
  pages={111902},
  year={2023},
  publisher={Elsevier}
}

@article{novak2024physics,
  title={Physics-informed polynomial chaos expansions},
  author={Nov{\'a}k, Luk{\'a}{\v{s}} and Sharma, Himanshu and Shields, Michael D},
  journal={Journal of Computational Physics},
  volume={506},
  pages={112926},
  year={2024},
  publisher={Elsevier}
}

@article{shi2026expanding,
  title={Expanding the Chaos: Neural Operator for Stochastic (Partial) Differential Equations},
  author={Shi, Dai and Lin, Lequan and Han, Andi and Thompson, Luke and Hern{\'a}ndez-Lobato, Jos{\'e} Miguel and Wang, Zhiyong and Gao, Junbin},
  journal={arXiv preprint arXiv:2601.01021},
  year={2026}
}

@article{lu2021learning,
  title={Learning nonlinear operators via {DeepONet} based on the universal approximation theorem of operators},
  author={Lu, Lu and Jin, Pengzhan and Pang, Guofei and Zhang, Zhongqiang and Karniadakis, George E},
  journal={Nature Machine Intelligence},
  volume={3},
  number={3},
  pages={218--229},
  year={2021},
  publisher={Nature Publishing Group}
}

@inproceedings{li2021fourier,
  title={Fourier Neural Operator for Parametric Partial Differential Equations},
  author={Li, Zongyi and Kovachki, Nikola Borislavov and Azizzadenesheli, Kamyar and Bhattacharya, Kaushik and Stuart, Andrew and Anandkumar, Anima and others},
  year={2020},
  booktitle={International Conference on Learning Representations}
}

@article{kolda2009tensor,
  title={Tensor Decompositions and Applications},
  author={Kolda, Tamara G and Bader, Brett W},
  journal={SIAM Review},
  volume={51},
  number={3},
  pages={455--500},
  year={2009},
  publisher={SIAM}
}

@article{oseledets2011tensor,
  title={Tensor-train decomposition},
  author={Oseledets, Ivan V},
  journal={SIAM Journal on Scientific Computing},
  volume={33},
  number={5},
  pages={2295--2317},
  year={2011},
  publisher={SIAM}
}

@article{bachmayr2023low,
  title={Low-rank tensor methods for partial differential equations},
  author={Bachmayr, Markus},
  journal={Acta Numerica},
  volume={32},
  pages={1--121},
  year={2023},
  publisher={Cambridge University Press}
}

@article{bachmayr2016tensor,
  title={Tensor networks and hierarchical tensors for the solution of high-dimensional partial differential equations},
  author={Bachmayr, Markus and Schneider, Reinhold and Uschmajew, Andr{\'e}},
  journal={Foundations of Computational Mathematics},
  volume={16},
  pages={1423--1472},
  year={2016},
  publisher={Springer}
}

@inproceedings{vemuri2024functional,
  title={Functional tensor decompositions for physics-informed neural networks},
  author={Vemuri, Sai Karthikeya and B{\"u}chner, Tim and Niebling, Julia and Denzler, Joachim},
  booktitle={International Conference on Pattern Recognition},
  pages={32--46},
  year={2024},
  organization={Springer}
}

@inproceedings{guo2025interpolating,
  title={Interpolating Neural Network-Tensor Decomposition (INN-TD): a scalable and interpretable approach for large-scale physics-based problems},
  author={Guo, Jiachen and Xie, Xiaoyu and Park, Chanwook and Zhang, Hantao and Politis, Matthew J and Domel, Gino and Liu, Wing Kam},
  booktitle={International Conference on Machine Learning},
  pages={21138--21162},
  year={2025},
  organization={PMLR}
}

@article{liu2022ttpinn,
  title={{TT-PINN}: A tensor-compressed neural {PDE} solver for edge computing},
  author={Liu, Ziyue and Yu, Xinling and Zhang, Zheng},
  journal={arXiv preprint arXiv:2207.01751},
  year={2022}
}

@article{xiu2002wiener,
  title={The {W}iener--{A}skey polynomial chaos for stochastic differential equations},
  author={Xiu, Dongbin and Karniadakis, George E},
  journal={SIAM Journal on Scientific Computing},
  volume={24},
  number={2},
  pages={619--644},
  year={2002},
  publisher={SIAM}
}

@article{babuvska2007stochastic,
  title={A stochastic collocation method for elliptic partial differential equations with random input data},
  author={Babu{\v{s}}ka, Ivo and Nobile, Fabio and Tempone, Ra{\'u}l},
  journal={SIAM Journal on Numerical Analysis},
  volume={45},
  number={3},
  pages={1005--1034},
  year={2007},
  publisher={SIAM}
}

@article{nouy2010proper,
  title={Proper generalized decompositions and separated representations for the numerical solution of high dimensional stochastic problems},
  author={Nouy, Anthony},
  journal={Archives of Computational Methods in Engineering},
  volume={17},
  number={4},
  pages={403--434},
  year={2010},
  publisher={Springer}
}

@article{chinesta2010recent,
  title={Recent advances and new challenges in the use of the proper generalized decomposition for solving multidimensional models},
  author={Chinesta, Francisco and Ammar, Amine and Cueto, Elias},
  journal={Archives of Computational Methods in Engineering},
  volume={17},
  number={4},
  pages={327--350},
  year={2010},
  publisher={Springer}
}

@article{doostan2013nonintrusive,
  title={Non-intrusive low-rank separated approximation of high-dimensional stochastic models},
  author={Doostan, Alireza and Validi, AbdolHamid and Iaccarino, Gianluca},
  journal={Computer Methods in Applied Mechanics and Engineering},
  volume={263},
  pages={42--55},
  year={2013},
  publisher={Elsevier}
}

@article{bigoni2016spectral,
  title={Spectral tensor-train decomposition},
  author={Bigoni, Daniele and Engsig-Karup, Allan P and Marzouk, Youssef M},
  journal={SIAM Journal on Scientific Computing},
  volume={38},
  number={4},
  pages={A2405--A2439},
  year={2016},
  publisher={SIAM}
}

@book{ghanem1991stochastic,
  title={Stochastic Finite Elements: A Spectral Approach},
  author={Ghanem, Roger G and Spanos, Pol D},
  year={1991},
  publisher={Springer}
}

@article{si2026complex,
  title={Complex physics-informed neural network},
  author={Si, Chenhao and Yan, Ming and Li, Xin and Xia, Zhihong},
  journal={Journal of Computational Physics},
  volume={553},
  pages={114713},
  year={2026},
  publisher={Elsevier}
}

@article{si2025initialization,
  title={Initialization-enhanced physics-informed neural network with domain decomposition ({IDPINN})},
  author={Si, Chenhao and Yan, Ming},
  journal={Journal of Computational Physics},
  volume={530},
  pages={113914},
  year={2025},
  publisher={Elsevier}
}

@article{bahmani2025neural,
title = {Neural chaos: A spectral stochastic neural operator},
journal = {Journal of Computational Physics},
volume = {539},
pages = {114233},
year = {2025},
author = {Bahador Bahmani and Ioannis G. Kevrekidis and Michael D. Shields},
}


%% file: ref.bib
@inproceedings{fan2026physics,
  title={Physics-Informed Inference Time Scaling for Solving High-Dimensional Partial Differential Equations},
  author={Fan, Zexi and Sun, Yan and Yang, Shihao and Lu, Yiping},
  booktitle={International Conference on Learning Representations},
  volume={2026},
  pages={127794--127844},
  year={2026}
}

@misc{yang2026derivativeinformedtrainingneuraloperators,
      title={Derivative-Informed Training of Neural Operators On-the-Fly via Sketched Tangent Consistency}, 
      author={Xinhan Yang and Lu Lu and Shancong Mou},
      year={2026},
      eprint={2609.32797},
      archivePrefix={arXiv},
      primaryClass={cs.LG},
      url={https://arxiv.org/abs/2609.32797}, 
}
